\def\bfd{{\mathbf{d}}}
\def\bfh{{\mathbf{h}}}
\def\bfn{{\mathbf{n}}}
\def\bfr{{\mathbf{r}}}
\def\bfu{{\mathbf{u}}}
\def\bfv{{\mathbf{v}}}
\def\bfx{{\mathbf{x}}}
\def\bfy{{\mathbf{y}}}
\def\bfz{{\mathbf{z}}}
\def\bfA{{\mathbf{A}}}
\def\bfD{{\mathbf{D}}}
\def\bfF{{\mathbf{F}}}
\def\bfH{{\mathbf{H}}}
\def\bfI{{\mathbf{I}}}
\def\bfJ{{\mathbf{J}}}
\def\bfS{{\mathbf{S}}}
\def\bfW{{\mathbf{W}}}
\def\bsx{{\boldsymbol{x}}}
\newcommand{\diag}[1]{\mathrm{diag}{\left\{#1\right\}}}
\newcommand{\Id}[1]{\textbf{I}_{#1}}
\newcounter{algo}
\renewcommand{\thealgo}{\arabic{algo}}
\def\bstheta{{\boldsymbol{\theta}}}
\def\bsrho{{\boldsymbol{\rho}}}
\def\bsnu{{\boldsymbol{\nu}}}
\def\bsLambda{{\boldsymbol{\Lambda}}}
\def\bslambda{{\boldsymbol{\lambda}}}
\def\bsSigma{{\boldsymbol{\Sigma}}}
\def\bsPsi{{\boldsymbol{\Psi}}}
\newcommand{\argmin}{\arg\!\min}
\def\prox{\textrm{prox}}
\newcommand{\EigSq}{\underline{\bsLambda}}
\newcommand{\tcpp}[1]{\tcp{\emph{\small{#1}}}}
\newlength{\tempdima}
\newcommand{\rowname}[1]
{\rotatebox{90}{\makebox[\tempdima][c]{\textbf{#1}}}}
\newcommand{\zhaon}[1]{\textcolor[rgb]{0.00,0.00,0.00}{#1}}
\newcommand{\AB}[1]{\textcolor[rgb]{0.00,0.00,0.00}{#1}}
\newcommand{\zhaov}[1]{\textcolor[rgb]{0.00,0.00,0.00}{#1}}
\newcommand{\nd}[1]{\textcolor[rgb]{0.00,0.00,0.00}{#1}}
\newcommand{\Qi}[1]{\textcolor[rgb]{0.00,0.00,0.00}{#1}}
\newcommand{\revAQ}[1]{\textcolor[rgb]{0.00,0.00,0.00}{#1}}
      \theoremstyle{plain}
      \newtheorem*{assumption1f}{Assumption 1}
      \newtheorem*{assumption2f}{Assumption 2}
      \newtheorem{lemma}{Lemma}
      \newtheorem{theorem}{Theorem}
\begin{document}
\title{Fast Single Image Super-resolution using a New Analytical Solution for $\ell_2-\ell_2$ Problems}	
\author{Ningning Zhao, Qi Wei, Adrian Basarab, Nicolas Dobigeon, Denis Kouam{\'e} and Jean-Yves Tourneret
\thanks{Part of this work has been supported by the Chinese Scholarship Council and the thematic trimester on image processing of the CIMI Labex, Toulouse, France, under grant ANR-11-LABX-0040-CIMI within the program ANR-11-IDEX-0002-02.}
\thanks{Ningning Zhao, Nicolas Dobigeon and Jean-Yves Tourneret are with University of Toulouse, IRIT/INP-ENSEEIHT, 31071 Toulouse Cedex 7, France (e-mail: \{nzhao, nicolas.dobigeon, jean-yves.tourneret\}@enseeiht.fr).}
\thanks{Qi Wei is with Department of Engineering, University of Cambridge, CB21PZ, U.K. (e-mail: \{qw245\}@cam.ac.uk).}
\thanks{
Adrian Basarab and Denis Kouam{\'e} are with University of Toulouse, IRIT, CNRS UMR 5505, 118 Route de Narbonne, F-31062, Toulouse Cedex 9, France (e-mail: \{adrian.basarab, denis.kouame\}@irit.fr).} }

\maketitle
\begin{abstract}
This paper addresses the problem of single image super-resolution (SR), which consists of recovering a high resolution image from its blurred, decimated and noisy version.
The existing algorithms for single image SR \AB{use different strategies to handle the decimation and blurring operators. In addition to the traditional first-order gradient methods, recent techniques investigate splitting-based methods dividing the SR problem into up-sampling and deconvolution steps that can be easily solved.}
Instead of following this splitting strategy, we propose to deal with the decimation and blurring operators simultaneously by taking advantage of their particular properties in the frequency domain, leading to a new fast SR approach.
\AB{Specifically, an analytical solution can be obtained and implemented efficiently for the Gaussian prior or any other regularization that can be formulated into an $\ell_2$-regularized quadratic model, \nd{i.e., an $\ell_2$-$\ell_2$ optimization problem}. Furthermore, the flexibility of the proposed SR scheme is shown through the use of various priors/regularizations, ranging from generic image priors to learning-based approaches.} In the case of non-Gaussian priors, we show \AB{how} the analytical solution derived from the Gaussian case can be embedded into
\AB{traditional splitting frameworks}, \AB{allowing the computation cost of existing algorithms to be decreased significantly}. Simulation results conducted on several images with different priors illustrate the effectiveness of our fast SR approach compared with the existing techniques.
\end{abstract}
\begin{keywords}
Single image super-resolution, deconvolution, decimation, block circulant matrix, variable splitting based algorithms.
\end{keywords}
\section{Introduction}
\IEEEPARstart{S}{ingle} image super-resolution (SR), also known as image scaling up or image enhancement, aims at estimating a high-resolution (HR) image from a low-resolution (LR) observed image \cite{Park2003_SR_Overview}. This resolution enhancement problem is still an ongoing research problem with applications in various fields, such as remote sensing \cite{Martin_Hyper2015}, video surveillance \cite{yang2010image}, hyperspectral \cite{AkgunTIP2005}, microwave\cite{Yanovsky2015} or medical imaging \cite{Morin2012_SR}.

The methods dedicated to single image SR can be classified into three categories \cite{Yang2010TIP,SunJ_CVPR_2008,YWTai_CVPR_2010}.
The first category includes the interpolation based algorithms such as nearest neighbor interpolation, bicubic interpolation \cite{Thevenaz_2000} or adaptive interpolation techniques \cite{Zhang2008TIP,Mallat2010}.
Despite their simplicity and easy implementation, it is well-known that these algorithms generally over-smooth the high frequency details. The second type of methods consider learning-based (or example-based) algorithms that learn the relations between LR and HR image patches from a given database \cite{Freeman2000_SR,Glasner2009_SR, Huang_CVPR_2015, Zeyde2012,Yang2010TIP}. Note that the effectiveness of the learning-based algorithms highly depends on the training image database and these algorithms have generally a high computational complexity.
Reconstruction-based approaches that are considered in this paper belong to the third category of SR approaches \cite{SunJ_CVPR_2008,SunJ_TIP_2011,YWTai_CVPR_2010,MNg2010SR_TV}. These approaches formulate the image SR as an reconstruction problem, either by incorporating priors in a Bayesian framework or by introducing regularizations into the ill-posed inverse problem.

Existing reconstruction-based techniques used to solve the single image SR include the first order gradient-based methods \cite{SunJ_CVPR_2008,SunJ_TIP_2011,YWTai_CVPR_2010,Yang2010TIP}, the iterative shrinkage thresholding-based algorithms \cite{BeckTeboulle2009} (also called forward-backward algorithms), proximal gradient algorithms and other variable splitting algorithms that rely on the augmented Lagrangian (AL) scheme. The AL based algorithms include the alternating direction method of multipliers (ADMM) \cite{MNg2010SR_TV,Martin_Hyper2015,Morin2012_SR,Marquina2008}, split Bregman (SB) methods \cite{Yanovsky2015} (known to be equivalent to ADMM in certain conditions \cite{YinWotao2008}) and their variants.

Particularly, Ng \textit{et. al.} \cite{MNg2010SR_TV} proposed an ADMM-based algorithm to solve a TV-regularized single image SR problem, where the decimation and blurring operators are split and solved iteratively. Due to this splitting, the cumbersome SR problem can be decomposed into an up-sampling problem and a deconvolution problem, that can be both solved efficiently. Yanovsky \textit{et. al.} \cite{Yanovsky2015} proposed to solve the same problem with an SB algorithm. However, the \nd{decimation operator} was handled through a gradient descent method integrated in the SB framework. Sun \textit{et. al.} \cite{SunJ_CVPR_2008,SunJ_TIP_2011} proposed a gradient profile prior and formulated the single image SR problem as an $\ell_2$-regularized optimization problem, further solved with the gradient descent method.  Yang \textit{et. al.} \cite{Yang2010TIP} proposed a learning-based algorithm for the single image SR by seeking a sparse representation using the patches of LR and HR images, followed by back projecting through a gradient descent method.
Despite the efficiency of these methods, it is still appealing to deal with the single image SR problem in a non-iterative or more efficient way.

\revAQ{This paper aims at reducing the computational cost of these methods by proposing a new approach handling the decimation and blurring operators simultaneously by exploring their intrinsic properties in the frequency domain.} \nd{It is interesting to note that similar properties were explored in \cite{Robinson2010,Sroubek2011} for multi-frame SR.} \revAQ{However, the implementation of the matrix inversions proposed in \cite{Robinson2010,Sroubek2011} are less efficient than those proposed in this work,} \nd{as it will be demonstrated in the complexity analysis conducted in Section \ref{sec:TikAS}. More precisely, this paper  derives a closed-form expression of the solution associated with the $\ell_2$-penalized least-squares SR problem, when the observed LR image is assumed to be a noisy, subsampled and blurred version of the HR image with a spatially invariant blur. This model, referred to as $\ell_2-\ell_2$ in what follows, underlies the restoration of an image contaminated by additive Gaussian noise and has been used intensively for the single image SR problem, see, e.g., \cite{Yang2010TIP,SunJ_CVPR_2008,Ebrahimi2008} and the references mentioned above. The proposed solution is shown to be easily embeddable into an AL framework to handle non-Gaussian priors (i.e., non-$\ell_2$ regularizations), which significantly lightens the computational burdens of several existing SR algorithms.}

The remainder of the paper is organized as follows. Section II formulates the single image SR problem as an optimization problem. In Section III, we study the properties of the down-sampling and blurring operators in the frequency domain and introduce a fast SR scheme based on an analytical solution of the $\ell_2-\ell_2$ model, that can be formulated in the image or gradient domains. Section IV generalizes the proposed fast SR scheme to more complex regularizations in image or transformed domains. Various experiments presented in Section V demonstrate the efficiency of the proposed fast single image SR scheme. Conclusions and perspectives are finally reported in Section VI.

\section{Image Super-resolution Formulation}

\subsection{Model of Image Formation}
In the single image SR problem, the observed LR image is modeled as a noisy version of the blurred and decimated HR image to be estimated as follows,
\begin{equation}
\bfy = \bfS\bfH\bfx + \bfn
\label{image_form}
\end{equation}
where the vector $\bfy \in \mathbb{R}^{N_l\times 1}$ $(N_l=m_l\times n_l)$ denotes the LR observed image and
$\bfx \in \mathbb{R}^{N_h\times 1}$ $(N_h= m_h\times n_h)$ is the vectorized HR image to be estimated, with $N_h> N_l$.
The vectors $\bfy$ and $\bfx$ are obtained by stacking the corresponding images (LR image $\in \mathbb{R}^{m_l\times n_l}$ and HR image $\in \mathbb{R}^{m_h\times n_h}$) into column vectors in a lexicographic order. \zhaon{Note that} the vector $\bfn \in \mathbb{R}^{N_l \times 1}$ \zhaon{is} an independent identically distributed (\textit{i.i.d.}) additive white Gaussian noise (AWGN) \zhaon{and that the matrices} $\bfS \in \mathbb{R}^{N_l\times N_h}$ and $\bfH \in \mathbb{R}^{N_h\times N_h}$ represent the decimation and the blurring/convolution operations respectively. More specifically, $\bfH$ is a block circulant matrix with circulant blocks, which corresponds to cyclic convolution boundaries, and left multiplying by $\bfS$ performs down-sampling with an integer factor $d$ ($d = d_r \times d_c$), i.e., $N_h = N_l \times d$.
The decimation factors  $d_r$ and $d_c$ represent the numbers of discarded rows and columns from the input images satisfying the following relationships $m_h=m_l \times d_r$ and $n_h = n_l \times d_c$.
\AB{Note that the image formation model \eqref{image_form} has been widely considered in single image SR problems, see, e.g., \cite{Yang2010TIP,SunJ_CVPR_2008,SunJ_TIP_2011,MNg2010SR_TV,Zhang2012SR}}.

We introduce two additional basic assumptions about the blurring and decimation operators. These assumptions have been widely used for image deconvolution or image SR problems (see, e.g., \cite{MElad1997,Farsiu2004_SR,Zeyde2012,Yang2010TIP}) \AB{and are necessary for the proposed fast SR framework}.

\begin{assumption1f}
\label{as:1}
The blurring matrix $\bfH$ is the matrix representation of the cyclic convolution operator, \textit{i.e.}, $\bfH$ is a block circulant matrix with circulant blocks (BCCB).
\end{assumption1f}
\nd{This assumption has been widely resorted in the image processing literature \cite{LinPAMI2004,Robinson2010,Sroubek2011}. It is satisfied provided the underlying blurring kernel is shift-invariant and the boundary conditions make the convolution operator periodic. Note that the BCCB matrix assumption does not depend on the shape of the blurring kernel, i.e., it is satisfied for any kind of blurring, including motion blur, out-of-focus blur, atmospheric turbulence, etc.} Using the cyclic convolution assumption, the blurring matrix and its conjugate transpose can be decomposed as
\begin{align}
\bfH &= \bfF^H \bsLambda \bfF      \label{eq_BF}\\
\bfH^H &= \bfF^H \bsLambda^H \bfF  \label{eq_BCF}
\end{align}
where the matrices $\bfF$ and $\bfF^H$ \zhaon{are associated with} the Fourier and inverse Fourier transforms
(satisfying $\bfF\bfF^H = \bfF^H\bfF = \bfI_{N_h}$) and $\bsLambda = \diag{\bfF\bfh} \in \mathbb{C}^{N_h\times N_h}$ is a diagonal matrix, whose diagonal elements are the Fourier coefficients of the first column of the blurring matrix $\bfH$, \zhaon{denoted as $\bfh$}.
Using the decompositions \eqref{eq_BF} and \eqref{eq_BCF}, the blurring operator $\bfH\bfx$ and its conjugate $\bfH^H\bfx$ can be efficiently computed in the frequency domain, see, e.g., \cite{JamesNg2006,MElad1997SR,NZHAO2015}.

\begin{assumption2f}\label{as:1}
The decimation matrix $\bfS \in \mathbb{R}^{N_l \times N_h}$ is a down-sampling operator,
while its conjugate transpose $\bfS^H\in \mathbb{R}^{N_h \times N_l}$ interpolates the decimated image with zeros.
\end{assumption2f}

\begin{figure}[htpt]
\centering
\includegraphics[width=0.4\linewidth]{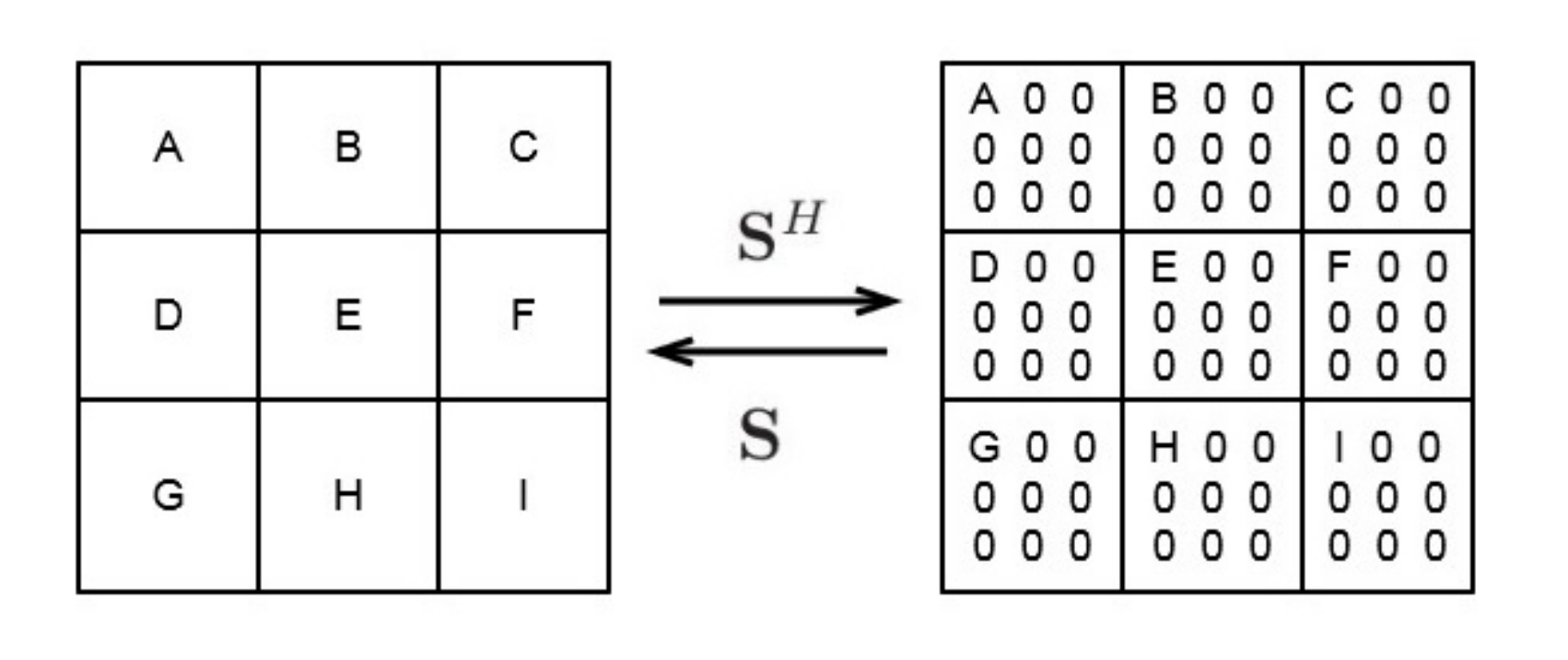}
\caption{Effect of the up-sampling matrix $\bfS^H$ on a $3\times 3$ image and of the down-sampling matrix $\bfS$ on the corresponding $9\times 9$ image (whose scale up factor equals 3).}
\label{fig_sampling_matrix}
\end{figure}

\nd{Once again, numerous research works have used this assumption \cite{Robinson2010,Sroubek2011,Yang2010TIP,Zeyde2012}.} Fig. \ref{fig_sampling_matrix} shows a toy example highlighting the roles of the
decimation matrix $\bfS$ and its conjugate transpose $\bfS^H$. The decimation matrix
satisfies the relationship $\bfS\bfS^H = \bfI_{N_l}$. Denoting $\underline{\bfS} \triangleq \bfS^H\bfS$,
multiplying an image by $\underline{\bfS}$ can be achieved by making an entry-wise
multiplication with an $N_h \times N_h$ mask having ones at the sampled positions and zeros elsewhere.

\subsection{Problem formulation}
Similar to traditional image reconstruction problems, the estimation of an HR image from the observation of an LR image is not invertible, leading to an ill-posed problem. This ill-posedness is classically overcome by incorporating some appropriate prior information or regularization term. The regularization term can be chosen from a specific task of interest, the information resulting from previous experiments or from a perceptual view on the constraints affecting the unknown model parameters \cite{Robert2007,Gelman2013}.
Various priors or regularizations have already been advocated to regularize the image SR problem in the literature include: \AB{(i) traditional generic image priors such as} Tikhonov \cite{Nguyen2001TIP,QiWEI2015FastFusion,Ebrahimi2008}, the total variation (TV) \cite{MNg2010SR_TV,SR_Aly2005,Marquina2008} and the sparsity in transformed domains \cite{GEM_Bioucas-Dias2006,JamesNg2007,JijiCV2004,Figueiredo2003TIP}, (ii) more recently proposed image regularizations such as the gradient profile prior \cite{SunJ_CVPR_2008,SunJ_TIP_2011,YWTai_CVPR_2010} or Fattal's edge statistics \cite{Fattal2009EWA} and (iii) learning-based priors \cite{Roth2005CVPR,Zoran2011ICCV}. The fast approach proposed in the next section is shown to be adapted to many of the existing regularization terms. Note that proposing new regularization terms with improved SR performance is out of the scope of this paper.

Assuming that the noise $\bfn$ in \eqref{image_form} is AWGN and incorporating a proper regularization to the target image $\bfx$, the maximum \textit{a posteriori} (MAP) estimator of $\bfx$ for the single image SR can be obtained by solving the following optimization problem
\begin{equation}
\min_{\bfx} \frac{1}{2}  \underbrace{\|\bfy - \bfS\bfH\bfx\|_2^2}_{\substack{\text{data fidelity}}}
 + \quad \tau \underbrace{\phi(\bfA\bfx)}_{\substack{\text{regularization}}}
\label{eq_Target_Prob}
\end{equation}
where $\|\bfy - \bfS\bfH\bfx\|_2^2$ is a \textit{data fidelity term} associated with the model likelihood and $\phi(\bfA\bfx)$ is related to the image prior information and is referred to as \textit{regularization} or \textit{penalty} \cite{Engl1996}. \AB{Note that the matrix $\bfA$ can be the identity matrix when the regularization is imposed on the SR image itself, the gradient operator, \zhaov{any orthogonal matrix or normalized tight frame}, depending on the addressed application and the properties of the target image.} The role of the regularization parameter $\tau$ is to weight the importance of the regularization term with respect to (w.r.t.) the data fidelity term. \nd{The next section derives a closed-form solution of the problem \eqref{eq_Target_Prob} for a quadratic regularizing operator $\phi(\cdot)$ when the assumptions 1 and 2 hold.}

\section{Proposed fast super-resolution using an $\ell_2$-regularization}
\label{sec:TikAS}

\AB{Before proceeding to more complicated regularizations investigated in Section \ref{sec:generalization}, we first consider the basic
$\ell_2$-norm regularization defined by
\begin{equation}
\label{eq:l2_regularizer}
  \phi(\bfA\bfx) = \|\bfA\bfx - {\bfv}\|_2^2
\end{equation}
\Qi{where the matrix $\bfA^H \bfA$ is assumed, unless otherwise specified, to be invertible. Typical examples of matrices $\bfA$ include the Fourier transform matrix, the wavelet transform matrix, etc.}
\Qi{Under this $\ell_2$-norm regularization}, a generic form of a fast solution \Qi{to problem \eqref{eq_Target_Prob} will be} derived in Section \ref{subsec:l2_general}. Then, two particular cases of this regularization widely used in the literature will be discussed in Sections \ref{subsec:l2_image} and \ref{subsec:l2_gradient}.}

\subsection{\nd{Proposed closed-form solution for the $\ell_2-\ell_2$ problem}}
\label{subsec:l2_general}
With the regularization \eqref{eq:l2_regularizer}, the problem \eqref{eq_Target_Prob} transforms to
\begin{equation}
\min_{\bfx} \frac{1}{2} \|\bfy - \bfS\bfH\bfx\|_2^2 + \tau \| \bfA \bfx - \bfv\|_2^2
\label{l2_problem_generic}
\end{equation}
whose solution is given by
\begin{equation}
\hat{\bfx} = (\bfH^H\underline{\bfS}\bfH +2\tau \bfA^H\bfA)^{-1}(\bfH^H\bfS^H\bfy +2\tau\bfA^H\bfv)
\label{l2_anas_generic}
\end{equation}
with $\underline{\bfS}=\bfS^H\bfS$.

Direct computation of the analytical solution \zhaon{\eqref{l2_anas_generic}} requires the inversion of a high dimensional matrix, whose computational complexity is of order $\mathcal{O} (N_h^3)$. \zhaon{One can think of using} optimization or simulation-based methods to overcome this computational difficulty. The optimization-based methods, such as the gradient-based methods \cite{SunJ_TIP_2011} or, more recently, the ADMM \cite{MNg2010SR_TV} and SB \cite{Yanovsky2015} method approximate the solution of \eqref{l2_problem_generic} by iterative updates. The simulation-based methods, e.g., the Markov Chain Monte Carlo methods \cite{Feron2015,Orieux2012,Idier2015}, are drawing samples from a multivariate posterior distribution (which is Gaussian for a Tikhonov regularization) and compute the average of the generated samples to approximate the minimum mean square error (MMSE) estimator of $\bfx$. However, simulation-based methods have the major drawback of being computationally expensive, which prevents their effective use when processing large images. \nd{Moreover, because of the particular structure of the decimation matrix, the joint operator $\bfS\bfH$ cannot be diagonalized in the frequency domain, which prevents any direct implementation of the solution \eqref{l2_anas_generic} in this domain.} The main contribution in this work is proposing a new scheme to compute \eqref{l2_anas_generic} explicitly, getting rid of any statistically sampling or iterative update and leading to a fast SR method.

In order to compute the analytical solution \eqref{l2_anas_generic}, a property of the decimation matrix in the frequency domain is first stated in Lemma \ref{lemma:1}.

\begin{lemma}[Wei \textit{et al.}, \cite{QiWEI2015FastFusion}]
The following equality holds
\begin{equation}
\bfF\underline{\bfS}\bfF^H = \frac{1}{d}\bfJ_d \otimes \bfI_{N_l}
\end{equation}
where $\bfJ_d \in \mathbb{R}^{d\times d}$ is a matrix of ones,
$\bfI_{N_l} \in \mathbb{R}^{N_l\times N_l}$ is the $N_l\times N_l$ identity matrix and $\otimes$ is the Kronecker product.
\label{lemma:1}
\end{lemma}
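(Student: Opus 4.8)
The plan is to prove the identity entrywise: I would compute the $(m,m')$ element of $\bfF\underline{\bfS}\bfF^H$ in closed form and then read off the resulting block pattern as $\tfrac{1}{d}\bfJ_d\otimes\bfI_{N_l}$. By Assumption~2, $\underline{\bfS}=\bfS^H\bfS$ is a diagonal $N_h\times N_h$ matrix carrying a one at each retained sample position and a zero elsewhere, i.e. the orthogonal projector onto the coordinate subspace indexed by the sampled set $\mathcal{I}$. Writing the unitary DFT matrix as $F_{m,k}=\tfrac{1}{\sqrt{N_h}}e^{-2\pi\mathrm{i}mk/N_h}$ (with $\mathrm{i}$ the imaginary unit; in two dimensions $\bfF$ is the Kronecker product of the row and column one-dimensional DFT matrices), the diagonal structure of $\underline{\bfS}$ collapses the double sum to a single sum over $\mathcal{I}$,
\[
(\bfF\underline{\bfS}\bfF^H)_{m,m'}=\sum_{k\in\mathcal{I}}F_{m,k}\overline{F_{m',k}}=\frac{1}{N_h}\sum_{k\in\mathcal{I}}e^{-2\pi\mathrm{i}(m-m')k/N_h}.
\]

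Next I would evaluate this character sum. In the one-dimensional setting the retained indices are $\mathcal{I}=\{0,d,2d,\dots,(N_l-1)d\}$; substituting $k=jd$ and using $d/N_h=1/N_l$ turns the sum into the geometric series $\tfrac{1}{N_h}\sum_{j=0}^{N_l-1}e^{-2\pi\mathrm{i}(m-m')j/N_l}$. By orthogonality of the $N_l$th roots of unity, this series equals $N_l$ when $m\equiv m'\pmod{N_l}$ and vanishes otherwise, so that $(\bfF\underline{\bfS}\bfF^H)_{m,m'}$ equals $\tfrac{1}{d}$ if $m\equiv m'\pmod{N_l}$ and $0$ otherwise. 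Writing $m=aN_l+b$ and $m'=a'N_l+b'$ with $a,a'\in\{0,\dots,d-1\}$ and $b,b'\in\{0,\dots,N_l-1\}$, the congruence $m\equiv m'\pmod{N_l}$ is equivalent to $b=b'$ with no constraint on $a,a'$; hence the entry equals $\tfrac{1}{d}(\bfJ_d)_{a,a'}(\bfI_{N_l})_{b,b'}$, which is precisely the $\big((a,b),(a',b')\big)$ entry of $\tfrac{1}{d}\bfJ_d\otimes\bfI_{N_l}$. This establishes the claim in one dimension.

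The main obstacle is the two-dimensional bookkeeping rather than any analytic difficulty. I would need to verify that the sampling grid factorizes as the product of the $d_r$ row pattern and the $d_c$ column pattern, that the 2D DFT factorizes accordingly as a Kronecker product, and---most delicately---that the lexicographic vectorization of the images aligns with the Kronecker indexing so that the two separate congruences modulo $m_l$ and modulo $n_l$ combine into the single factor $\tfrac{1}{d}\bfJ_d\otimes\bfI_{N_l}$ with $d=d_rd_c$ and $N_l=m_ln_l$. Once this ordering is pinned down, the full result follows by applying the one-dimensional geometric-sum argument separately along rows and columns and taking the tensor product.
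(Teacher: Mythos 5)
The paper itself gives no proof of this lemma: it is imported from Wei \emph{et al.} \cite{QiWEI2015FastFusion} and used as a black box, so your attempt can only be measured against the statement and the way the paper later uses it. Your one-dimensional argument is correct and complete: $\underline{\bfS}$ is the diagonal $0/1$ mask on the sampled set, the entrywise computation collapses to the character sum $\frac{1}{N_h}\sum_{j=0}^{N_l-1}e^{-2\pi\mathrm{i}(m-m')j/N_l}$, and orthogonality of the $N_l$th roots of unity gives the value $\tfrac{1}{d}$ precisely when $m\equiv m'\pmod{N_l}$, which is the block pattern of $\tfrac{1}{d}\bfJ_d\otimes\bfI_{N_l}$. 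One tacit hypothesis should be made explicit: the retained samples must include the origin (zero sampling phase); otherwise every alias acquires a unimodular phase factor and $\bfJ_d$ gets replaced by a rank-one matrix of phases.

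The genuine gap is the two-dimensional step you defer, and it is not mere bookkeeping --- carried out exactly as you propose, it fails. With lexicographic vectorization one has $\bfF=\bfF_{m_h}\otimes\bfF_{n_h}$ and $\underline{\bfS}=\underline{\bfS}_{r}\otimes\underline{\bfS}_{c}$, so the mixed-product rule and your 1D result give
\[
\bfF\underline{\bfS}\bfF^H
=\Bigl(\tfrac{1}{d_r}\,\bfJ_{d_r}\otimes\bfI_{m_l}\Bigr)\otimes\Bigl(\tfrac{1}{d_c}\,\bfJ_{d_c}\otimes\bfI_{n_l}\Bigr)
=\tfrac{1}{d}\,\bfJ_{d_r}\otimes\bfI_{m_l}\otimes\bfJ_{d_c}\otimes\bfI_{n_l},
\]
in which $\bfI_{m_l}$ and $\bfJ_{d_c}$ sit in the wrong order: this matrix is permutation-similar to, but not equal to, $\tfrac{1}{d}\bfJ_d\otimes\bfI_{N_l}$. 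A concrete check: for $m_h=n_h=4$, $d_r=d_c=2$, the entry linking frequencies $(0,0)$ and $(0,2)$ (vectorized indices $0$ and $2$) equals $\tfrac14$ on the left but $0$ in $\tfrac14\bfJ_4\otimes\bfI_4$, since $0\not\equiv 2\pmod 4$. The two congruences modulo $m_l$ and modulo $n_l$ therefore do \emph{not} merge into a single congruence modulo $N_l$ under lexicographic ordering; the identity holds verbatim only after a perfect-shuffle permutation $\mathbf{P}$ of the Fourier indices that groups the $d=d_rd_c$ aliases of each base-band frequency, i.e. $\bfF\underline{\bfS}\bfF^H=\tfrac{1}{d}\mathbf{P}^T\bigl(\bfJ_d\otimes\bfI_{N_l}\bigr)\mathbf{P}$. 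This is exactly the ordering convention the paper needs later anyway, when $\bsLambda$ is partitioned into the blocks $\bsLambda_1,\dots,\bsLambda_d$ defining $\EigSq$: the permutation can be absorbed into the definition of those blocks, so Theorem \ref{the:Ubar} survives, but a self-contained proof of the lemma must either exhibit $\mathbf{P}$ or fix this alias-grouping ordering at the outset --- precisely the point your proposal leaves open.
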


Using the property of the matrix $\bfF\underline{\bfS}\bfF^H$ given in Lemma \ref{lemma:1} and taking into account the assumptions mentioned above, the analytical solution \eqref{l2_anas_generic} can be rewritten as
\begin{equation}
\hat{\bfx} =\bfF^H \left(\frac{1}{d} \EigSq^H \EigSq +2\tau \Qi{\bfF\bfA^H\bfA\bfF^H} \right)^{-1}\bfF\left(\bfH^H\bfS^H\bfy +2\tau\bfA^H{\bfv}\right)
\label{l2_anas_generic_v2}
\end{equation}
where the matrix $\EigSq \in \mathbb{C}^{N_l\times N_h}$ is defined as
\begin{equation}
\EigSq=[\bsLambda_1,\bsLambda_2, \cdots,\bsLambda_d]
\label{eq_lambda_underline}
\end{equation}
\revAQ{and where the blocks $\bsLambda_i \in \mathbb{C}^{N_l\times N_l}$ ($i = 1,\cdots,d$) satisfy the relationship}
\begin{equation}
\revAQ{\diag{\bsLambda_1,\cdots,\bsLambda_d}=\bsLambda.}
\end{equation}
The readers may refer to the Appendix \ref{app:theorem1} for more details about the derivation of \eqref{l2_anas_generic_v2} from \eqref{l2_anas_generic}.

To further simply the expression \eqref{l2_anas_generic_v2}, we propose to use the following Woodbury inverse formula.
\begin{lemma}[Woodbury formula \cite{Hager1989}]
The following equality holds conditional on the existence of $\bfA_1^{-1}$
and $\bfA_3^{-1}$
\begin{equation}
\begin{split}
&(\bfA_1+\bfA_2 \bfA_3 \bfA_4)^{-1} \\
&= \bfA_1^{-1} - \bfA_1^{-1}\bfA_2(\bfA_3^{-1} + \bfA_4 \bfA_1^{-1}\bfA_2)^{-1}\bfA_4 \bfA_1^{-1}
\end{split}
\label{eq_lemma_2}
\end{equation}
where $\bfA_1$, $\bfA_2$, $\bfA_3$ and $\bfA_4$ are matrices of correct sizes.
\label{lemma:2}
\end{lemma}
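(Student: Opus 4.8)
The plan is to prove this identity by \emph{direct verification} rather than by deriving it: I would exhibit the claimed right-hand side, denote it $\mathbf{B} \triangleq \bfA_1^{-1} - \bfA_1^{-1}\bfA_2(\bfA_3^{-1} + \bfA_4 \bfA_1^{-1}\bfA_2)^{-1}\bfA_4 \bfA_1^{-1}$, and show that it is a genuine inverse of $\bfA_1 + \bfA_2\bfA_3\bfA_4$ simply by multiplying the two together and checking that the product collapses to the identity $\bfI$. The only hypotheses needed are the stated invertibility of $\bfA_1$ and $\bfA_3$, together with the implicit invertibility of the inner matrix $\mathbf{M} \triangleq \bfA_3^{-1} + \bfA_4\bfA_1^{-1}\bfA_2$ that appears in $\mathbf{B}$; since the operators are square, exhibiting a two-sided inverse simultaneously certifies that the left-hand side is invertible, which is itself part of the content of the lemma.

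Concretely, writing $\mathbf{B} = \bfA_1^{-1} - \bfA_1^{-1}\bfA_2\mathbf{M}^{-1}\bfA_4\bfA_1^{-1}$, I would expand $(\bfA_1 + \bfA_2\bfA_3\bfA_4)\mathbf{B}$ into the two pieces $\bfA_1\mathbf{B}$ and $\bfA_2\bfA_3\bfA_4\mathbf{B}$. The first piece simplifies immediately to $\bfI - \bfA_2\mathbf{M}^{-1}\bfA_4\bfA_1^{-1}$. The key algebraic step, and really the only idea in the argument, is to recognize from the very definition of $\mathbf{M}$ that $\bfA_4\bfA_1^{-1}\bfA_2 = \mathbf{M} - \bfA_3^{-1}$. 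Substituting this into the second piece makes the factor $\bfA_2\bfA_3\bfA_4\bfA_1^{-1}\bfA_2\mathbf{M}^{-1}$ telescope to $\bfA_2\bfA_3 - \bfA_2\mathbf{M}^{-1}$, after which the two $\bfA_2\bfA_3\bfA_4\bfA_1^{-1}$ contributions cancel and the surviving term $+\bfA_2\mathbf{M}^{-1}\bfA_4\bfA_1^{-1}$ exactly annihilates the corresponding negative term coming from $\bfA_1\mathbf{B}$. What remains is $\bfI$, establishing the identity.

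I do not expect any genuine obstacle here: the Woodbury identity is routine and the computation above is purely formal matrix algebra. The only points demanding a little care are the bookkeeping of dimensions, so that every product and every inverse is well defined with $\bfA_1$, $\bfA_2$, $\bfA_3$, $\bfA_4$ of compatible sizes, and, for full rigor, confirming that $\mathbf{B}$ is also a \emph{left} inverse; this follows either from the symmetric computation $\mathbf{B}(\bfA_1 + \bfA_2\bfA_3\bfA_4) = \bfI$ or simply from the fact that a right inverse of a square matrix is automatically two-sided. A more conceptual alternative, which I would mention but not pursue in full, is to invert a $2\times 2$ block matrix assembled from $\bfA_1$, $\bfA_2$, $\bfA_4$ and $\bfA_3^{-1}$ two different ways via the two Schur complements and to equate the top-left blocks; this \emph{derives} the formula rather than merely checking it, but it is longer and unnecessary for the present purpose, where the identity is used only as an algebraic tool to simplify \eqref{l2_anas_generic_v2}.
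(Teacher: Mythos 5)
Your verification is correct, but there is nothing in the paper to compare it against: the paper states Lemma~\ref{lemma:2} as a known result, cites it directly from the literature \cite{Hager1989}, and gives no proof at all --- the lemma is invoked only as a black-box algebraic tool in Appendix~\ref{app:theorem1} to pass from \eqref{eq_b2} to \eqref{eq_b3}, with the identification $\bfA_1 = 2\tau\bfF\bfA^H\bfA\bfF^H$, $\bfA_2=\EigSq^H$, $\bfA_3=\tfrac{1}{d}\bfI$, $\bfA_4=\EigSq$. Your direct check --- expanding $(\bfA_1+\bfA_2\bfA_3\bfA_4)\mathbf{B}$ and telescoping via the substitution $\bfA_4\bfA_1^{-1}\bfA_2=\mathbf{M}-\bfA_3^{-1}$ --- is the standard certificate, and the cancellation goes through exactly as you describe; your remark that a right inverse of a square matrix is automatically two-sided properly closes the left-inverse point. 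One aspect of your write-up even sharpens the statement as printed: the lemma assumes only that $\bfA_1^{-1}$ and $\bfA_3^{-1}$ exist, which does not by itself make the right-hand side well defined (for scalars, $\bfA_1=\bfA_3=1$, $\bfA_2=1$, $\bfA_4=-1$ makes both $\mathbf{M}=\bfA_3^{-1}+\bfA_4\bfA_1^{-1}\bfA_2$ and $\bfA_1+\bfA_2\bfA_3\bfA_4$ vanish), so the invertibility of $\mathbf{M}$ --- equivalently, of the left-hand side --- is a genuine additional hypothesis, as you note. In the paper's application this is harmless, since there $\mathbf{M}= d\bfI_{N_l}+\tfrac{1}{2\tau}\EigSq\bsPsi\EigSq^H$ is Hermitian positive definite ($\bsPsi$ being positive definite by the standing invertibility assumption on $\bfA^H\bfA$), a point the paper glosses over when it asserts that invertibility of $\bfA_1$ and $\bfA_3$ alone implies the formula can be applied. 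In short: your proof is correct and self-contained, and it supplies a justification that the paper delegates entirely to its citation.
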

Taking into account the Woodbury formula of Lemma \ref{lemma:2}, the analytical solution \eqref{l2_anas_generic_v2} can be computed very efficiently
as stated in the following theorem.

\begin{theorem}
\label{the:Ubar}
When Assumptions 1 and 2 are satisfied, the solution of Problem \eqref{l2_problem_generic} can be computed using the following closed-form expression
\begin{equation}
\begin{split}
\hat{\bfx} &= \frac{1}{2\tau }\bfF^H\bsPsi\bfF\bfr \\
&- \frac{1}{2\tau } \bfF^H\bsPsi\EigSq^H\left(2\tau d\bfI_{N_l} + \EigSq\bsPsi\EigSq^H  \right)^{-1}\EigSq\bsPsi\bfF\bfr
\label{eq_anas_gl2}
\end{split}
\end{equation}
where $\bfr = \bfH^H\bfS^H\bfy +2\tau\bfA^H\bfv$, $\bsPsi = \Qi{\bfF\left(\bfA^H\bfA\right)^{-1}\bfF^H}$ and $\EigSq$ is defined in \eqref{eq_lambda_underline}.
\end{theorem}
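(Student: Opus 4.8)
The plan is to obtain \eqref{eq_anas_gl2} by applying the Woodbury identity of Lemma \ref{lemma:2} directly to the $N_h\times N_h$ inverse appearing in \eqref{l2_anas_generic_v2}. I would take \eqref{l2_anas_generic_v2} as the starting point, its derivation from \eqref{l2_anas_generic} being relegated to Appendix \ref{app:theorem1}, and concentrate solely on the central factor
\begin{equation}
\left(\tfrac{1}{d}\EigSq^H\EigSq + 2\tau\,\bfF\bfA^H\bfA\bfF^H\right)^{-1}.
\end{equation}

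First I would match this expression against the left-hand side $(\bfA_1 + \bfA_2\bfA_3\bfA_4)^{-1}$ of \eqref{eq_lemma_2} by setting $\bfA_1 = 2\tau\,\bfF\bfA^H\bfA\bfF^H$, $\bfA_2=\EigSq^H$, $\bfA_3 = \tfrac{1}{d}\bfI_{N_l}$ and $\bfA_4 = \EigSq$, so that $\bfA_2\bfA_3\bfA_4 = \tfrac{1}{d}\EigSq^H\EigSq$ and the block sizes are consistent since $\EigSq \in \mathbb{C}^{N_l\times N_h}$. Before invoking the lemma I would verify its hypotheses: $\bfA_3^{-1}=d\,\bfI_{N_l}$ exists trivially, and $\bfA_1$ is invertible because $\bfF$ is unitary, $\tau>0$, and $\bfA^H\bfA$ is assumed invertible. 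Its inverse is exactly $\bfA_1^{-1} = \tfrac{1}{2\tau}\bfF(\bfA^H\bfA)^{-1}\bfF^H = \tfrac{1}{2\tau}\bsPsi$, which is precisely why the matrix $\bsPsi$ is introduced.

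Substituting these assignments into \eqref{eq_lemma_2} yields
\begin{equation}
\tfrac{1}{2\tau}\bsPsi - \tfrac{1}{2\tau}\bsPsi\EigSq^H\left(d\,\bfI_{N_l}+\tfrac{1}{2\tau}\EigSq\bsPsi\EigSq^H\right)^{-1}\tfrac{1}{2\tau}\EigSq\bsPsi .
\end{equation}
The remaining manipulation is purely cosmetic: I would factor $\tfrac{1}{2\tau}$ out of the inner inverse, turning $\left(d\,\bfI_{N_l}+\tfrac{1}{2\tau}\EigSq\bsPsi\EigSq^H\right)^{-1}$ into $2\tau\left(2\tau d\,\bfI_{N_l}+\EigSq\bsPsi\EigSq^H\right)^{-1}$, and then collect the three scalar prefactors, noting that $\tfrac{1}{2\tau}\cdot 2\tau\cdot\tfrac{1}{2\tau}=\tfrac{1}{2\tau}$. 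Pre-multiplying the resulting matrix by $\bfF^H$ and post-multiplying by $\bfF\bfr$ then reproduces \eqref{eq_anas_gl2} term by term.

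I do not expect any genuine obstacle; the content lies entirely in the bookkeeping. The one point worth stressing is the payoff of this particular split: assigning $\bfA_1$ to the easily invertible regularization block and pushing $\EigSq^H\EigSq$ into the $\bfA_2\bfA_3\bfA_4$ slot replaces the $N_h\times N_h$ inversion by the $N_l\times N_l$ inversion of $2\tau d\,\bfI_{N_l}+\EigSq\bsPsi\EigSq^H$, whose invertibility is inherited from that of the original positive definite system. Care is required only to track the scalar factors correctly and to keep the block dimensions consistent, so that $\EigSq\bsPsi\EigSq^H$ is indeed an $N_l\times N_l$ matrix.
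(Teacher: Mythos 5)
Your proof is correct and takes essentially the same route as the paper: Appendix \ref{app:theorem1} applies the Woodbury formula of Lemma \ref{lemma:2} with exactly your assignments $\bfA_1 = 2\tau\bfF\bfA^H\bfA\bfF^H$, $\bfA_2=\EigSq^H$, $\bfA_3=\tfrac{1}{d}\bfI_{N_l}$, $\bfA_4=\EigSq$, followed by the same scalar bookkeeping, and likewise notes that invertibility of $\bfA_1$ and $\bfA_3$ guarantees the formula applies. The only difference is that the paper's appendix also re-derives \eqref{l2_anas_generic_v2} from \eqref{l2_anas_generic} via Lemma \ref{lemma:1} and Kronecker-product manipulations, a step you legitimately take as given since it is stated in the main text before the theorem.
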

\begin{proof}
See Appendix \ref{app:theorem1}.
\end{proof}

\vspace{+0.2cm}
\noindent \textbf{\textit{Complexity Analysis}}\\
\revAQ{The most computationally expensive part for the computation of \eqref{eq_anas_gl2} in Theorem \ref{the:Ubar} is the implementation of FFT/iFFT.} In total, four FFT/iFFT computations are required in our implementation.
Comparing with the original problem \eqref{l2_anas_generic}, the order of computation complexity has decreased significantly from $\mathcal{O}(N_h^3)$ to $\mathcal{O}(N_h\log N_h)$, which allows the analytical solution \eqref{eq_anas_gl2} to be computed efficiently.
\revAQ{Note that \cite{Robinson2010,Sroubek2011} also addressed image SR problems by using the properties of $\underline{\bfS}$ in the frequency domain, where $N_l$ small matrices of size $d\times d$ were inverted. The total computational complexity of the methods investigated in \cite{Robinson2010,Sroubek2011} is $\mathcal{O}(N_h\log N_h + N_h d^2)$.} \nd{Another important difference with our work is that the authors of \cite{Robinson2010} and \cite{Sroubek2011} decomposed the SR problem into an upsampling (including motion estimation which is not considered in this work) and a deblurring step. The operators $\bfH$ and $\bfS$ were thus considered separately, thus requiring two $\ell_2$ regularizations for the blurred image (referred to as $\bfz$ in \cite{Robinson2010}) and the ground-truth image (referred to as $\bfx$ in \cite{Robinson2010}). On the contrary, this work considers the blurring and downsampling jointly and achieve the SR in one step, requiring only one regularization term for the unknown image. It is worthy to mention that the proposed SR solution can be extended to incorporate the warping operator considered in \cite{Robinson2010,Sroubek2011}, which can also be modelled as a BCCB matrix. This is not included in this paper but will be considered in future work.}

In the sequel of this section, two particular instances of the $\ell_2$-norm regularization are considered, defined in the image and gradient domains, respectively.

\subsection{Solution of the $\ell_2-\ell_2$ problem in the image domain \label{subsec:l2_image}}
First, we consider the specific case where $\bfA =  \Id{N_h}$ and $\bfv = \bar{\bfx}$, i.e.,  the problem \eqref{l2_problem_generic} turns to
\begin{equation}
\min_{\bfx} \frac{1}{2} \|\bfy - \bfS\bfH\bfx\|_2^2 + \tau \|  \bfx - \bar{\bfx}\|_2^2.
\label{l2_problem_image}
\end{equation}
This implies that the target image $\bfx$ is \emph{a priori} close to the image $\bar{\bfx}$. The image $\bar{\bfx}$ can be an estimation of the HR image, e.g., an interpolated version
of the observed image, \AB{a restored image obtained with learning-based algorithms \cite{Yang2010TIP}} or a cleaner image obtained from other sensors \cite{QiWEI2015,QiWEI2015FastFusion,Ebrahimi2008}.
In such case, using Theorem \ref{the:Ubar}, the solution of the problem \eqref{l2_problem_image} is
\begin{equation}
\hat{\bfx}= \frac{1}{2\tau}\bfr - \frac{1}{2\tau}\bfF^H\EigSq^H
\left( 2\tau d\bfI_{N_l} + \EigSq\EigSq^H \right)^{-1}\EigSq\bfF\bfr
\label{l2_analytic_image}
\end{equation}
with $\bfr= \bfH^H\bfS^H\bfy +2\tau\bar{\bfx}$.

Algorithm \ref{Algo:FastSR} summarizes the implementation of the proposed SR solution \eqref{l2_analytic_image}, which is referred to as \textit{fast super-resolution (FSR)} approach.

\begin{algorithm}[h!]
\label{Algo:FastSR}
\KwIn{$\bfy$, $\bfH$, $\bfS$, $\bar{\bfx}$, $\tau$, $d$}
\BlankLine
 \tcpp{Factorization of $\bfH$ (FFT of the blurring kernel)}
$\bfH= {\bfF^H \bsLambda \bfF}$\;
 \tcpp{Compute $\EigSq$}
$\EigSq = [\bsLambda_1,\bsLambda_2, \cdots,\bsLambda_d]$\;
 \tcpp{Calculate FFT of $\bfr$ denoted as $\bfF\bfr$}
$\bfF\bfr = \bfF(\bfH^H\bfS^H\bfy +2\tau\bar{\bfx})$\;
 \tcpp{Hadamard (or entrywise) product in frequency domain}
$\bfx_f = \left(\EigSq^H \left( 2\tau d\bfI_{N_l} + \EigSq\EigSq^H \right)^{-1}\EigSq\right)\bfF\bfr$\;
 \tcpp{Compute the analytical solution}
$\hat{\bfx} = \frac{1}{2\tau} \left(\bfr - \bfF^H\bfx_f \right)$ \;
\KwOut{$\hat{\bfx}$}
\caption{\AB{FSR with image-domain $\ell_2$-regularization: implementation of the analytical solution \eqref{l2_analytic_image}}}
\DecMargin{1em}
\end{algorithm}

\subsection{Solution of the $\ell_2-\ell_2$ problem in the gradient domain \label{subsec:l2_gradient}}
Generic image priors defined in the gradient domain have been successfully used for image reconstruction, avoiding the common ringing artifacts see, e.g., \cite{SunJ_CVPR_2008,SunJ_TIP_2011,YWTai_CVPR_2010}. In this part, we focus on the gradient profile prior proposed in \cite{SunJ_TIP_2011} for the single image SR problem. This prior consists of considering the regularizing term $\| \nabla \bfx - \bar{\nabla\bfx}\|_2^2$, thus the problem \eqref{l2_problem_generic} turns to
\begin{equation}
\min_{\bfx} \frac{1}{2} \|\bfy - \bfS\bfH\bfx\|_2^2 + \tau \| \nabla \bfx - \bar{\nabla\bfx}\|_2^2
\label{l2_problem_gradient}
\end{equation}
where $\nabla$ is the discrete version of the gradient $\nabla:=[\partial_{\rm{h}}, \partial_{\rm{v}}]^T$ and $\bar{\nabla\bfx}$ is \zhaov{the estimated gradient field}. More explanations about the motivations for using the gradient field may be found in \cite{SunJ_CVPR_2008,SunJ_TIP_2011}. For an image $\bsx \in \mathbb{R}^{m\times n}$, under the \Qi{periodic} boundary conditions, the numerical definitions of the gradient operators are
\begin{align}
(\partial_{\rm{h}} \bsx)(i,j) &= \bsx((i+1) \ \mathrm{mod} \ m,j) - \bsx(i,j) \; &\textrm{if} \; i \leq m 
\nonumber
\\
(\partial_{\rm{v}} \bsx)(i,j) &= \bsx(i,(j+1) \ \mathrm{mod} \  n) - \bsx(i,j) \; &\textrm{if} \; j \leq n
\nonumber
\end{align}
\Qi{where $\partial_{\rm{h}}$ and $\partial_{\rm{v}}$ are the horizontal and vertical gradients}. \Qi{The gradient operators can be rewritten as two BCCB matrices $\bfD_{\rm{h}}$ and $\bfD_{\rm{v}}$ corresponding to
the horizontal and vertical discrete differences of an image, respectively.} Therefore, two diagonal matrices $\bsSigma_{\rm{h}}$ and $\bsSigma_{\rm{v}}$ ($\mathbb{C}^{N_h\times N_h}$) are obtained by decomposing $\bfD_{\rm{h}}$ and $\bfD_{\rm{v}}$ in the frequency domain, i.e.,
\begin{equation}
\bfD_{\rm{h}} = \bfF^H\bsSigma_{\rm{h}}\bfF \; \textrm{and} \; \bfD_{\rm{v}} = \bfF^H\bsSigma_{\rm{v}}\bfF.
\end{equation}

Thus, the problem \eqref{l2_problem_gradient} can be transformed into
\begin{equation}
\min_{\bfx} \frac{1}{2} \|\bfy - \bfS\bfH\bfx\|_2^2 + \tau \| \Qi{\bfA}  \bfx - \bfv\|_2^2
\label{eq_probform_gradl2_v2}
\end{equation}
with $\Qi{\bfA} = [\bfD_{\rm{h}}^T,\bfD_{\rm{v}}^T] \in \mathbb{R}^{2N_h \times N_h}$ and using the notation $\bar{\nabla\bfx} = \bfv = [\bfv_{\rm{h}}, \bfv_{\rm{v}}]^T \in\mathbb{R}^{2N_h \times 1}$.
\Qi{Note that the invertibility of $\bfA^H \bfA$ is violated here because of the periodic boundary assumption. Thus, adding a small $\ell_2$-norm regularization $\tau \sigma \|\bfx\|_2^2$ (where $\sigma$ is a very small constant) to \eqref{eq_probform_gradl2_v2} can circumvent this invertibility problem while keeping the solution close to the original regularization.}
Using Theorem \ref{the:Ubar}, the analytical solution of \eqref{eq_probform_gradl2_v2} (\Qi{including the additional small $\ell_2$-norm term}) is given by \eqref{eq_anas_gl2} with $\bsPsi = \left(\bsSigma_{\rm{h}}^H \bsSigma_{\rm{h}} + \bsSigma_{\rm{v}}^H \bsSigma_{\rm{v}} +\Qi{\sigma\Id{N_h}}\right)^{-1}$.

\AB{The pseudocode used to implement this solution is summarized in Algo. \ref{Algo_FSR_Gl2}.}
\begin{algorithm}[h!]
\label{Algo_FSR_Gl2}
\KwIn{$\bfy$, $\bfH$, $\bfS$, $\bfD_{\rm{h}}$, $\bfD_{\rm{v}}$, $\bar{\nabla\bfx}$, $\tau$, $d$}
\BlankLine
 \tcpp{Factorizations of matrices $\bfH$, $\bfD_h$, $\bfD_v$}
$\bfH= \bfF^{H} \bsLambda \bfF$\;
$\bfD_{\rm{h}} = \bfF^H\bsSigma_{\rm{h}}\bfF$\;
$\bfD_{\rm{v}} = \bfF^H\bsSigma_{\rm{v}}\bfF$\;
 \tcpp{Compute $\EigSq$ and $\bsPsi$}
$\EigSq = [\bsLambda_1,\bsLambda_2, \cdots,\bsLambda_d]$\;
$\bsPsi = (\bsSigma_{\rm{h}}^H\bsSigma_{\rm{h}} + \bsSigma_{\rm{v}}^H\bsSigma_{\rm{v}}+\Qi{\sigma\Id{N_h}})^{-1}$\;
 \tcpp{Calculate FFT of $\bfr$ denoted as $\bfF\bfr$}
$\bfF \bfr = \bfF(\bfH^H\bfS^H\bfy +2\tau\bfD^H\bfv)$\;
 \tcpp{Hadamard (or entrywise) product in the frequency domain}
$\bfx_f = \left[\bsPsi\EigSq^H \left( \mu d\bfI_{N_l} + \EigSq\bsPsi\EigSq^H \right)^{-1}\EigSq\bsPsi\right]\bfF\bfr$\;
 \tcpp{Compute the analytical solution}
$\hat{\bfx} = \frac{1}{2\tau} \left(\bfF^H\bsPsi\bfF \bfr-\bfF^H\bfx_f \right)$\;
\KwOut{$\hat{\bfx}$}
\caption{\AB{FSR with gradient-domain $\ell_2$-regularization: implementation of the analytical solution of \eqref{l2_problem_gradient}}}
\DecMargin{1em}
\end{algorithm}

\section{Generalized fast super-resolution} \label{sec:generalization}
As mentioned previously, a large variety of non-Gaussian regularizations has been proposed for the single image SR problem, in both image or transformed domains. \nd{Many SR algorithms, e.g., \cite{MNg2010SR_TV,Yanovsky2015}, require to solve an $\ell_2-\ell_2$ problem similar to \eqref{l2_problem_generic} as an intermediate step. This section shows that the solution \eqref{eq_anas_gl2} derived in Section \ref{sec:TikAS} can be combined with existing SR iterative methods to significantly lighten their computational costs.}

\subsection{General form of the proposed algorithm \label{subsec:gene_propo}}
\revAQ{In order to use the analytical solution \eqref{eq_anas_gl2}} \nd{derived for the $\ell_2$-regularized SR problem into an ADMM framework,} the problem \eqref{eq_Target_Prob} is rewritten as the following constrained optimization problem
\begin{eqnarray}
&\min_{\bfx,\bfu}& \frac{1}{2} \|\bfy - \bfS\bfH\bfx \|_2^2 + \tau \phi(\bfu) \notag \\
&\textrm{subject to} &\bfA \bfx = \bfu.
\label{eq_Target_Prob_ctr}
\end{eqnarray}
The AL function associated with this problem is
\begin{equation*}
\begin{split}
&\mathcal{L}(\bfx,\bfu,\bslambda)= \\
&\frac{1}{2} \|\bfy - \bfS\bfH\bfu \|_2^2 + \tau \phi(\bfu)
+ \bslambda^T(\bfA\bfx - \bfu) + \frac{\mu}{2} \|\bfA\bfx - \bfu \|_2^2
\end{split}
\end{equation*}
or equivalently
\begin{equation}
\mathcal{L}(\bfx,\bfu,\bfd)= \frac{1}{2} \|\bfy - \bfS\bfH\bfu \|_2^2 + \tau \phi(\bfu)+ \frac{\mu}{2} \|\bfA\bfx - \bfu+\bfd \|_2^2.
\end{equation}
\Qi{To solve problem \eqref{eq_Target_Prob_ctr}, we need to minimize $\mathcal{L}(\bfx,\bfu,\bfd)$ w.r.t. $\bfx$ and $\bfu$ and 
update the scaled dual variable $\bfd$ iteratively as summarized in Algo. \ref{alg3_salsa_fsr}.}

Note that the 3rd step updating the HR image $\bfx$ can be solved analytically using Theorem 1.
The variable $\bfu$ is updated at the 4th step using the Moreau proximity operator whose definition is given by
\begin{equation}
\prox_{\lambda,\phi}(\bsnu) = \argmin_x \phi(\bsx) + \frac{1}{2\lambda} \| \bsx- \bsnu\|^2.
\end{equation}
The generic optimization scheme given in Algo. \ref{alg3_salsa_fsr}, including the non-iterative update of the HR image following Theorem 1, is detailed hereafter for three widely used regularization techniques, namely for the TV regularization \cite{MNg2010SR_TV}, the $\ell_1$-norm regularization in the wavelet domain \cite{JijiCV2004} and the learning-based method in \cite{Yang2010TIP}.

\begin{algorithm}
\label{alg3_salsa_fsr}
\KwIn{$\bfy$, $\bfS$, $\bfH$, $d$, $\tau$\;}
Set $k = 0$, choose $\mu>0$, $\bfu^0$, $\bfd^0$\;
\textbf{Repeat}\\
$\bfx^{k+1}=\argmin_{\bfx} \|\bfy-\bfS\bfH\bfx\|_2^2+\mu\|\bfA\bfx-\bfu^k+\bfd^k \|_2^2$\;
$\bfu^{k+1} = \argmin_{\bfu} \tau \phi(\bfu) + \frac{\mu}{2}\|\bfA\bfx^{k+1}-\bfu+\bfd^k \|_2^2$\;
$\bfd^{k+1} = \bfd^{k} + (\bfA\bfx^{k+1}-\bfu^{k+1})$\;
\textbf{until} stopping criterion is satisfied.
\caption{Proposed generalized fast super-resolution (FSR) scheme}
\DecMargin{1em}
\end{algorithm}

\subsection{TV regularization}
Using a TV prior, problem \eqref{eq_Target_Prob} can be rewritten as
\begin{equation}
\min_{\bfx} \frac{1}{2} \|\bfy - \bfS\bfH\bfx\|_2^2 + \tau\phi(\bfA\bfx)
\label{eq_probform_TV}
\end{equation}
where the regularization term is given by
\begin{equation}
\phi(\bfA\bfx) = \| \bfx\|_{\rm{TV}} = \sqrt{\|\bfD_{\rm{h}}\bfx\|^2 + \|\bfD_{\rm{v}}\bfx\|^2}
\end{equation}
with $\bfA = [\bfD_{\rm{h}},\bfD_{\rm{v}}]^T \in \mathbb{R}^{2N_h\times N_h}$. We can 
solve \eqref{eq_probform_TV} using Algo. \ref{alg3_salsa_fsr}, with the auxiliary variable
$\bfu = [\bfu_{\rm{h}}, \bfu_{\rm{v}}]^T \in \mathbb{R}^{2N_h \times 1}$ such that $\bfA\bfx = \bfu$.
The resulting pseudocodes of the proposed fast SR approach for solving \eqref{eq_probform_TV} are detailed
in Algo. \ref{alg3_admm_TV_fast}, which is reported in Appendix \ref{app:psd_code}.

\subsection{$\ell_1$-norm regularization in the wavelet domain}
Assuming that $\bfx$ can be decomposed as a linear combination of wavelets (e.g., as in\cite{GEM_Bioucas-Dias2006}), the SR can be conducted in the wavelet domain. 
Denote as $\bfx = \bfW\bstheta$ the wavelet decomposition of $\bfx$, where $\bstheta \in \mathbb{R}^{N_h\times1}$ is the vector containing the wavelet coefficients
and multiplying by the matrices $\bfW^H$ and $\bfW$ ($\in \mathbb{R}^{N_h \times N_h}$) represent the wavelet and inverse wavelet \Qi{transforms} 
(satisfying that $\bfW\bfW^H = \bfW^H\bfW=\bfI_{N_h}$). \Qi{The single image SR with $\ell_1$-norm regularization in the wavelet domain 
can be formulated} as follows
\Qi{
\begin{equation}
\min_{\bfx} \frac{1}{2} \|\bfy - \bfS\bfH\bfx\|_2^2 + \tau \| \bfA \bfx \|_1
\label{eq_probform_WT_l1}
\end{equation}
where $\bfA=\bfW^H$.
By introducing the additional variable $\bfu = \bfW^H \bfx$, the problem \eqref{eq_probform_WT_l1} can be solved
using Algo. \ref{alg3_salsa_fsr}. The corresponding pseudocodes of the resulting fast SR algorithm with an $\ell_1$-norm regularization in the wavelet domain
are detailed in Algo. \ref{alg3_admm_l1_fast} in Appendix \ref{app:psd_code}.}

\subsection{Learning-based $\ell_2$-norm regularization}
\AB{The effectiveness of the learning-based regularization for image reconstruction has been proved in several studies. In particular, Yang \textit{et. al.} \cite{Yang2010TIP} solved the single image SR problem by jointly training two dictionaries for the LR and HR image patches and by applying sparse coding (SC). Interestingly, the HR image $\bfx_0$ obtained by sparse coding was projected onto the solution space satisfying \eqref{image_form}, leading to the following optimization problem
\begin{equation}
\hat{\bfx} = \argmin_{\bfx} \frac{1}{2} \|\bfy - \bfS\bfH\bfx\|_2^2 + \tau \|\bfx - \bfx_0\|_2^2.
\label{eq_yang}
\end{equation}
This optimization problem was solved using a gradient descent approach in  \cite{Yang2010TIP}. However, it can benefit from 
the analytical solution provided by Theorem \ref{the:Ubar} that can be implemented using Algo. \ref{Algo:FastSR}.}

\section{Experimental Results}
\AB{This section demonstrates the efficiency of the proposed fast SR strategy by testing it on various images with different regularization terms. The performance of the single image SR algorithms is evaluated in terms of reconstruction quality and computational load. Given the ability of our algorithm to solve the SR problem with less complexity than the existing methods, one may expect a gain in computational time and convergence properties.} All the experiments were performed using MATLAB 2013A on a computer with Windows 7, Intel(R) Core(TM) i7-4770 CPU @3.40GHz and 8 GB RAM\footnote{The MATLAB codes are available in the first author's homepage http://zhao.perso.enseeiht.fr/}. \AB{Color images were processed using the illuminate channel only, as in \cite{Yang2010TIP}. Precisely, the RGB images were transformed into YUV coordinates and the color channels (Cb,Cr) were up-sampled using bicubic interpolation.} In the illuminate channel, the HR image \nd{was blurred and down-sampled in each spatial direction with factors $d_r$ and $d_c$.} The resulting blurred and decimated images were then contaminated by AWGN \nd{of variance $\sigma_n^2$} with a blurred-signal-to-noise ratio defined by
\begin{equation}
\textrm{BSNR}=10 \log_{10} \left(\frac{\|\bfS\bfH\bfx - E(\bfS\bfH\bfx)\|_2^2}{N \sigma_n^2}\right)
\end{equation}
where $N$ is the total number of pixels of the observed image and $E(\cdot)$ is the arithmetic mean operator.

\nd{Unless explicitly specified, the blurring kernel is a $2$D-Gaussian filter of size $9\times 9$ with variance $\sigma_h^2=3$, the decimation factors are $d_r=d_c=4$ and the noise level is BSNR $=30$dB.}

The performances of the different SR algorithms are evaluated both visually and quantitatively in terms of the following metrics: root mean square error (RMSE), \AB{peak signal-to-noise ratio (PSNR)}, improved signal-to-noise ratio (ISNR) and mean structural similarity (MSSIM). The definitions of these metrics, widely used to evaluate image reconstruction methods, are given below

\begin{align}
\textrm{RMSE} &= \sqrt{\|\bfx-\hat{\bfx}\|^2}\\
\textrm{PSNR} &= 20\log_{10}\frac{\textrm{max}(\bfx,\hat{\bfx})}{\textrm{RMSE}} \\
\textrm{ISNR} &= 10\log_{10}\frac{\|\bfx-\bar{\bfy}\|^2}{\|\bfx-\hat{\bfx}\|^2}\\
\textrm{MSSIM} & = \frac{1}{M} \sum_{j=1}^{M} \textrm{SSIM}(\bfx_j,\hat{\bfx}_j)
\end{align}
where the vectors $\bfx,\bar{\bfy},\hat{\bfx}$ are the ground truth (reference image/HR image), the bicubic interpolated image and the restored SR image respectively and $\textrm{max}(\bfx,\hat{\bfx})$ defines the largest value of $\bfx$ and $\hat{\bfx}$.
Note that MSSIM is implemented blockwise, with $M$ the number of local windows, $\bfx_j$ and $\hat{\bfx}_j$ are local regions extracted from $\bfx$ and $\hat{\bfx}$ and SSIM is the structural similarity measure of each window (defined in \cite{Wang2004}). Note that it is nonsensical to compute the ISNR for bicubic interpolation (always be 0) due to its definition.

\subsection{Fast SR using $\ell_2$-regularizations}

\subsubsection{$\ell_2-\ell_2$ model in the image domain}

\paragraph{\revAQ{Gaussian blurring kernel}}
We first explore the single image SR problem with the ``pepper" image and standard Tikhonov/Gaussian regularization corresponding to
the optimization problem formulated in \eqref{l2_problem_image}. The size of the ground truth HR image shown in Fig. \ref{fig_l2_truth} is
$512\times 512$. Fig. \ref{fig_l2_bicubic}--\ref{fig_l2estal_xtrue} show the restored images with bicubic interpolation, the proposed analytical solution given in Algo. \ref{Algo:FastSR} and the splitting algorithm ADMM of \cite{MNg2010SR_TV} adapted to a Gaussian prior. The prior mean image $\bar{\bfx}$ (approximated HR image) is the up-sampled version of the LR image by bicubic interpolation (Case 1) with the results in Fig. \ref{fig_l2est_xup} and \ref{fig_l2estal_xup}, whereas  $\bar{\bfx}$ is the ground truth (Case 2)  with the results in Fig. \ref{fig_l2est_xtrue} and \ref{fig_l2estal_xtrue}. The regularization parameter was $\tau=1$ in Case 1 and $\tau=0.1$ in Case 2. The numerical results corresponding to this experiment are summarized in Table \ref{tab_l2}. The visual impression and the numerical results show that the reconstructed HR images obtained with our method are similar to those obtained with ADMM. However, the  proposed FSR method performs much faster than ADMM. More precisely, the computational time with our method is divided by a factor of $60$ for Case 1 and by a factor of $80$ for Case 2. Note also that the restored images obtained with Case 2 ($\bar{\bfx}$ set to the ground truth) are visually much better than the ones obtained with Case 1 ($\bar{\bfx}$ equal to the interpolated LR image), as expected.

\begin{figure*}[t!]
\begin{center}
\subfigure[Observation$^\dag$]{\includegraphics[width=0.18\linewidth]{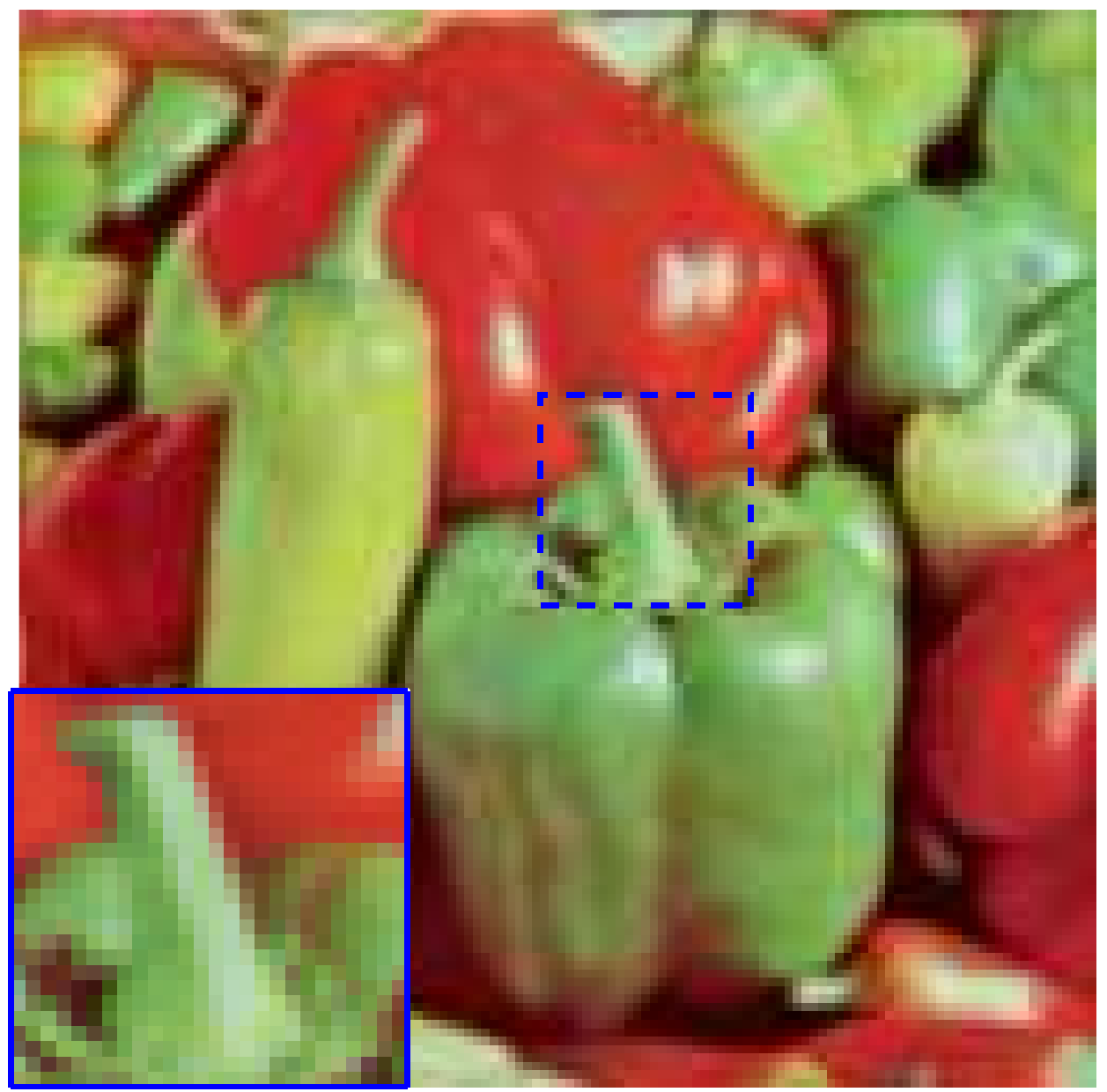} \label{fig_l2_obs}}
\subfigure[Ground truth]{\includegraphics[width=0.18\linewidth]{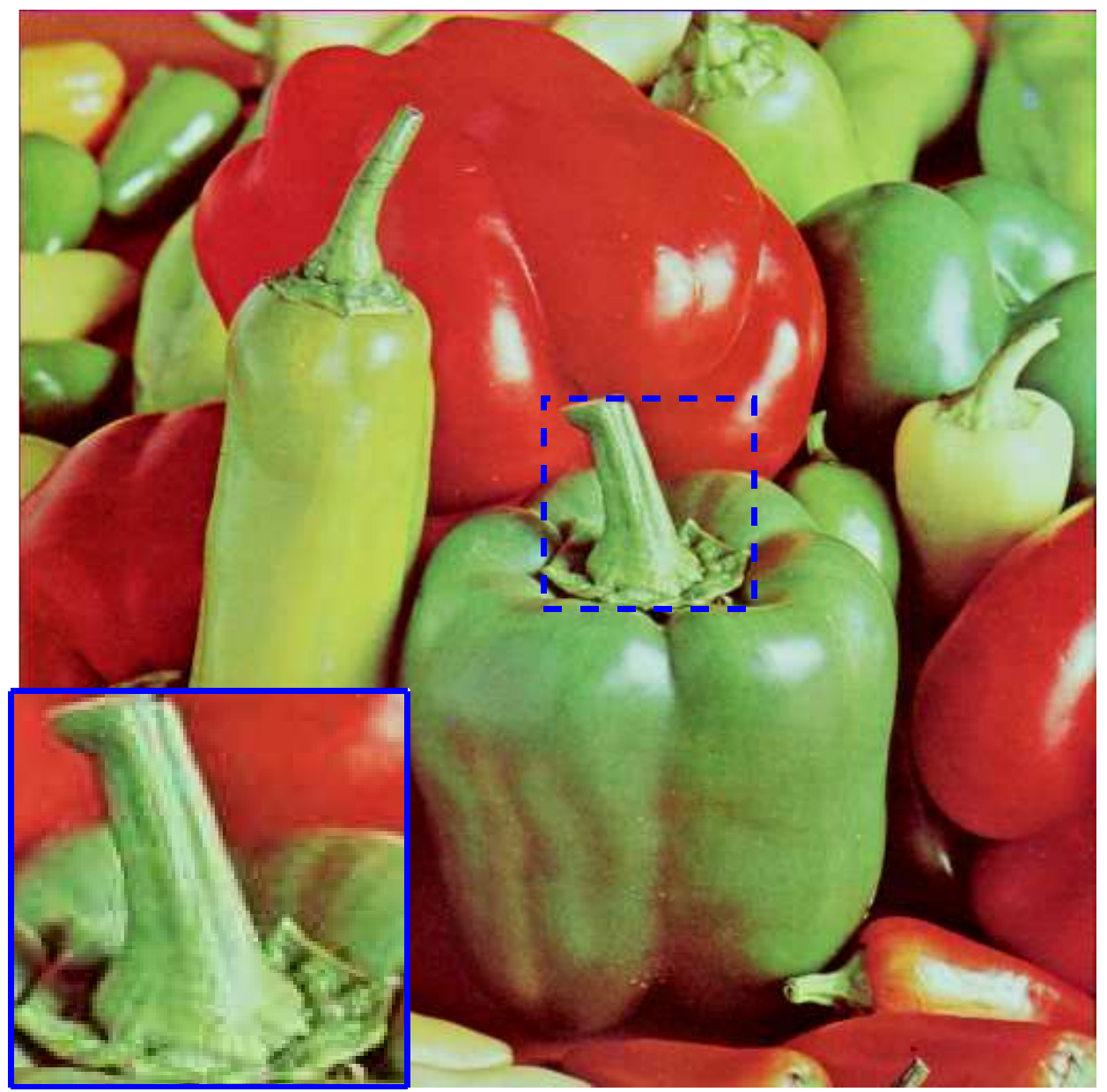} \label{fig_l2_truth}}
\subfigure[Bicubic interpolation]{\includegraphics[width=0.18\linewidth]{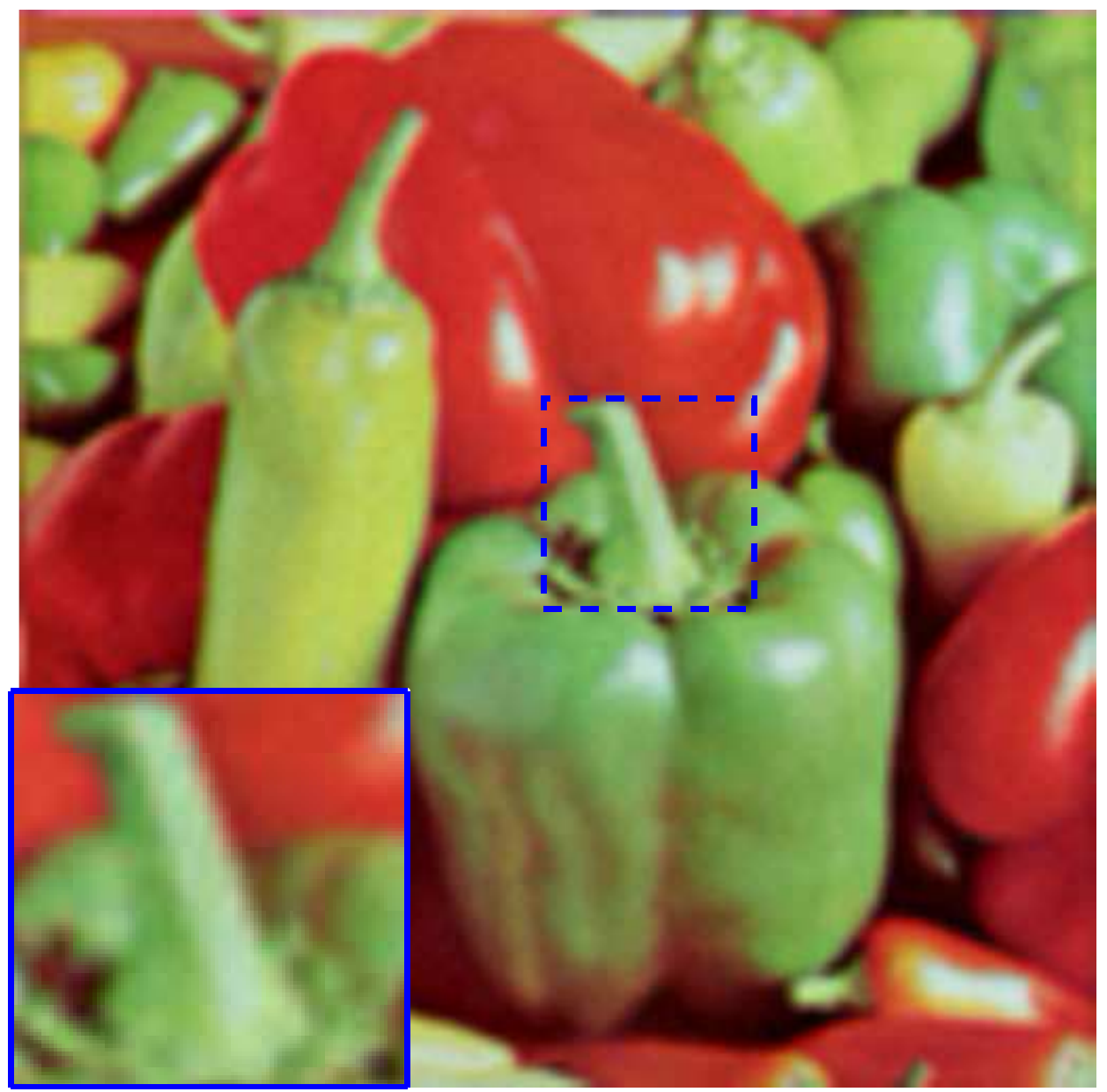} \label{fig_l2_bicubic}}\\
\subfigure[Case 1: ADMM]{\includegraphics[width=0.18\linewidth]{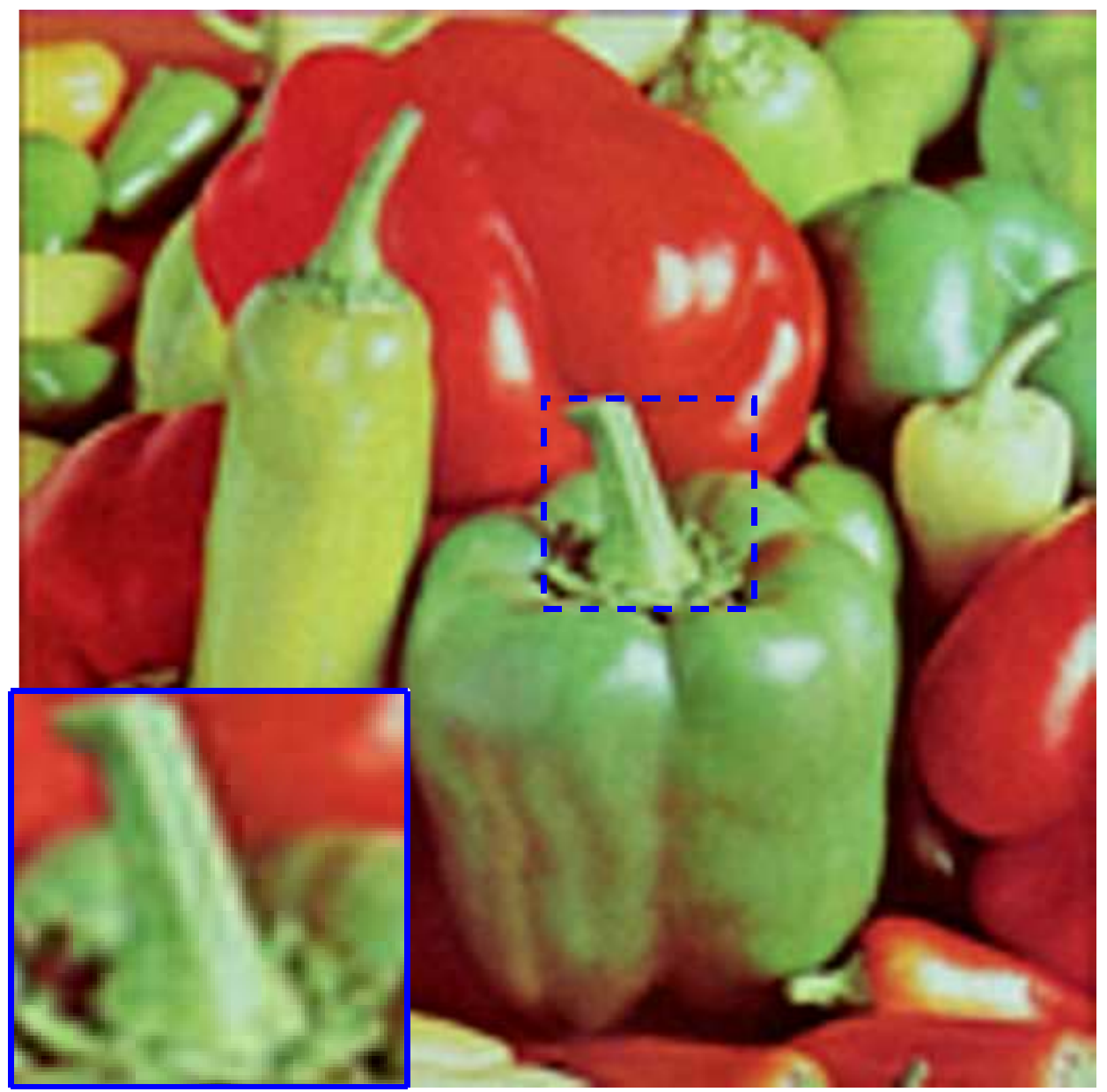} \label{fig_l2estal_xup}}
\subfigure[Case 1: Algo. \ref{Algo:FastSR}]{\includegraphics[width=0.18\linewidth]{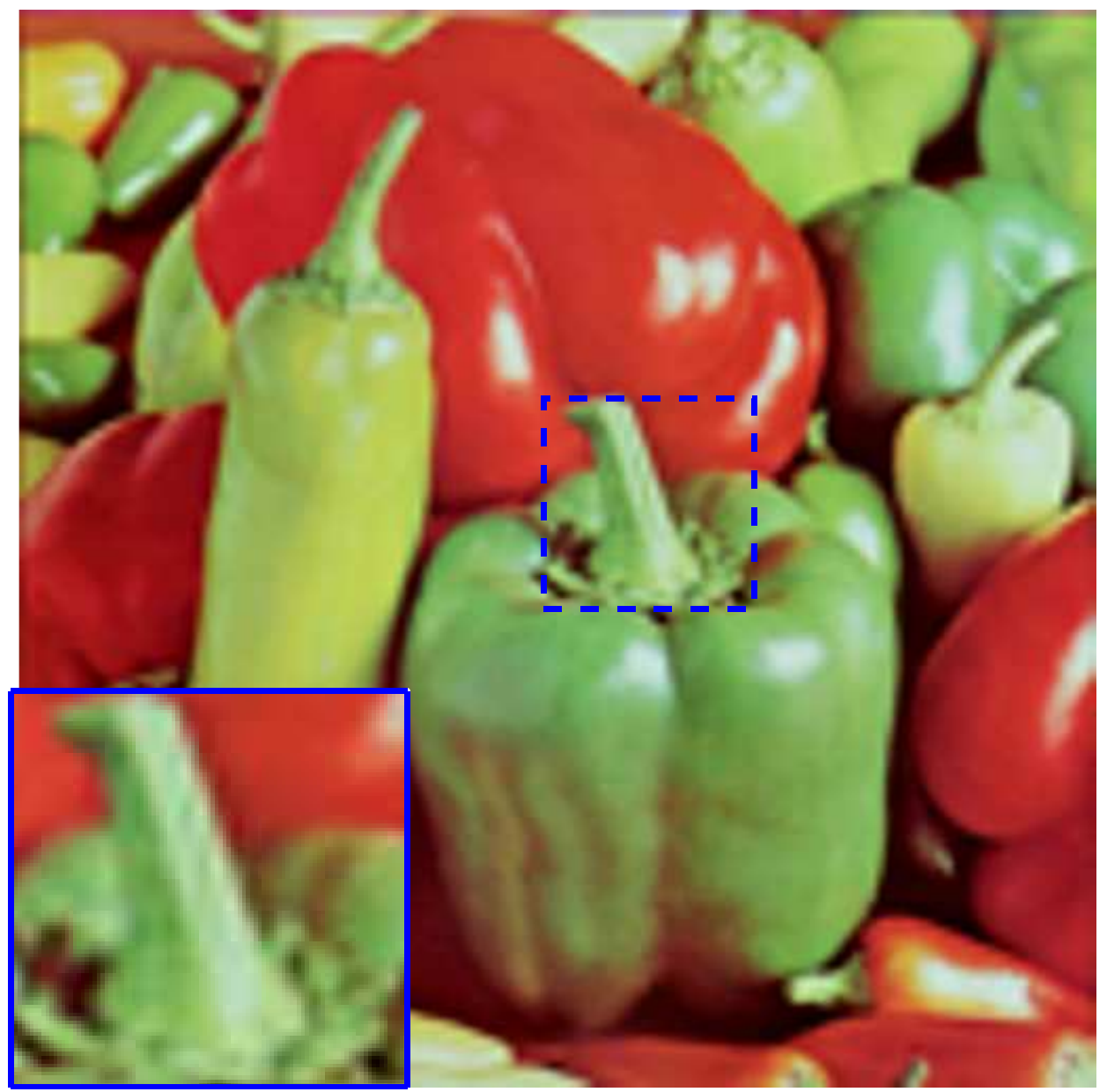} \label{fig_l2est_xup}}
\subfigure[Case 2: ADMM]{\includegraphics[width=0.18\linewidth]{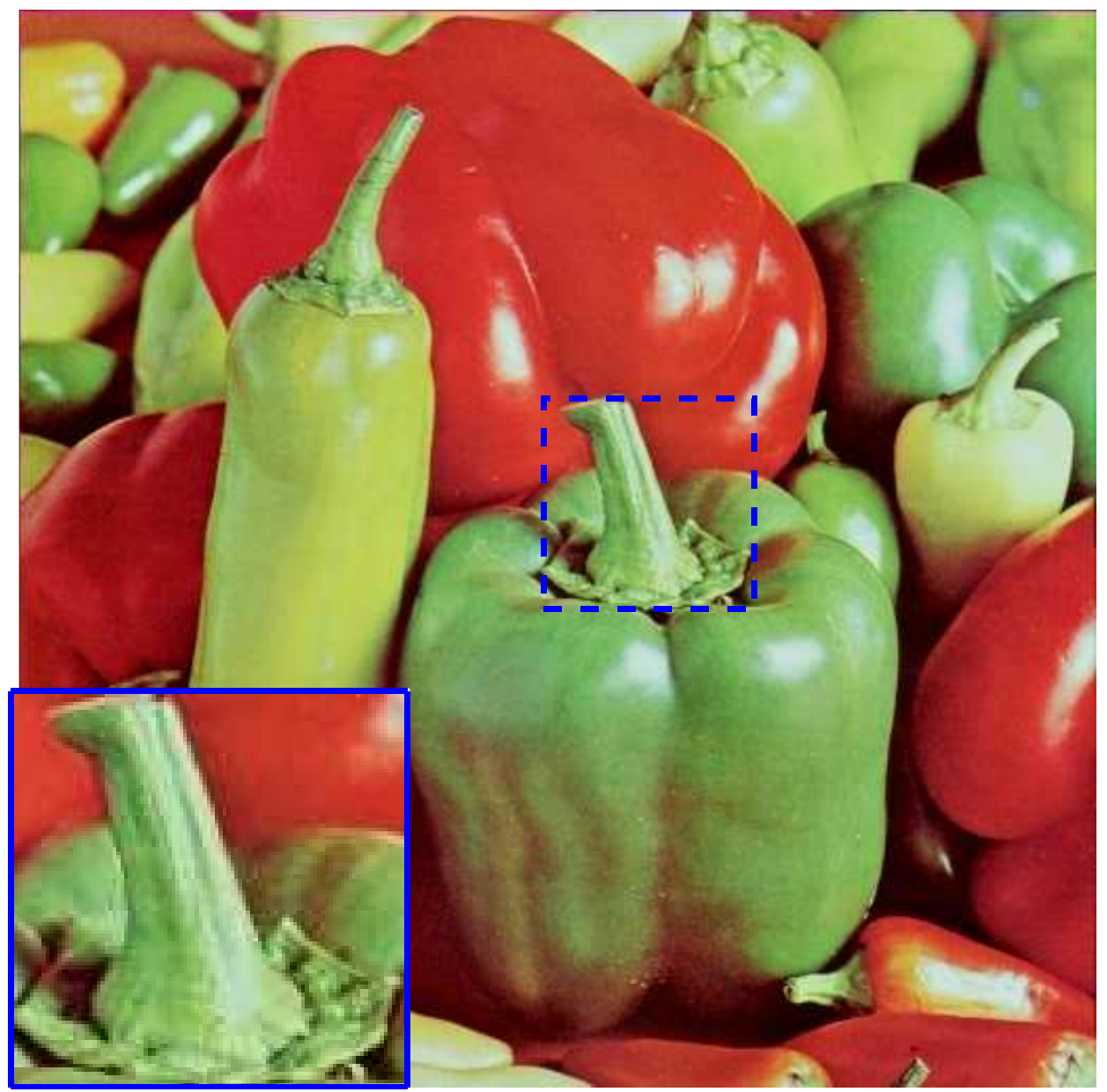} \label{fig_l2estal_xtrue}}
\subfigure[Case 2: Algo. \ref{Algo:FastSR}]{\includegraphics[width=0.18\linewidth]{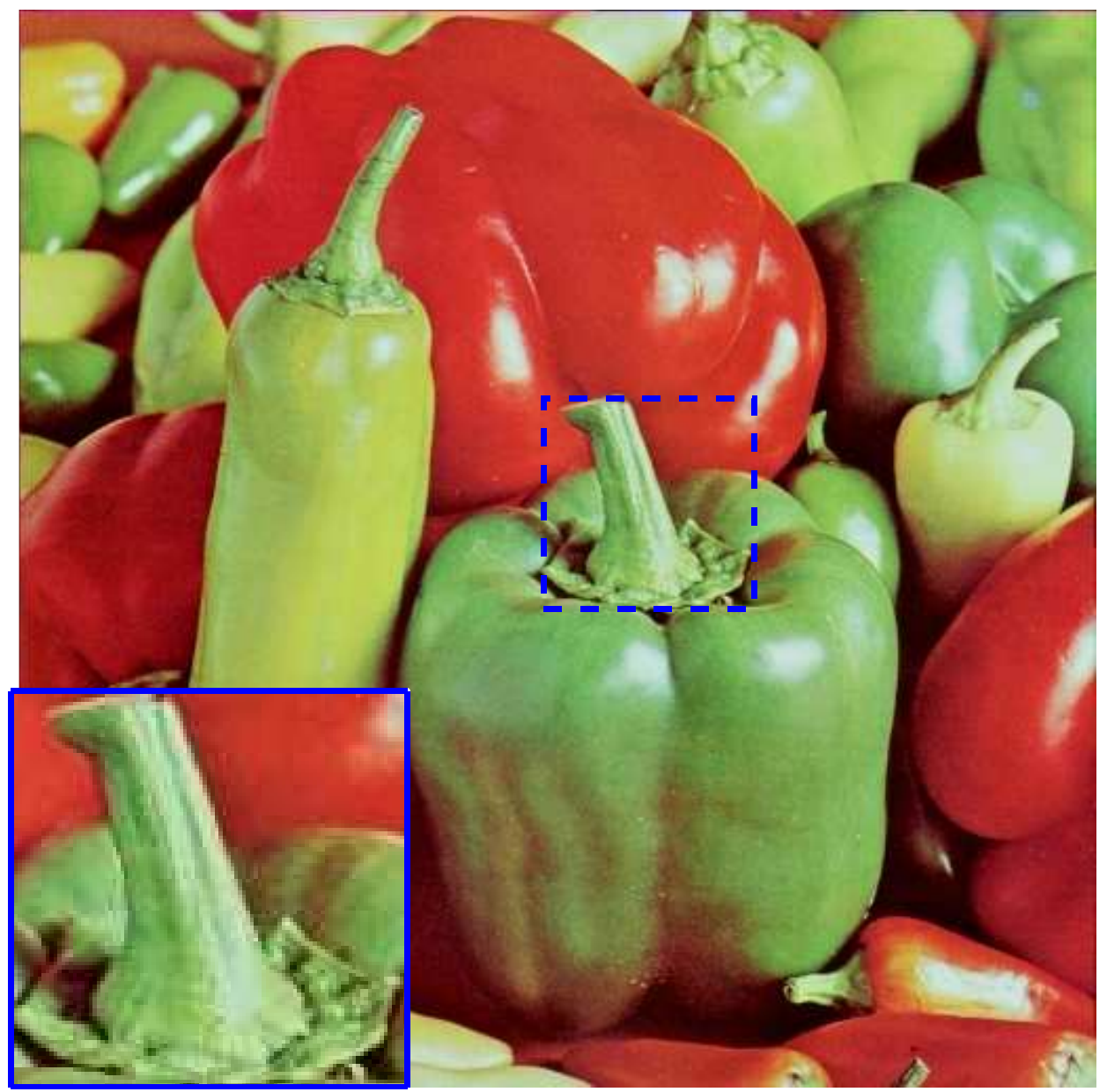} \label{fig_l2est_xtrue}}
\caption[The LOF caption]{SR of the pepper image when considering an $\ell_2-\ell_2$-model in the image domain: visual results. The prior image mean $\bar{\bfx}$ is defined as the bicubic interpolated LR image in Case 1 and as the ground truth HR image in Case 2.}
\label{ex1_pepper}
\end{center}
{\footnotesize \rule{0in}{1.2em}$^{\dag}$\scriptsize Note that the LR images have been scaled for better visualization in this figure (i.e., the actual LR images contain $d$ times fewer pixels than the corresponding HR images).}
\end{figure*}

\begin{table}[h!]
\setlength{\tabcolsep}{3pt}
\begin{center}
\caption{SR of the pepper image when considering an $\ell_2-\ell_2$-model in the image domain: quantitative results.}
\label{tab_l2}
\begin{tabular}{|c|c|c|c|c|}
\hline
 Method  & PSNR (dB) &ISNR (dB) & MSSIM  & Time (s.) \\
\hline
\hline
Bicubic  & 25.37 & -      & 0.59 & 0.002 \\
\hline
\multicolumn{5}{|c|}{Case 1} \\ \hline
ADMM     & 29.26 & 4.01    & 0.67 & 1.92 \\
Algo. \ref{Algo:FastSR} & 29.27 & 4.01    & 0.67 & \textbf{0.02} \\
\hline
\multicolumn{5}{|c|}{Case 2} \\ \hline
ADMM     & 53.84 & 29.27   & 1    & 0.5 \\
Algo. \ref{Algo:FastSR} & 53.74 & 29.55   & 1    & \textbf{0.02} \\
\hline
\end{tabular}
\end{center}
\end{table}

\nd{The performance of the proposed method has been also evaluated with various experimental parameters, namely, the BSNR level, the size of the blurring kernel and the decimation factors. The corresponding RMSEs are depicted in Fig. \ref{ex1_pepper_curve} as functions of the regularization parameter $\tau$ for the two considered scenarios (Cases 1 and 2). Note that the same performance is obtained by the ADMM-based SR technique since it solves the same optimization problem.}

\begin{figure*}
\settoheight{\tempdima}{\includegraphics[width=.25\linewidth]{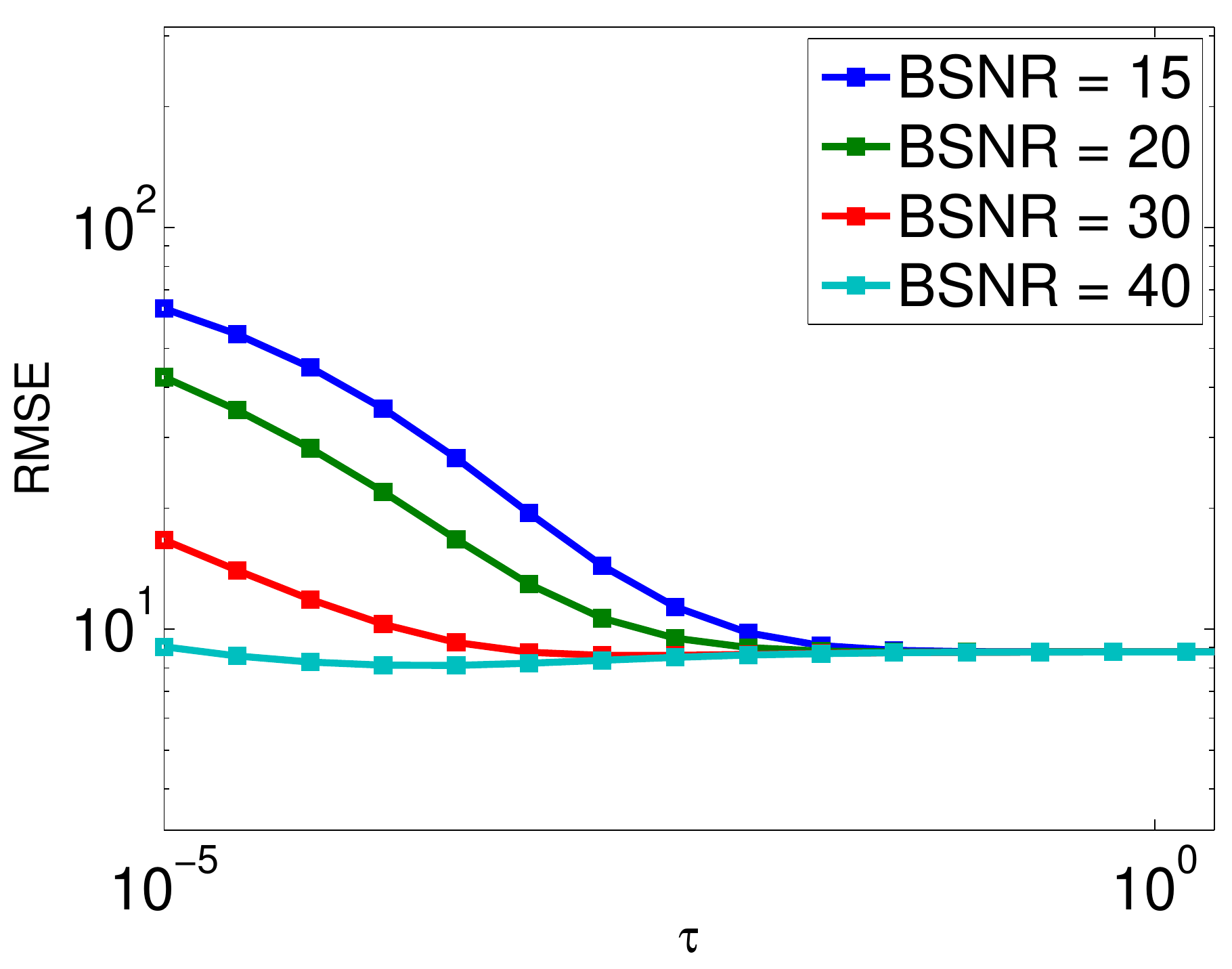}}%
\centering\begin{tabular}{@{}c@{}c@{}c@{}c@{}}
 \rowname{Case 1 }&
\includegraphics[width=0.25\linewidth]{figures/ex1_gaussian/case1_rmse_bsnr}
\label{case1_rmse_bsnr}&
\includegraphics[width=0.25\linewidth]{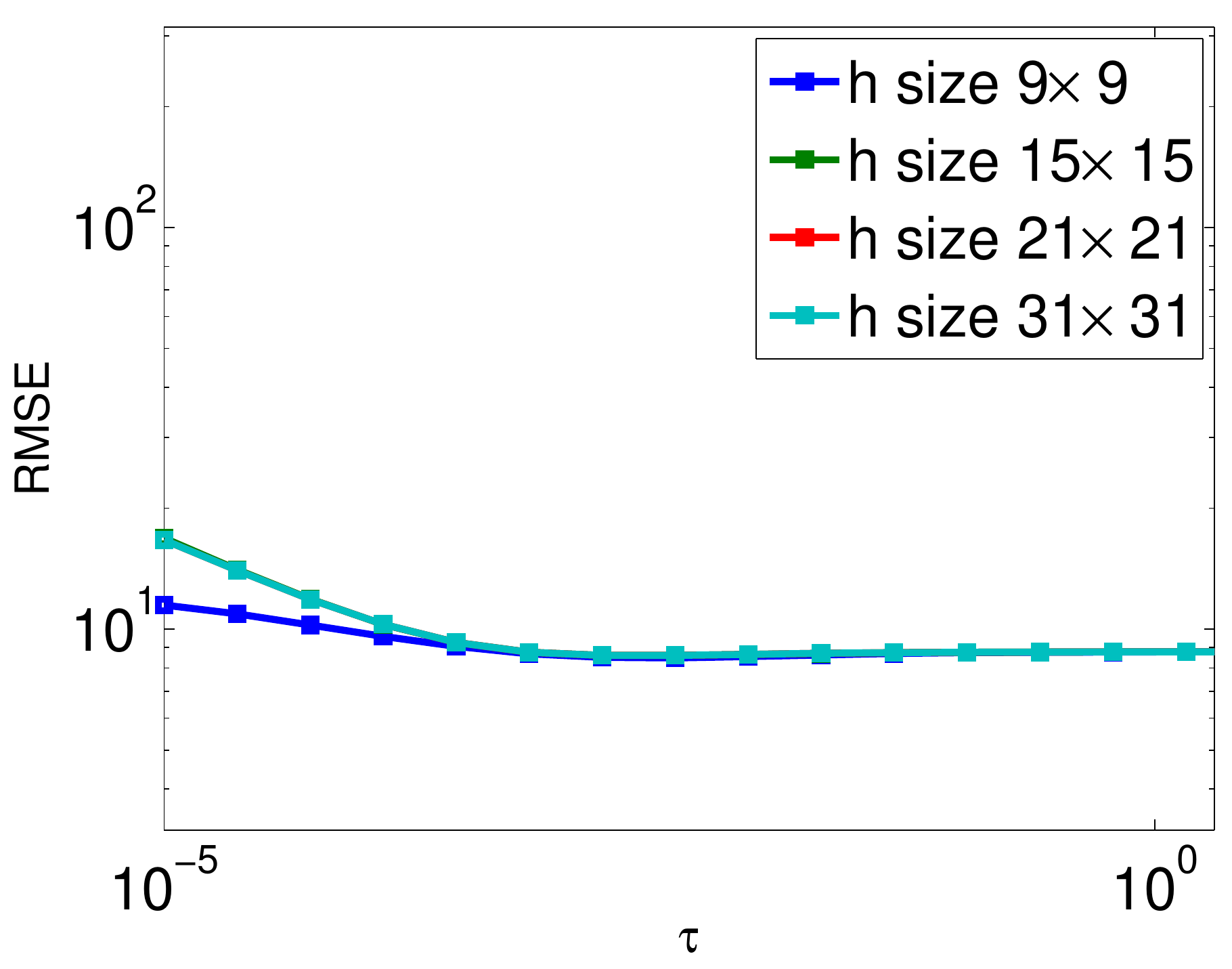} \label{case1_rmse_h}&
\includegraphics[width=0.25\linewidth]{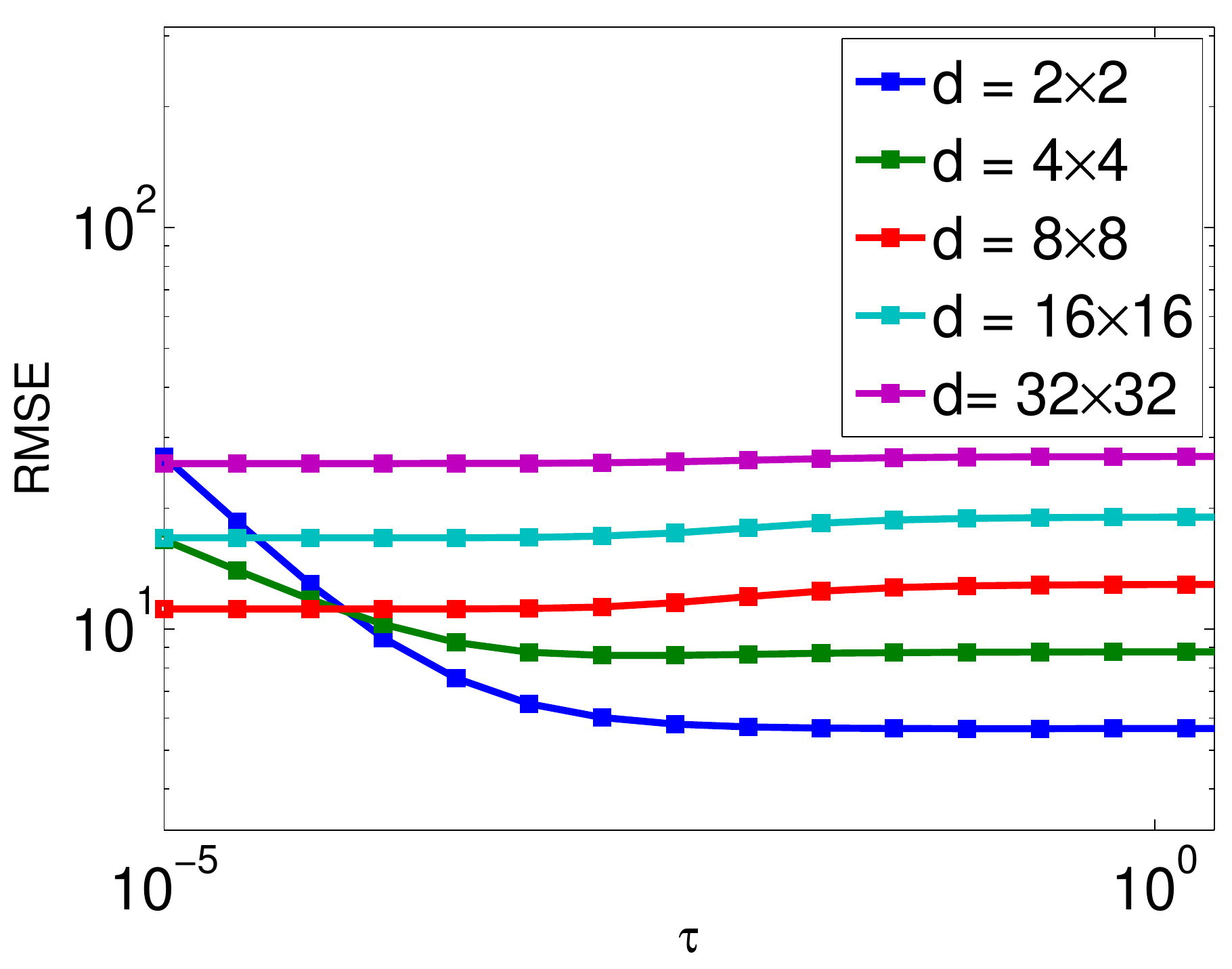} \label{case1_rmse_d}\\
\rowname{Case 2}&
\includegraphics[width=0.25\linewidth]{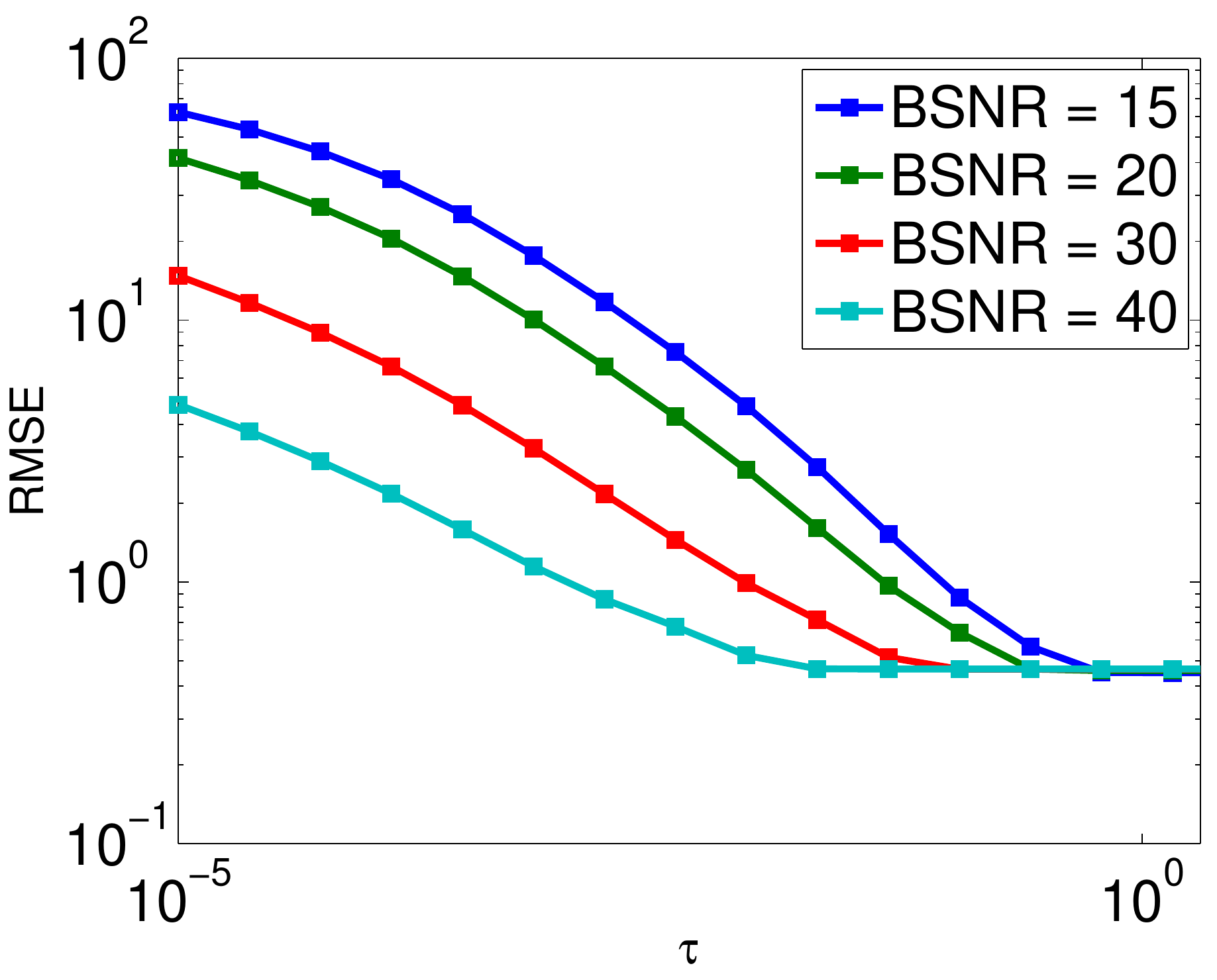} \label{case2_rmse_bsnr}&
\includegraphics[width=0.25\linewidth]{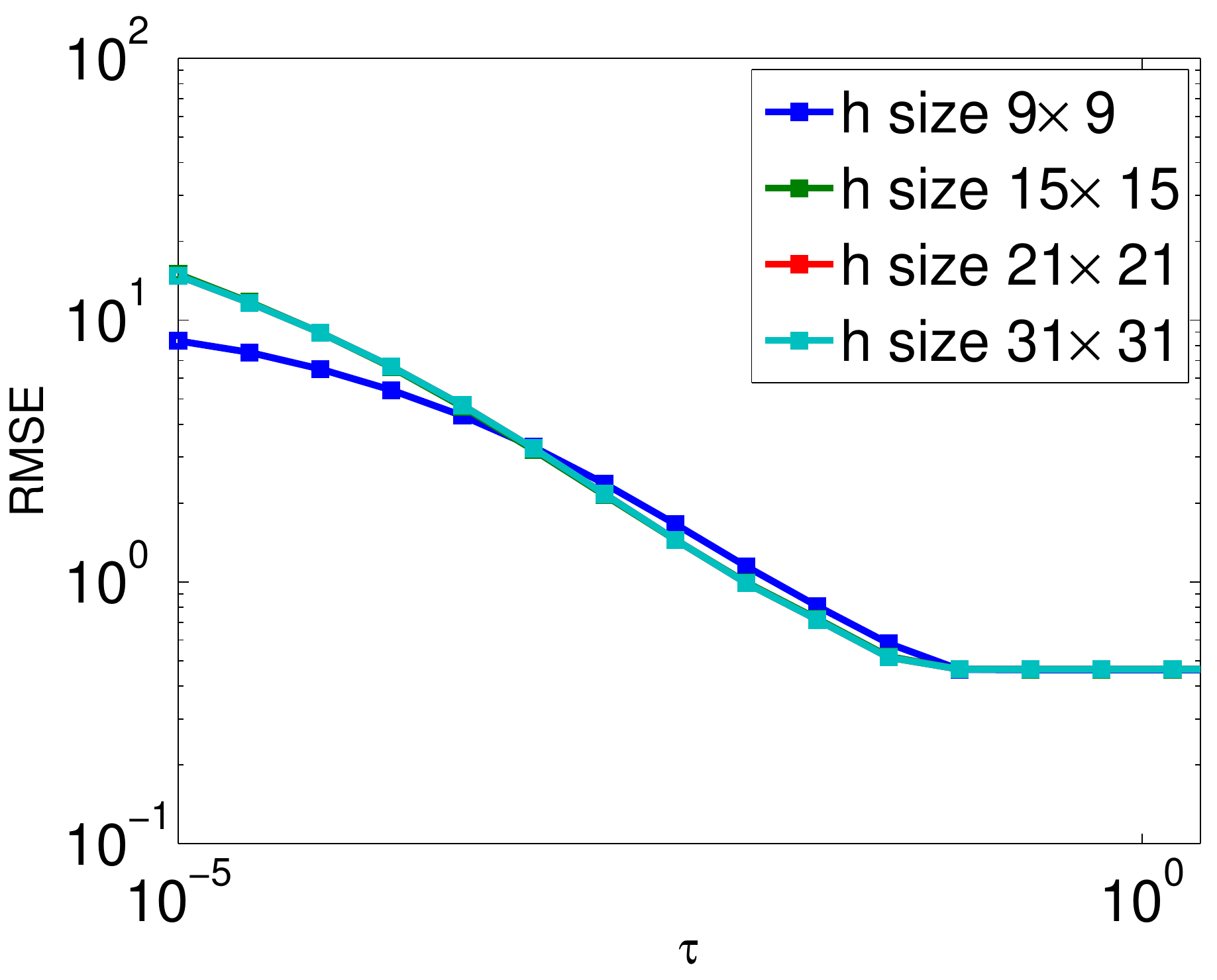} \label{case2_rmse_h}&
\includegraphics[width=0.25\linewidth]{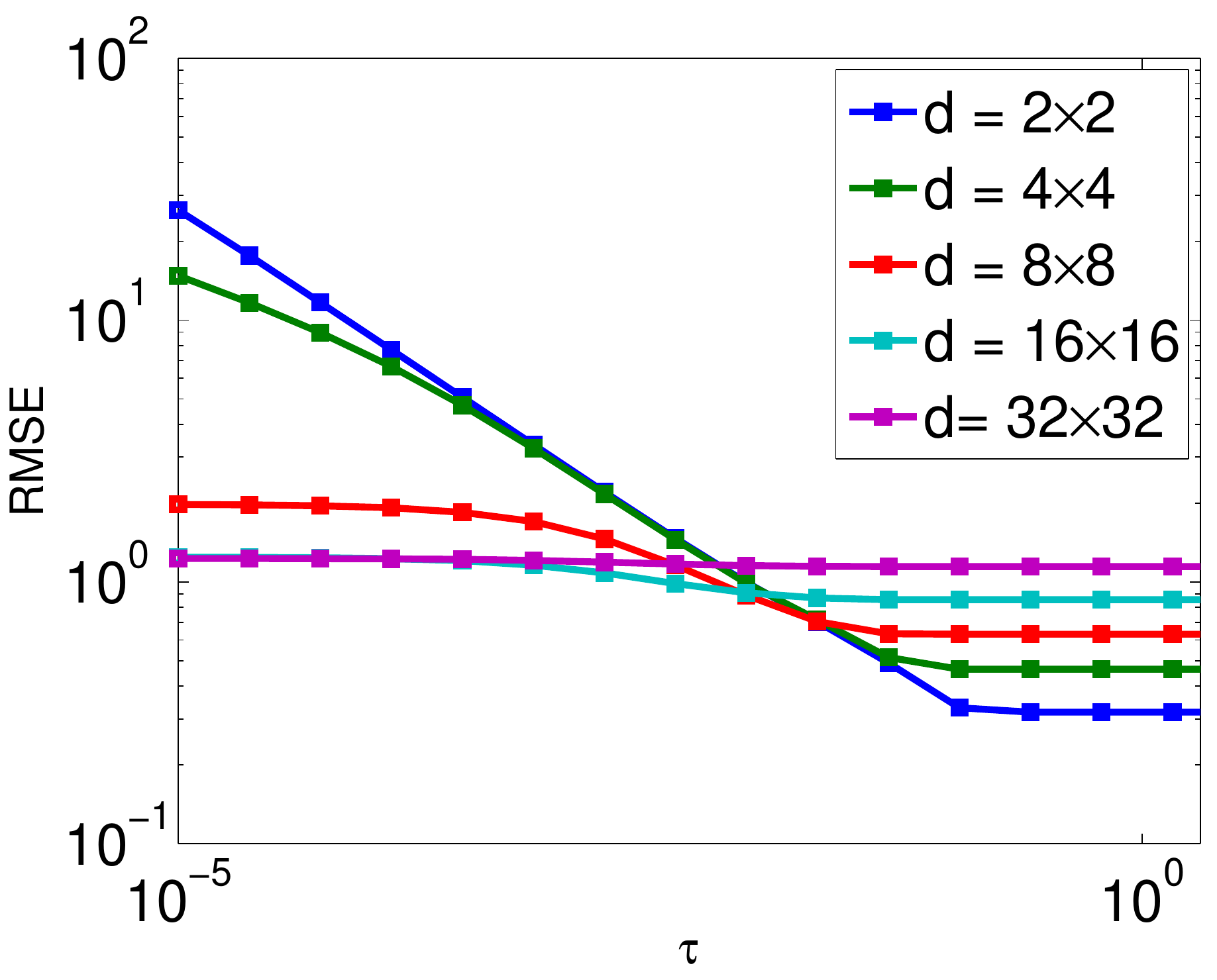} \label{case2_rmse_d}
\end{tabular}
\caption{SR of the ``pepper" image when considering the $\ell_2-\ell_2$ model in the image domain: RMSE as functions of the regularization parameter $\tau$ for various noise levels (1st column), blurring kernel sizes (2nd column) and decimation factors (3rd column). The results in the 1st column were obtained for $d_r=d_c=4$ and $9\times9$ kernel size; in the 2nd column, $d_r=d_c=4$ and BSNR$=30$ dB; in the 3rd column, the kernel size was $9\times9$ and BSNR$=30$ dB.}
\label{ex1_pepper_curve}
\end{figure*}

%

\paragraph{\revAQ{Motion blurring kernel}}
\nd{This paragraph considers a dataset composed of images that have been captured by a camera placed on a tripod, whose Z-axis rotation handle has been locked and X- and Y-axis rotation handles have been loosen \cite{Levin2009}. The corresponding dataset is available online\footnote{Available online at \url{http://www.wisdom.weizmann.ac.il/~levina/papers/LevinEtalCVPR09Data.zip}}. The observed LR image, motion kernel and corresponding SR results are shown in Fig. \ref{ex1_motion}. The size of the motion kernel is $19\times 19$. As in the previous paragraph, the prior mean image $\bar{\bfx}$ is the bicubic interpolation of the LR image in Case 1, while $\bar{\bfx}$ is the ground truth in Case 2. The regularization parameter is set to $\tau=0.01$ and $\tau=0.1$ in Cases 1 and 2, respectively. Quantitative results are reported in Table \ref{tab_motion_blur} and show that the proposed method provides competitive results w.r.t. the other methods, while being more computational efficient.}

\begin{figure*}[t!]
\begin{center}
\subfigure[Observation]{\includegraphics[width=0.235\linewidth]{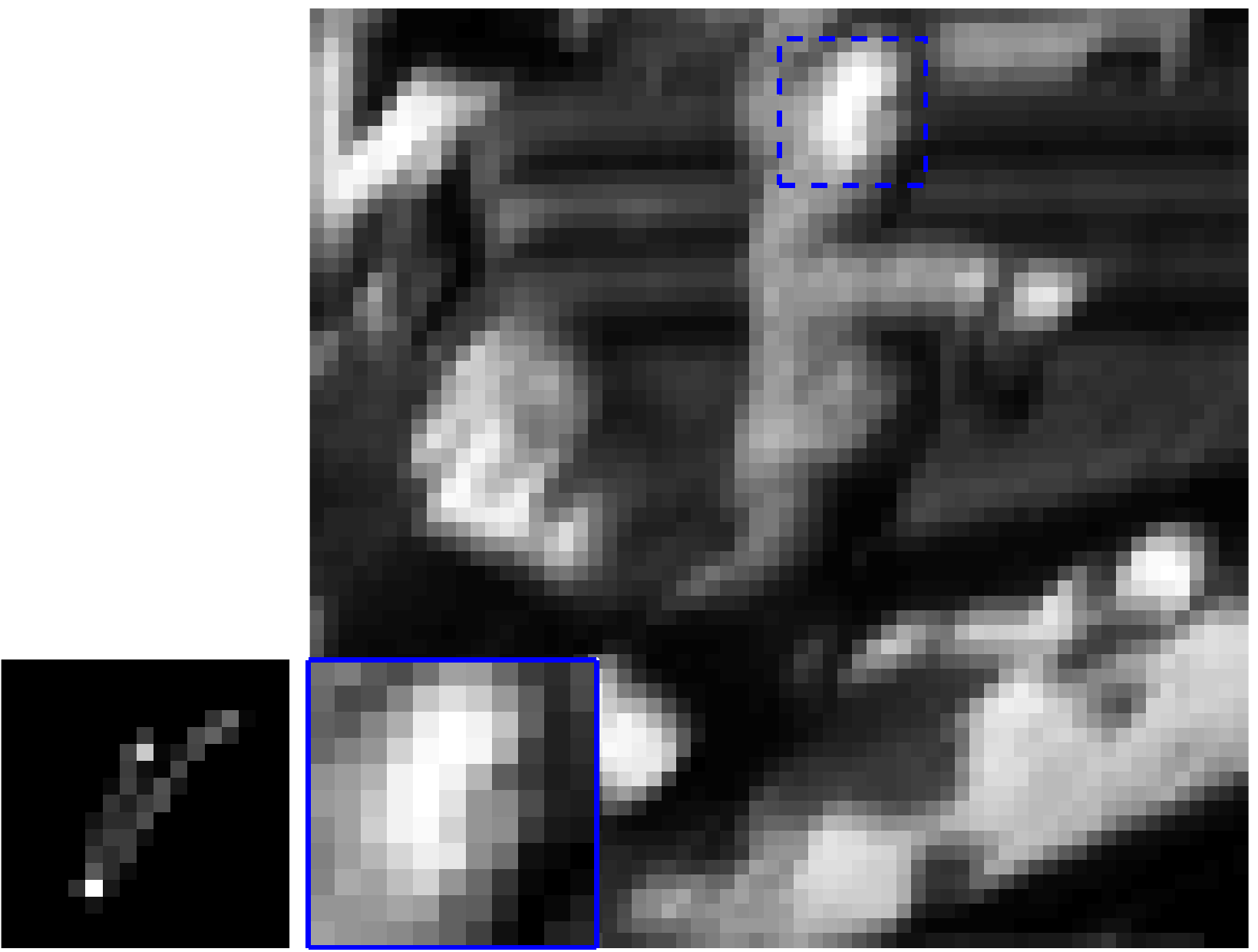}}
\subfigure[Ground truth]{\includegraphics[width=0.18\linewidth]{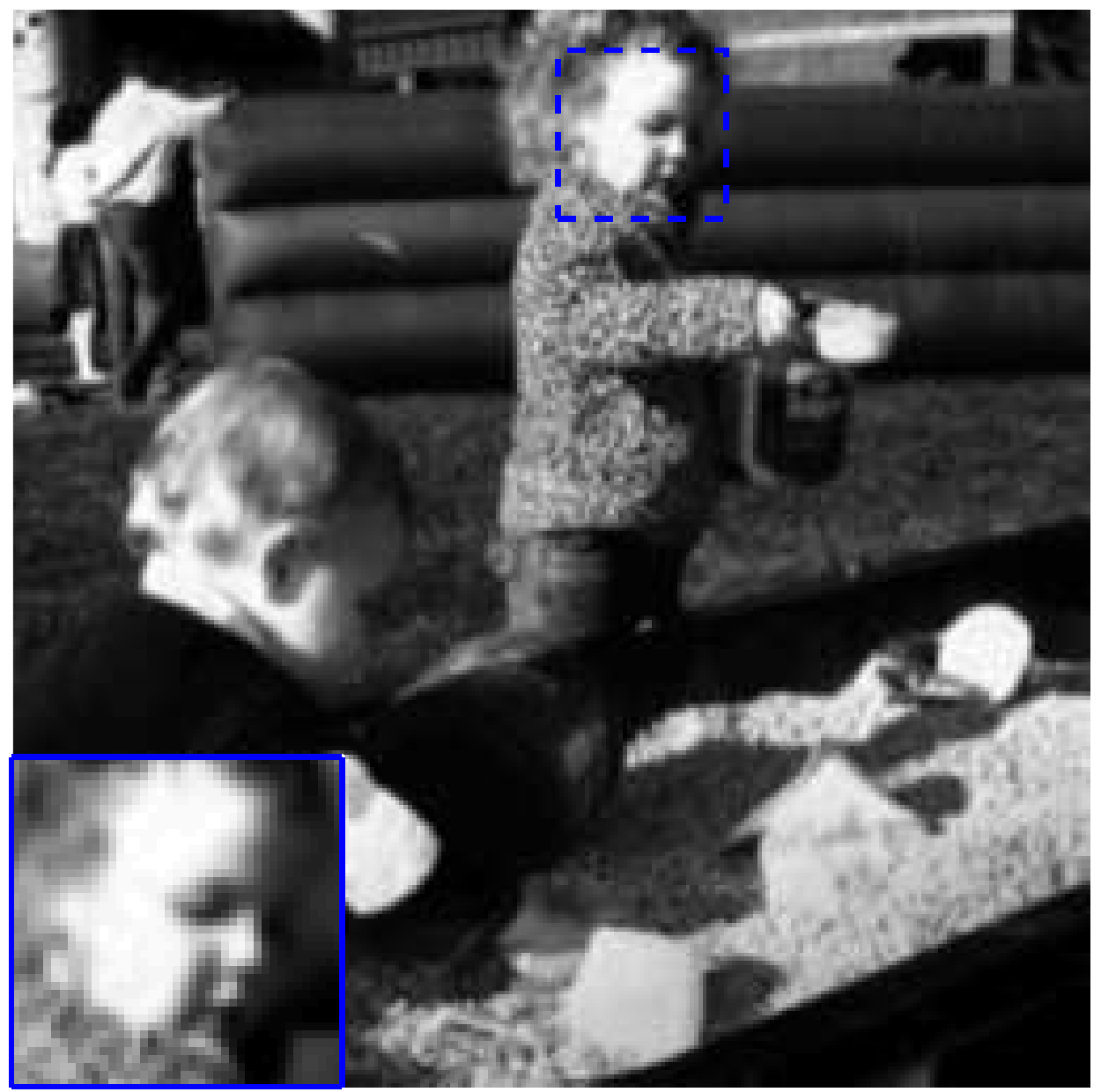}}
\subfigure[Bicubic interpolation]{\includegraphics[width=0.18\linewidth]{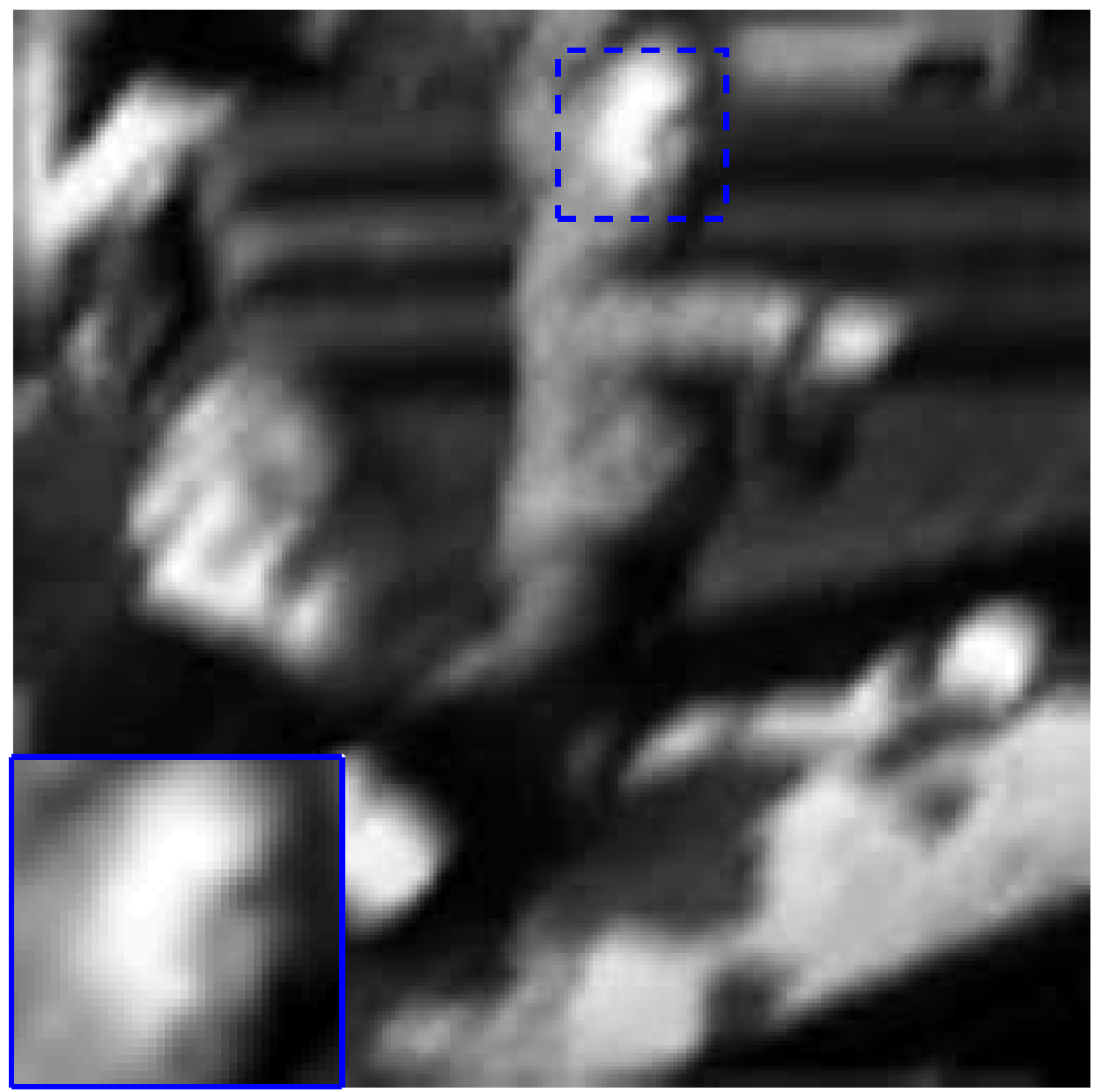}}\\
\hspace{0.7cm}
\subfigure[Case 1: ADMM]{\includegraphics[width=0.18\linewidth]{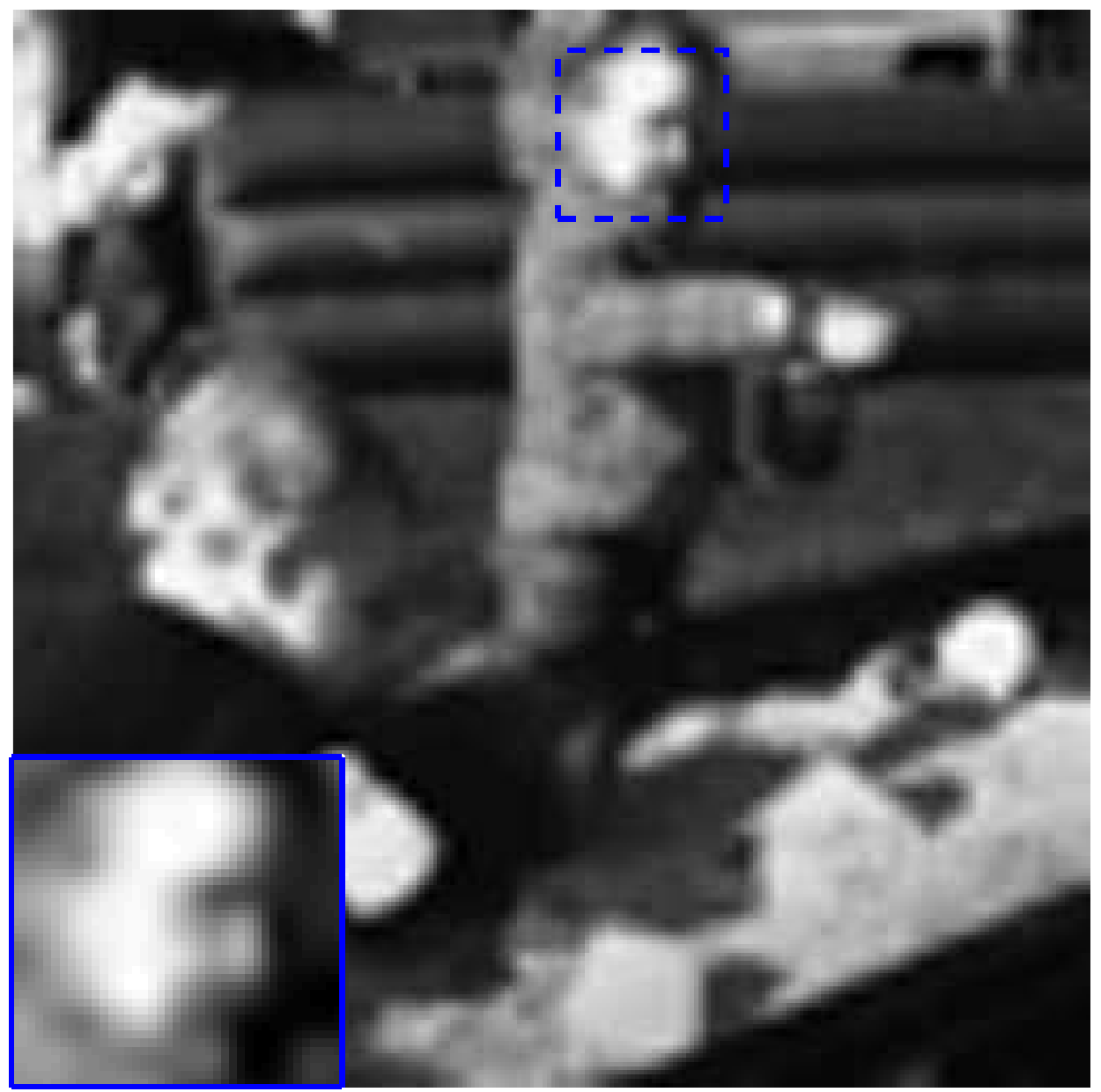}}
\subfigure[Case 1: Algo. \ref{Algo:FastSR}]{\includegraphics[width=0.18\linewidth]{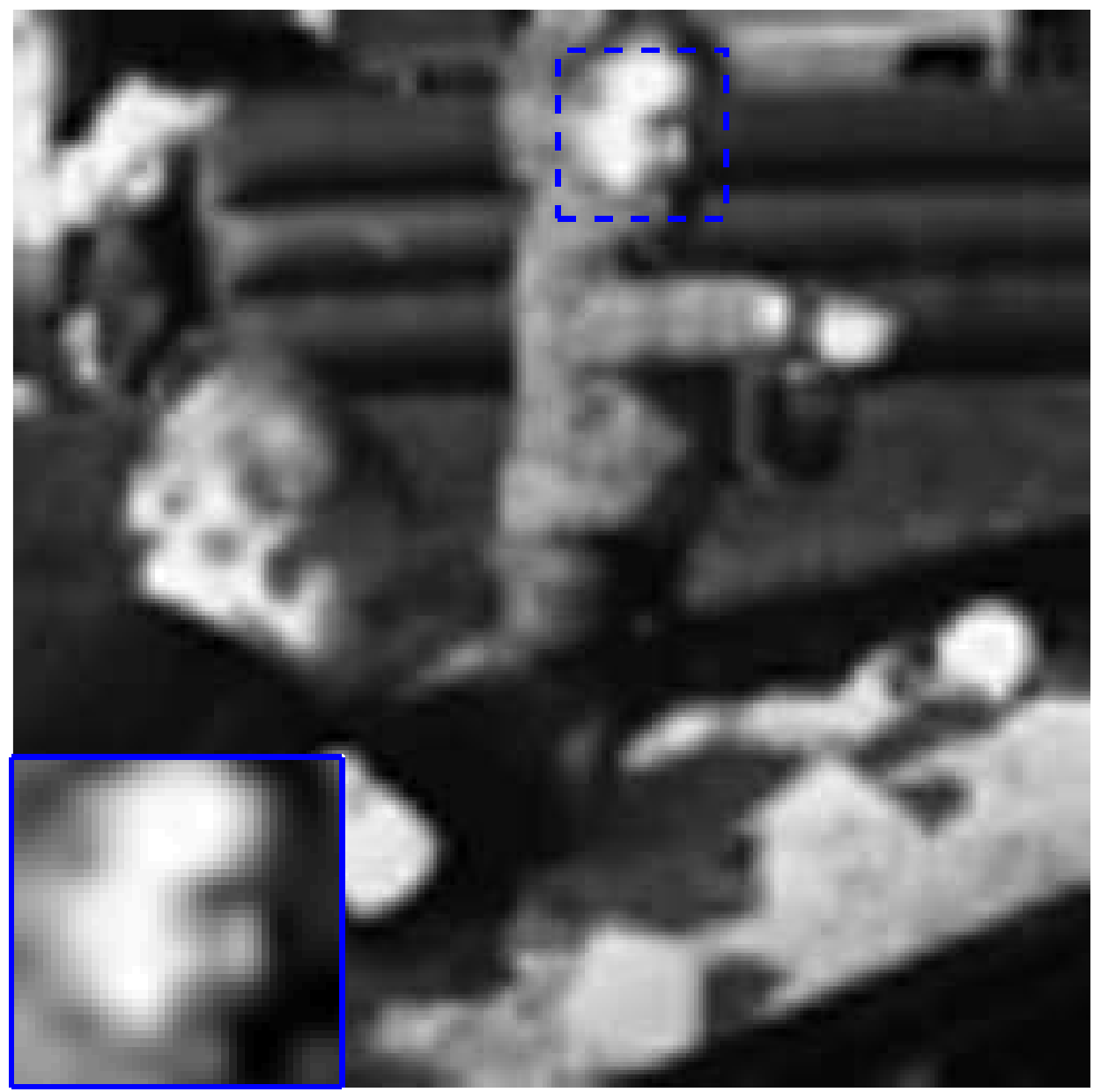}}
\subfigure[Case 2: ADMM]{\includegraphics[width=0.18\linewidth]{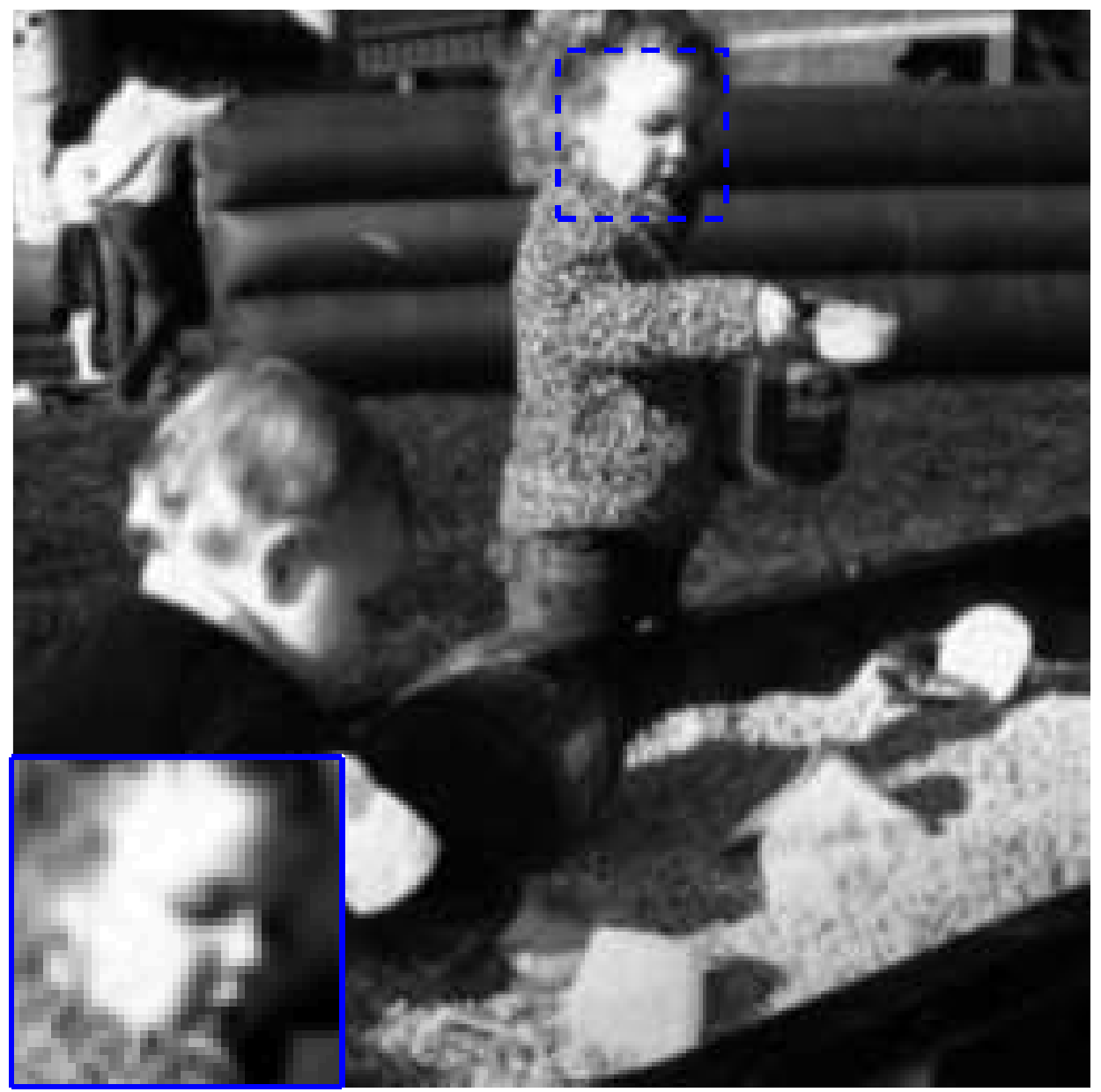} }
\subfigure[Case 2: Algo. \ref{Algo:FastSR}]{\includegraphics[width=0.18\linewidth]{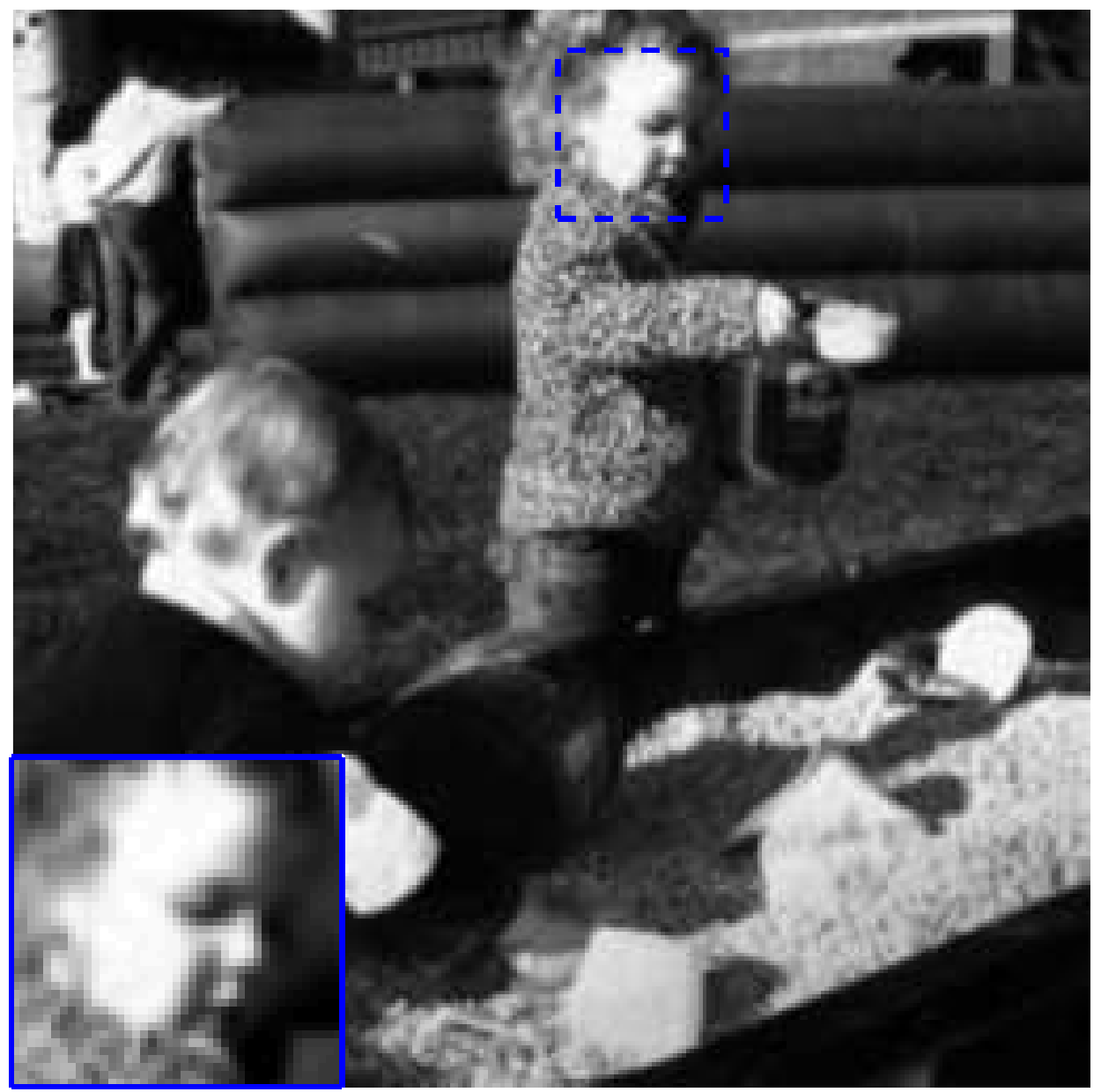} }
\caption[The LOF caption]{SR of the motion blurred image when considering an $\ell_2-\ell_2$-model in the image domain: visual results. The prior image mean $\bar{\bfx}$ is defined as the bicubic interpolated LR image in Case 1 and as the ground truth HR image in Case 2.}
\label{ex1_motion}
\end{center}
\end{figure*}


\begin{table}[h!]
\setlength{\tabcolsep}{3pt}
\begin{center}
\caption{SR of the motion blurred image when considering an $\ell_2-\ell_2$-model in the image domain: quantitative results.}
\label{tab_motion_blur}
\begin{tabular}{|c|c|c|c|c|}
\hline
 Method  & PSNR (dB) &ISNR (dB) & MSSIM  & Time (s.) \\
\hline
\hline
Bicubic  & 21.15 & -      & 0.91 & 0.002 \\
\hline
\multicolumn{5}{|c|}{Case 1} \\ \hline
ADMM      & 27.11 & 5.96 & 0.96& 0.11\\
Algo. \ref{Algo:FastSR} & 27.11  & 5.96    & 0.96 & \textbf{0.01} \\
\hline
\multicolumn{5}{|c|}{Case 2} \\ \hline
ADMM      & 53.23 & 32.08   & 1    & 0.42 \\
Algo. \ref{Algo:FastSR} & 53.23 & 32.08      & 1    & \textbf{0.01} \\
\hline
\end{tabular}
\end{center}
\end{table}


\subsubsection{$\ell_2-\ell_2$ model in the gradient domain}
\nd{This section compares the performance of the proposed fast SR strategy with the gradient profile regularization proposed in \cite{SunJ_CVPR_2008}. As shown in Section \ref{subsec:l2_general}, Theorem 1 allows the analytical SR solution to be computed. The ``face" image (of size $276\times 276$) shown in Fig. \ref{fig_gp_HR} was used for these tests. In this experiment, $\bar{\nabla \bfx}$ is calculated using the reference HR image and the regularization parameters have been set to $\tau=10^{-3}$ and $\sigma=10^{-8}$. The proposed method is compared with the ADMM as well as the CG method instead of the gradient descent (GD) method initially proposed in \cite{SunJ_TIP_2011} (since CG has shown to be much more efficient than GD in this experiment). The restored images using bicubic interpolation, ADMM, the CG method and the proposed Algo. \ref{Algo_FSR_Gl2} are shown in Fig. \ref{fig_gp_bicubic}-\ref{fig_gp_gd}. The corresponding numerical results are reported in Table \ref{tab_gp}. These results illustrate the superiority of the approach in terms of computational time. This significant difference can be explained by the non-iterative nature of the proposed method compared to CG and ADMM.} Moreover, all the three methods converge to the same global minima as shown by the objective curves in Fig. \ref{ex1_face_obj}. The convergence of the objective curves is in agreement with the visual and numerical results.

\begin{figure}[h!]
\begin{center}
\subfigure[Observation]{\includegraphics[width=0.2\linewidth]{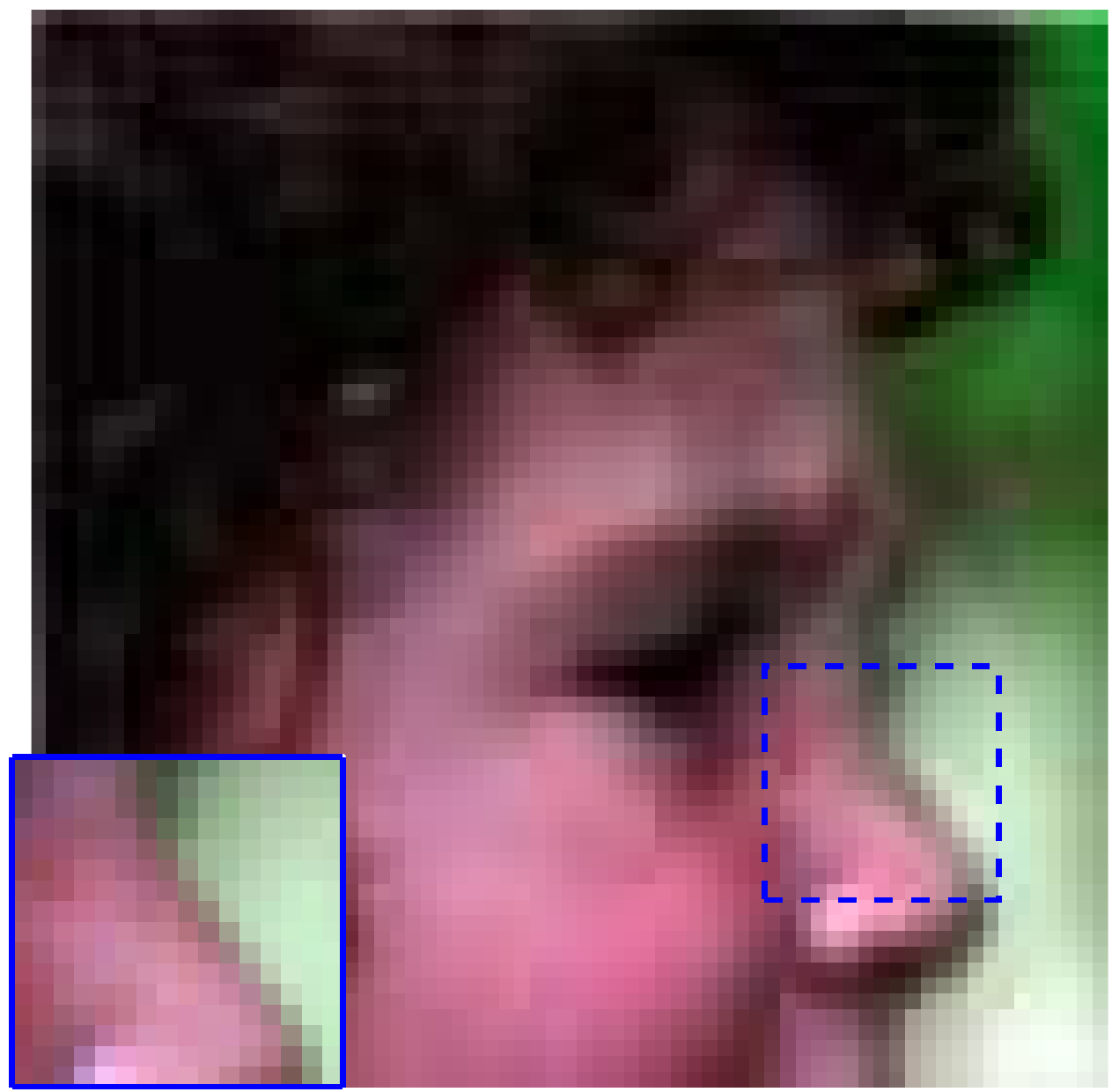} \label{fig_gp_LR}}
\subfigure[Ground truth]{\includegraphics[width=0.2\linewidth]{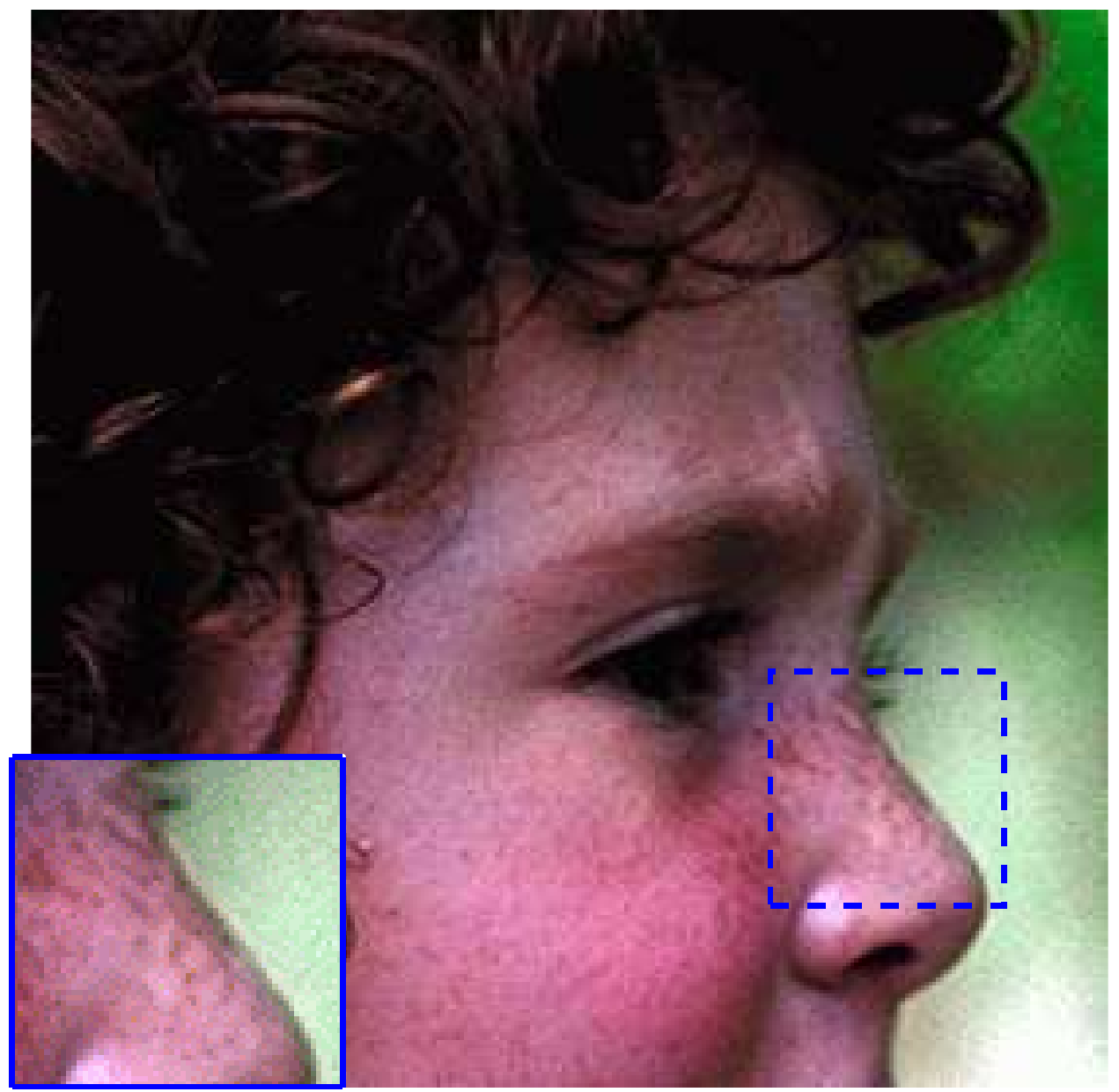} \label{fig_gp_HR}}
\subfigure[Bicubic]{\includegraphics[width=0.2\linewidth]{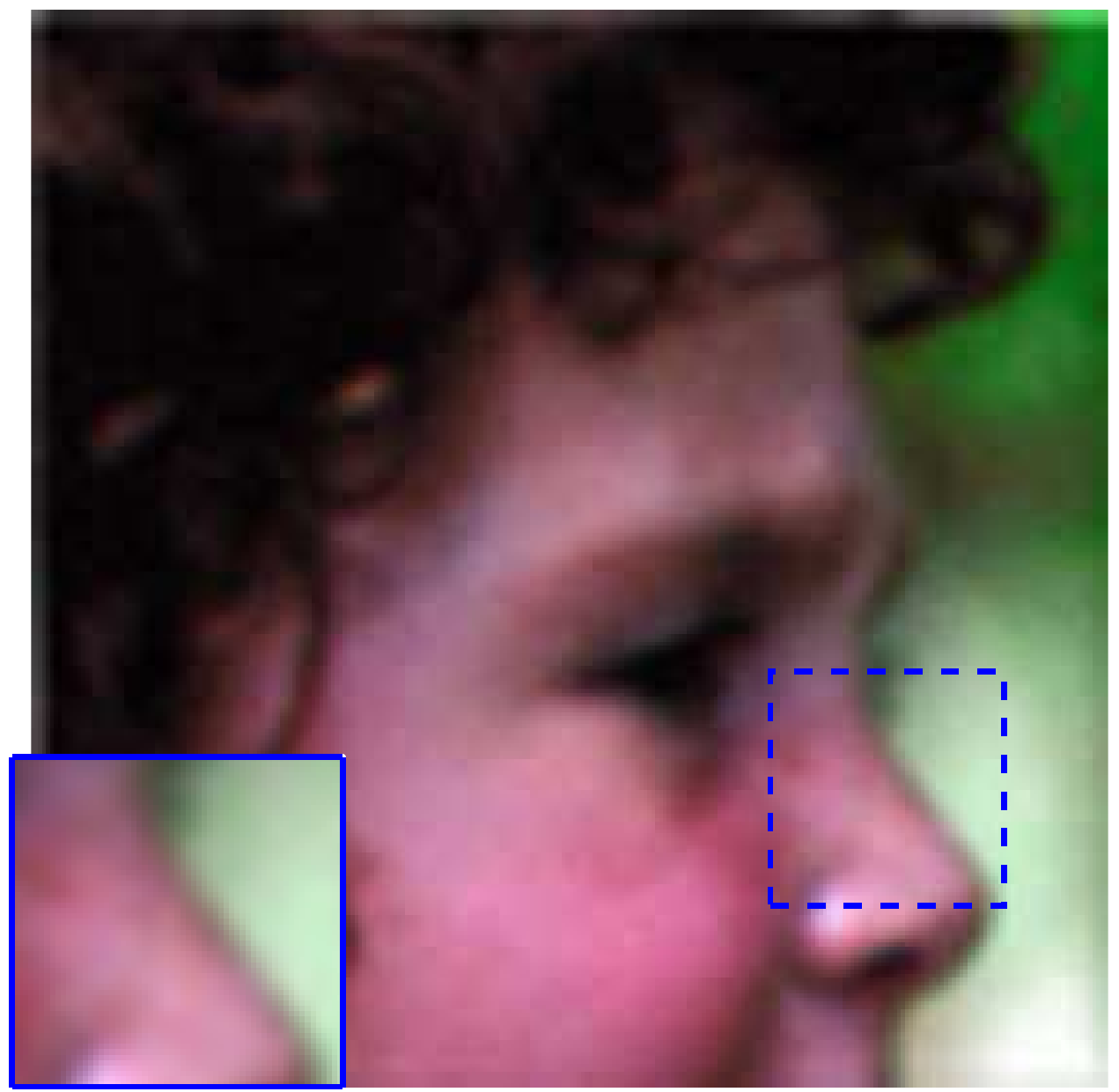} \label{fig_gp_bicubic}} \\
\subfigure[ADMM]{\includegraphics[width=0.2\linewidth]{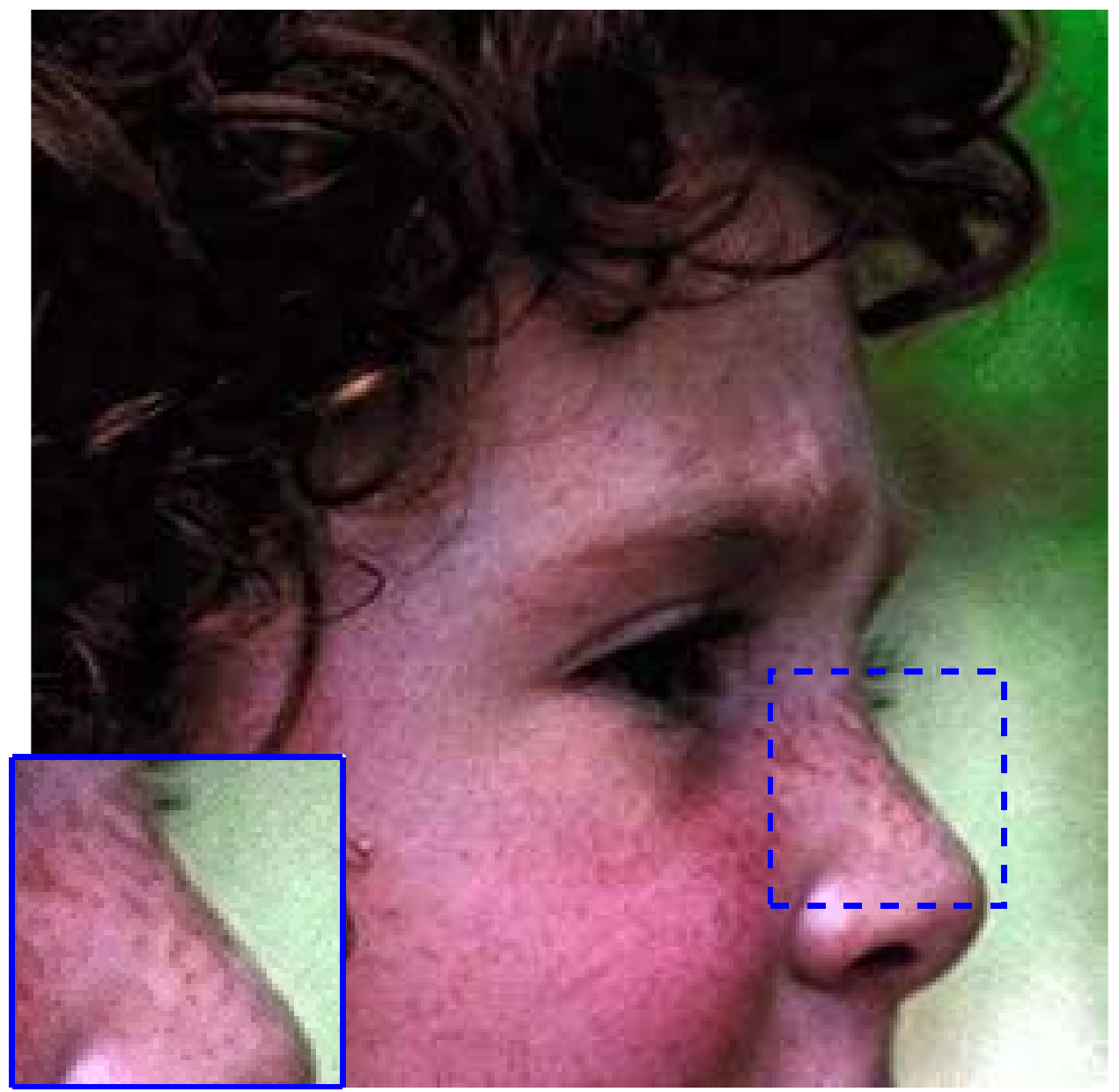} \label{fig_gp_al}}
\subfigure[CG \cite{SunJ_CVPR_2008}]{\includegraphics[width=0.2\linewidth]{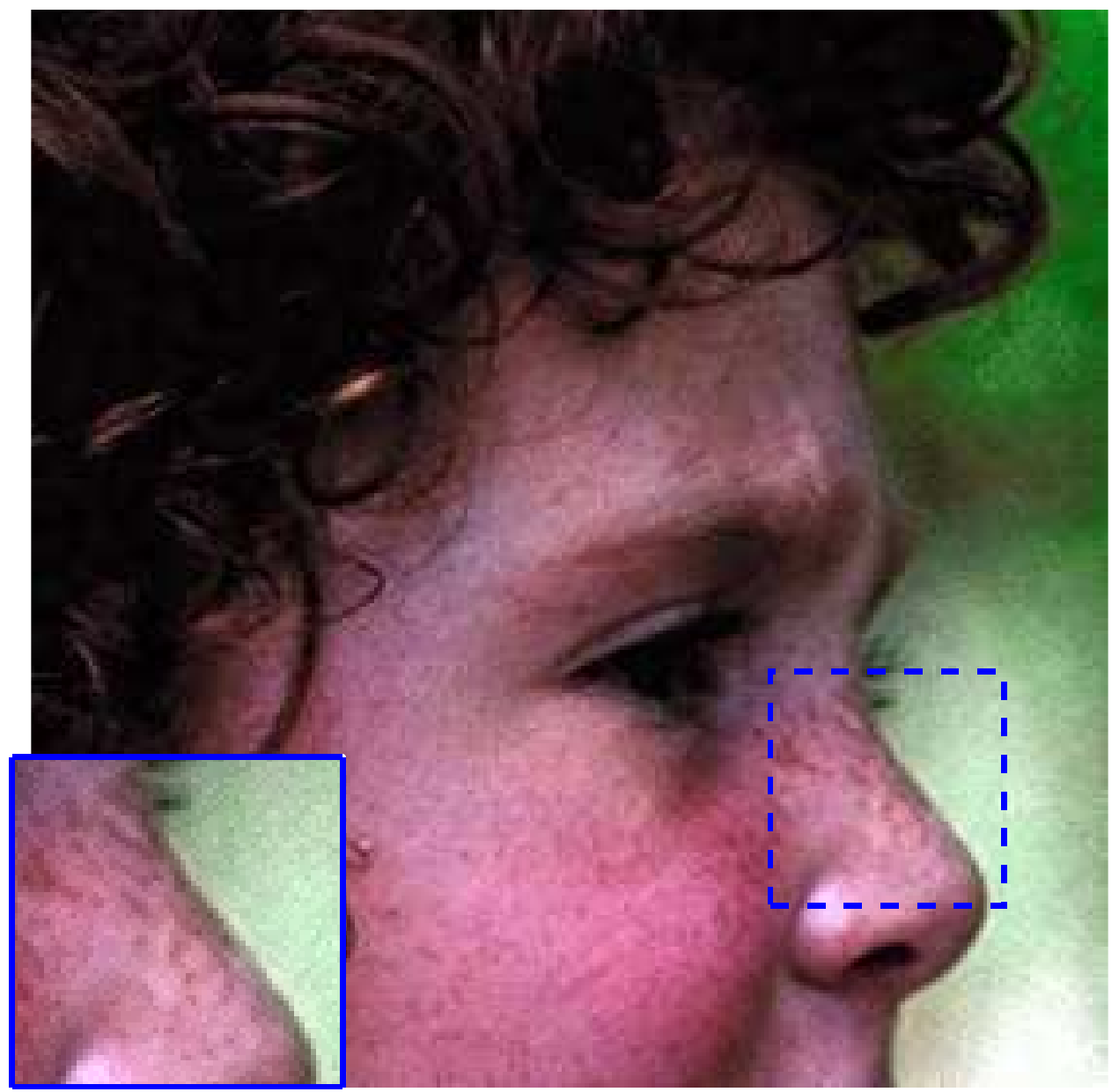} \label{fig_gp_gd}}
\subfigure[Algo. \ref{Algo_FSR_Gl2}]{\includegraphics[width=0.2\linewidth]{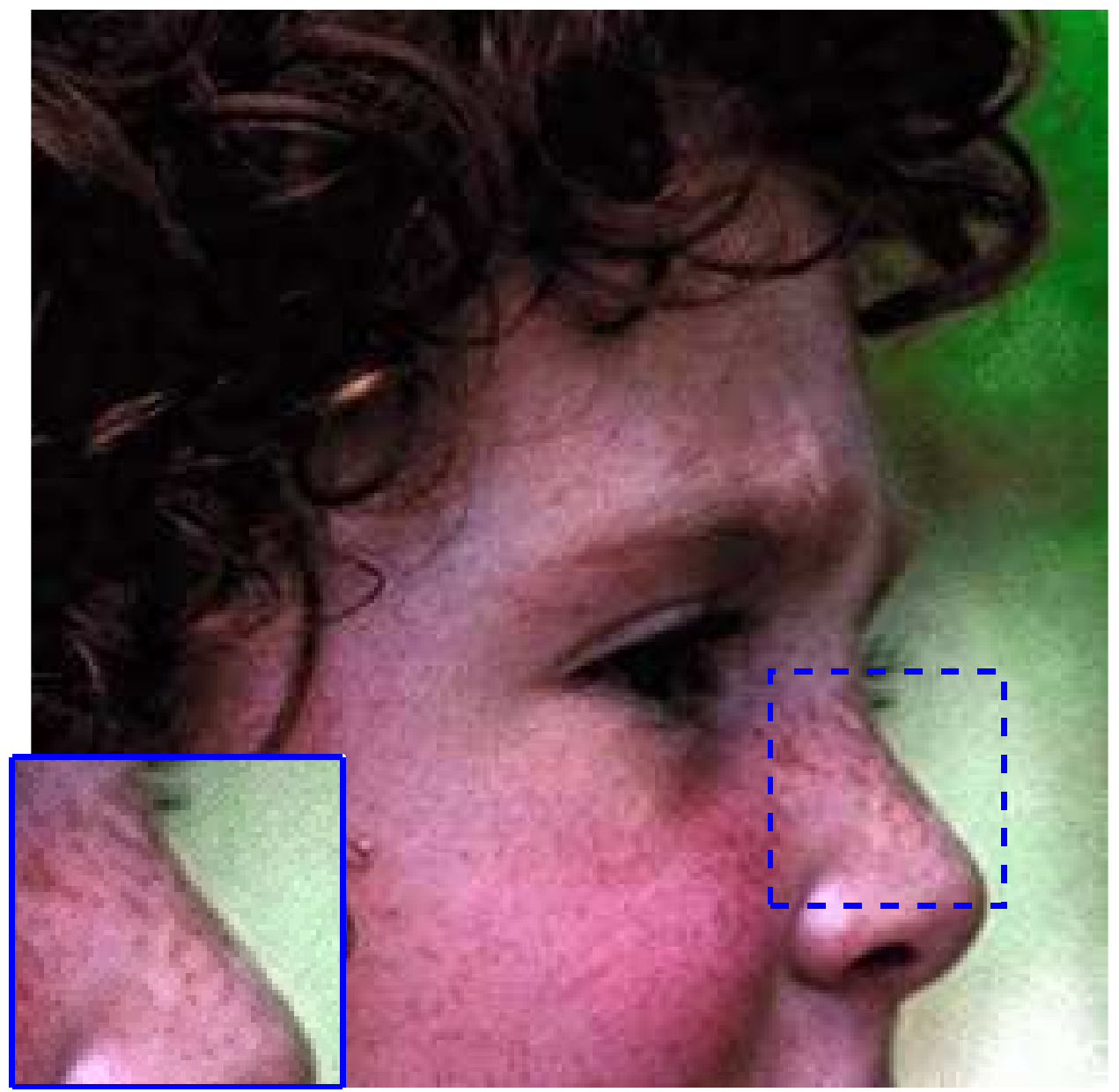} \label{fig_gp_proposed}}
\caption[The LOF caption]{SR of the face image when considering an $\ell_2-\ell_2$-model in the gradient domain: visual results.}
\label{ex1_face}
\end{center}
\end{figure}

\begin{table}[h!]
\begin{center}
\caption{SR of the face image when considering an $\ell_2-\ell_2$-model in the gradient domain: quantitative results.}
\label{tab_gp}
\begin{tabular}{|c|c|c|c|c|}
\hline
 Method  & PSNR (dB) &ISNR (dB) & MSSIM  & Time (s.) \\
 \hline
Bicubic  & 26.84 & -  & 0.49 & 0.001\\
ADMM     & 42.82 & 15.98  & 0.98 & 0.71\\
CG & 42.82 & 15.98 & 0.98 &0.35  \\
Algo. \ref{Algo_FSR_Gl2} & 42.82 & 15.98  & 0.98 & \textbf{0.009}\\
\hline
\end{tabular}
\end{center}
\end{table}

\begin{figure}[h!]
\begin{center}
\includegraphics[width=0.5\linewidth]{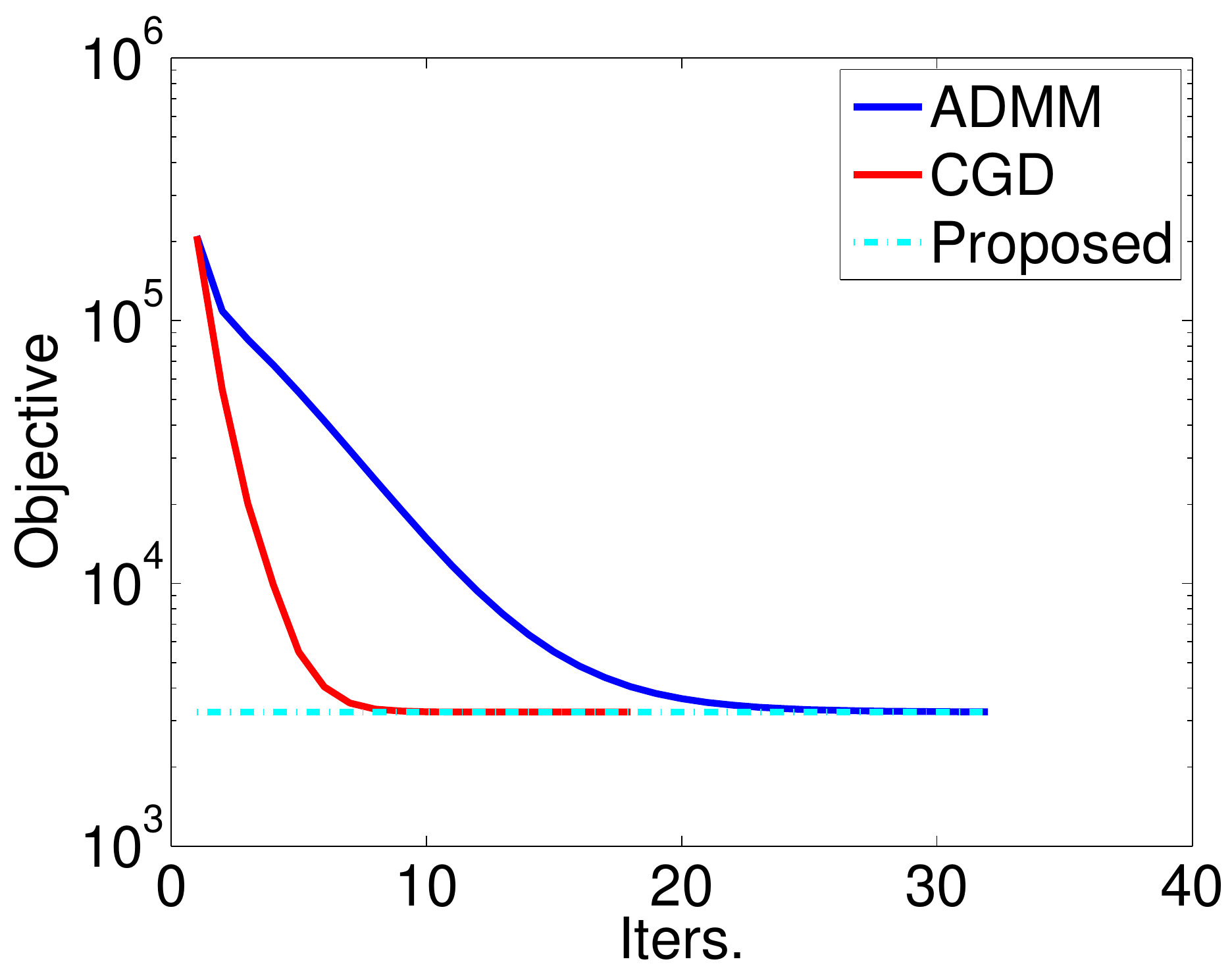} \label{fig_obj_gd}
\caption{SR of the face image when considering an $\ell_2-\ell_2$-model in the gradient domain: objective functions.}
\label{ex1_face_obj}
\end{center}
\end{figure}

\begin{figure}[h!]
\begin{center}
\subfigure[Observation]{\includegraphics[width=0.2\linewidth]{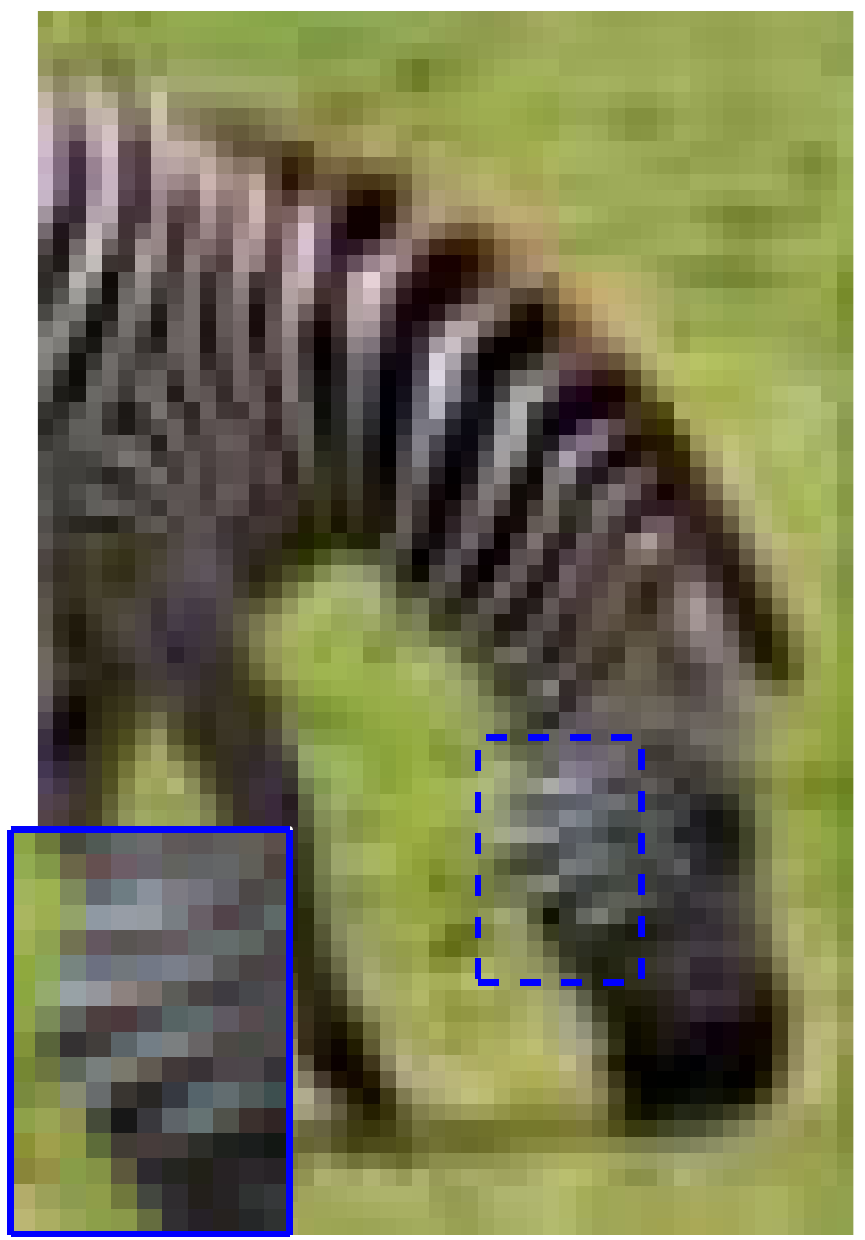} \label{fig_learn_obs}}
\subfigure[Ground truth]{\includegraphics[width=0.2\linewidth]{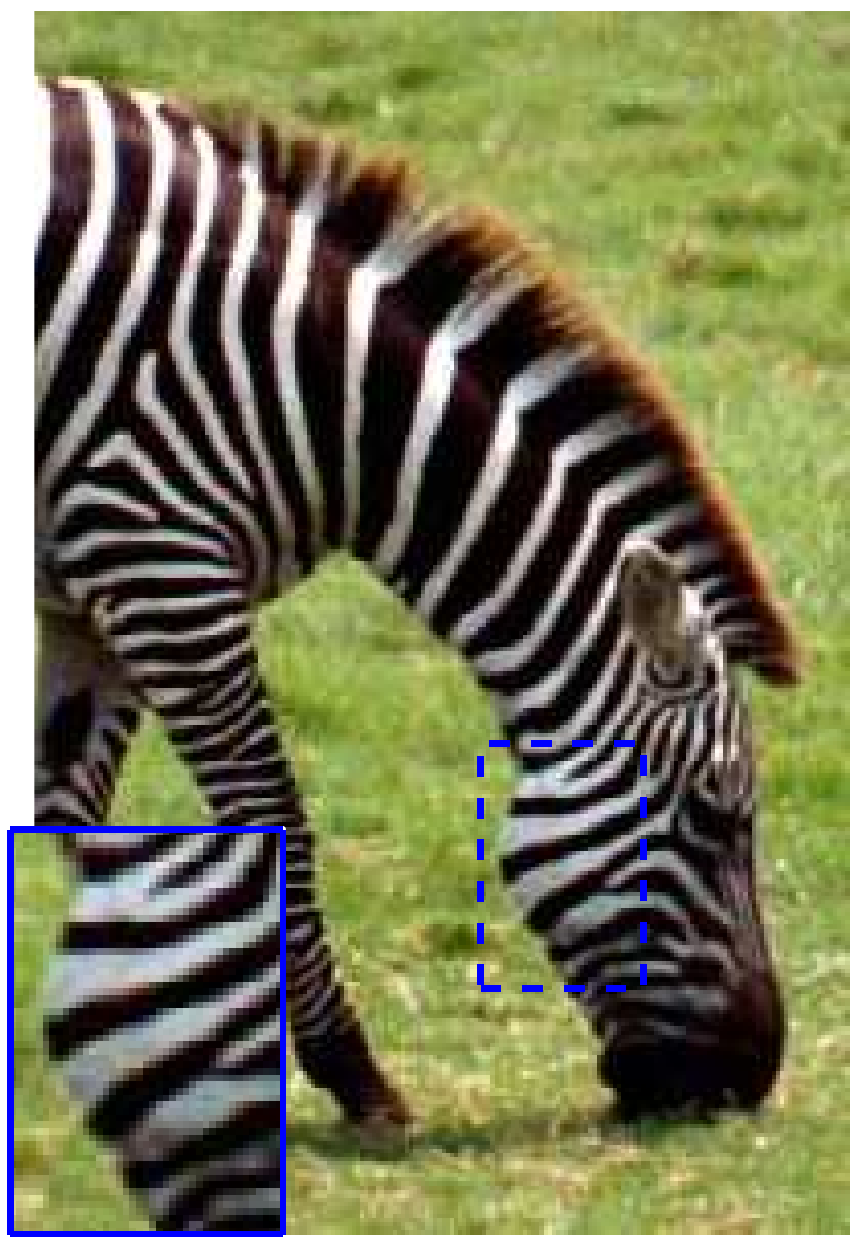} \label{fig_learn_gnd}}
\subfigure[Bicubic]{\includegraphics[width=0.2\linewidth]{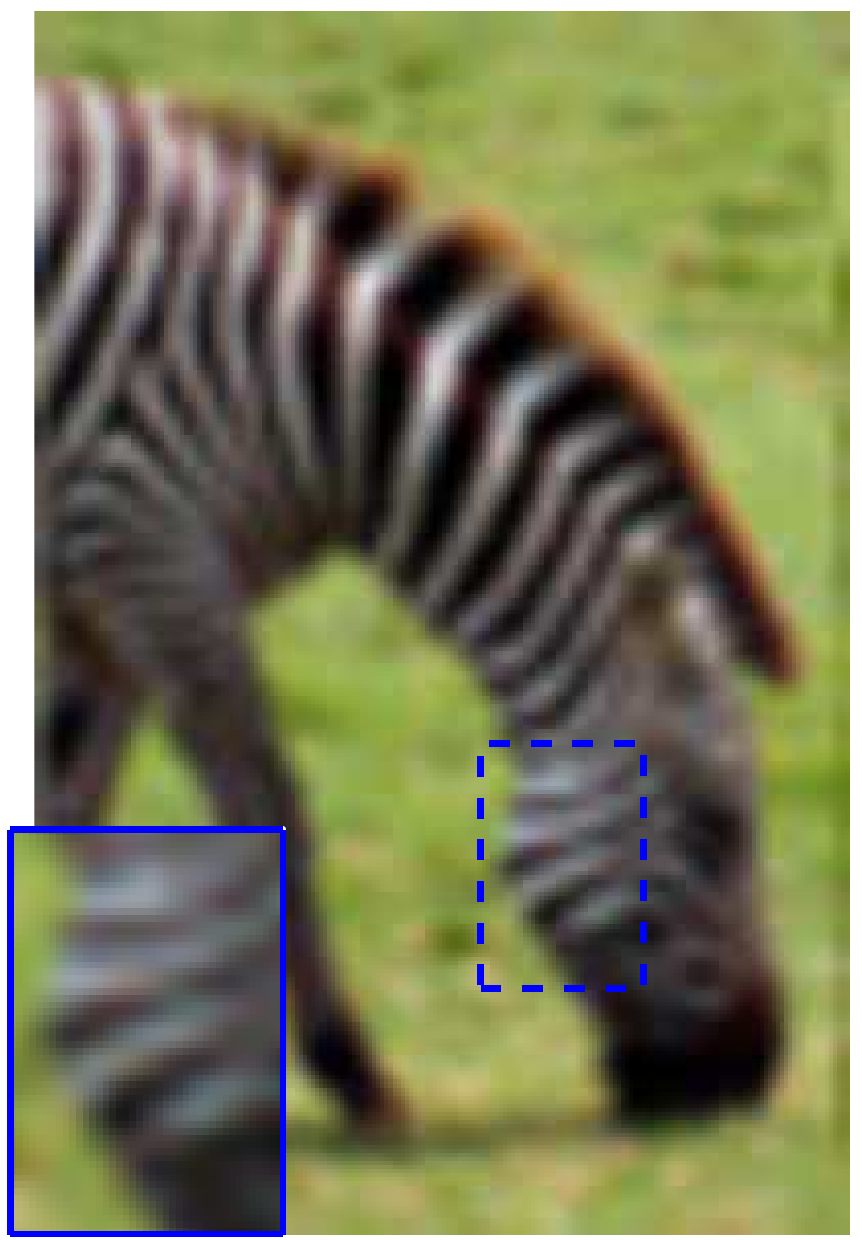} \label{fig_learn_bicubic}}  \\
\subfigure[SC \cite{Yang2010TIP}]{\includegraphics[width=0.2\linewidth]{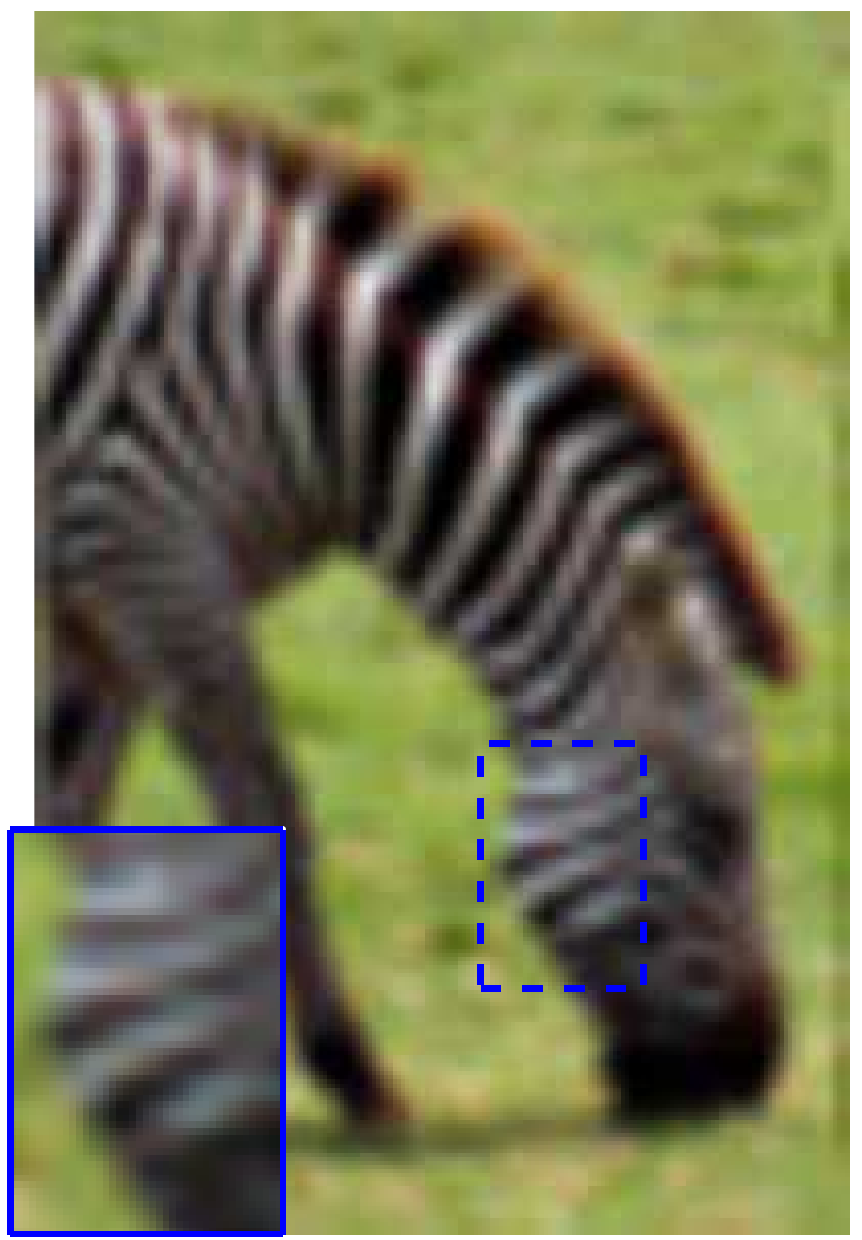} \label{fig_learn_proposed}}
\subfigure[SC+GD \cite{Yang2010TIP}]{\includegraphics[width=0.2\linewidth]{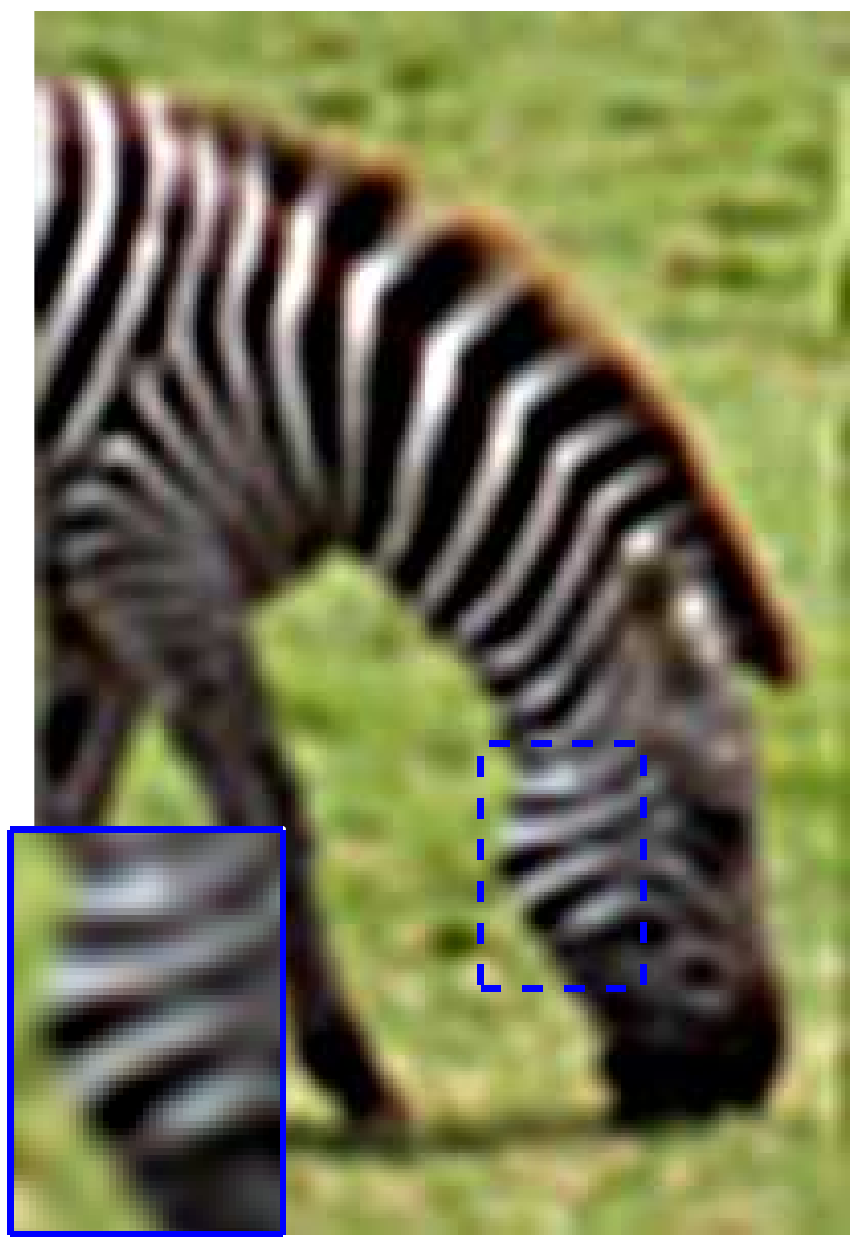} \label{fig_learn_yang}}
\subfigure[SC+Algo. \ref{Algo:FastSR}]{\includegraphics[width=0.2\linewidth]{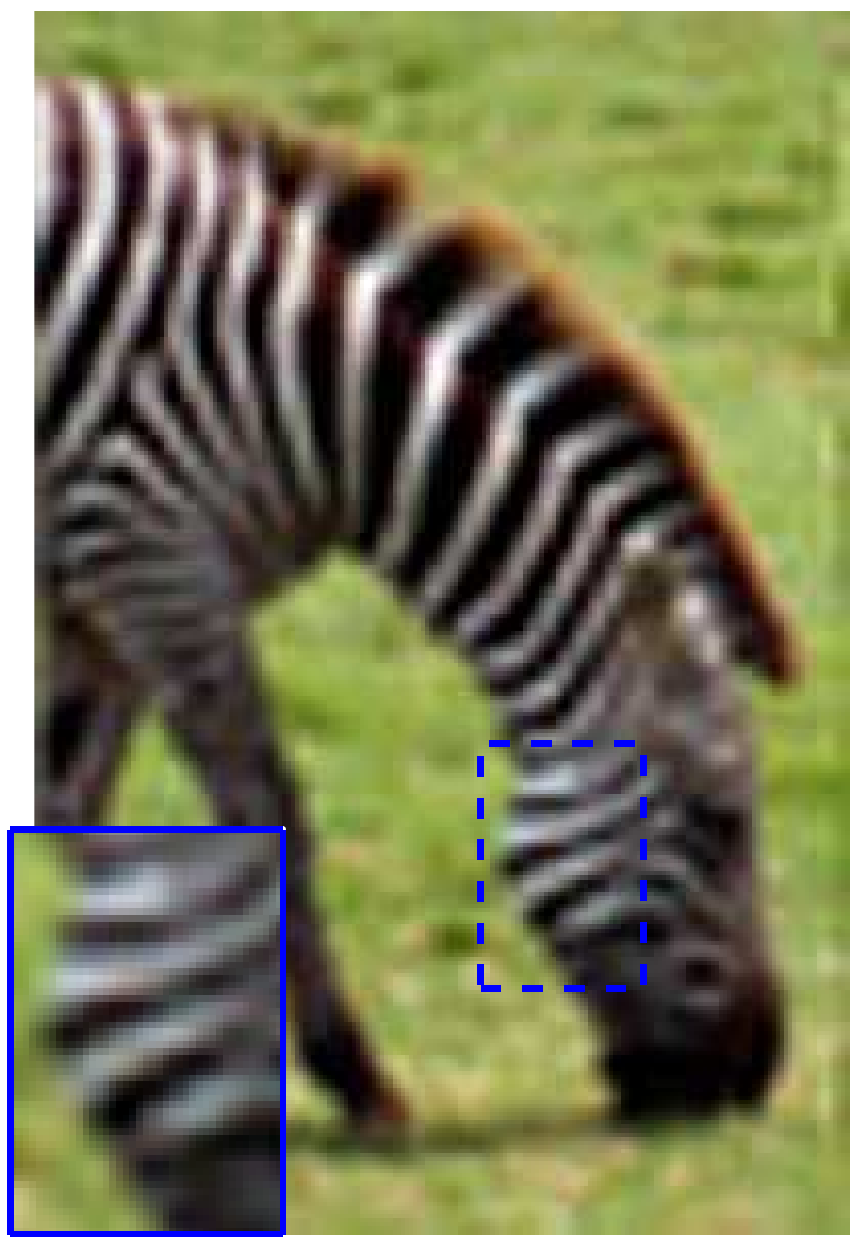} \label{fig_learn_proposed}}
\caption{SR of the zebra image when considering a learning-based $\ell_2$-norm regularization: visual results.}
\label{ex1_zebra}
\end{center}
\end{figure}

\subsubsection{Learning-based $\ell_2$-norm regularization}
\AB{This section studies the performance of the algorithm obtained when the analytical solution of Theorem 1 is embedded in the learning-based method of \cite{Yang2010TIP}. The method investigated in \cite{Yang2010TIP} computed an initial estimation of the HR image via sparse coding (SC) and used a back-projection (BP) procedure to improve the SR performance. The BP operation was performed by a GD method in \cite{Yang2010TIP}. Here, this GD step has been replaced by the analytical solution provided by Theorem 1 and Algo. \ref{Algo:FastSR}. The image ``zebra'' was used in this experiment to compare the performance of both algorithms\footnote{For comparison purpose, the authors used the MATLAB code corresponding to \cite{Yang2010TIP} available at \url{http://www.ifp.illinois.edu/~jyang29.}}. The LR and HR images (of size $300 \times 200$) are shown in Fig. \ref{fig_learn_obs} and \ref{fig_learn_gnd}. The regularization parameter was set to $\tau = 0.1$. The restored images shown in Figs. \ref{fig_learn_bicubic}-\ref{fig_learn_yang} were obtained using the initial SC estimation proposed in \cite{Yang2010TIP}, the back-projected SC image combined with the gradient descent (GD) algorithm of \cite{Yang2010TIP} (referred to as ``SC + GD'') and the proposed closed-form solution (referred to as ``SC + Algo. \ref{Algo:FastSR}"). The corresponding numerical results are reported in Table \ref{tab_learn}. The restored images obtained with the two back-projection approaches are clearly better than the restoration obtained with the SC method. While the quality of the images obtained with these projection approaches is similar, the use of the analytical solution of Theorem 1 allows the computational cost of the GD step to be reduced significantly.}

\begin{table}[h!]
\setlength{\tabcolsep}{1.5pt}
\begin{center}
\caption{SR of the zebra image when considering a learning-based $\ell_2$-norm regularization: quantitative results.}
\label{tab_learn}
\begin{tabular}{|c|c|c|c|c|}
\hline
 Method  & PSNR (dB)  &ISNR (dB) & MSSIM  & Time (s.) \\
 \hline
Bicubic                  & 18.98 &-     & 0.37  & 0.001\\
SC \cite{Yang2010TIP}    & 19.15 &0.16  & 0.38  &  170.9\\
SC+GD \cite{Yang2010TIP} & 20.76 &1.78  & 0.47  &  170.9+1.23\\
SC+Algo. \ref{Algo:FastSR}              & 29.99 &1.88  & 0.48  &  170.9+\textbf{0.01}\\
\hline
\end{tabular}
\end{center}
\end{table}

\subsection{Embedding the $\ell_2-\ell_2$ analytical solution into the ADMM framework}
In this second group of experiments, we consider two non-Gaussian priors that have been widely used for image reconstruction problems: the TV regularization in the spatial domain and the $\ell_1$-norm regularization in the wavelet domain. In both cases, the analytical solution of Theorem 1 is embedded into a standard ADMM algorithm inspired from \cite{MNg2010SR_TV} (the resulting algorithms referred to as Algo. 4 and 5 are detailed in Appendix C). The stopping criterion for both implementations is chosen as the relative cost function error defined as
\begin{equation}
\frac{|f(\bfx^{k+1}) - f(\bfx^{k})|}{f(\bfx^{k})}
\end{equation}
where $f(\bfx) = \frac{1}{2} \|\bfy - \bfS\bfH\bfx\|_2^2 + \tau \phi(\bfA\bfx)$.
Note that other stopping criteria such as those studied in \cite{Boyd2011ADMM} could also be investigated. The $512 \times 512$ images ``Lena", ``monarch" and ``Barbara" were considered in these experiments. The observed LR images and the HR images (ground truth) are displayed in Fig. \ref{images_TV} (first two columns).

\begin{figure*}
\centering\begin{tabular}{@{}c@{}c@{}c@{ }c@{ }c@{ }c@{}}
Observation&Ground truth &Bicubic & ADMM \cite{MNg2010SR_TV} & Algo. 4 \\
\includegraphics[width=0.18\linewidth]{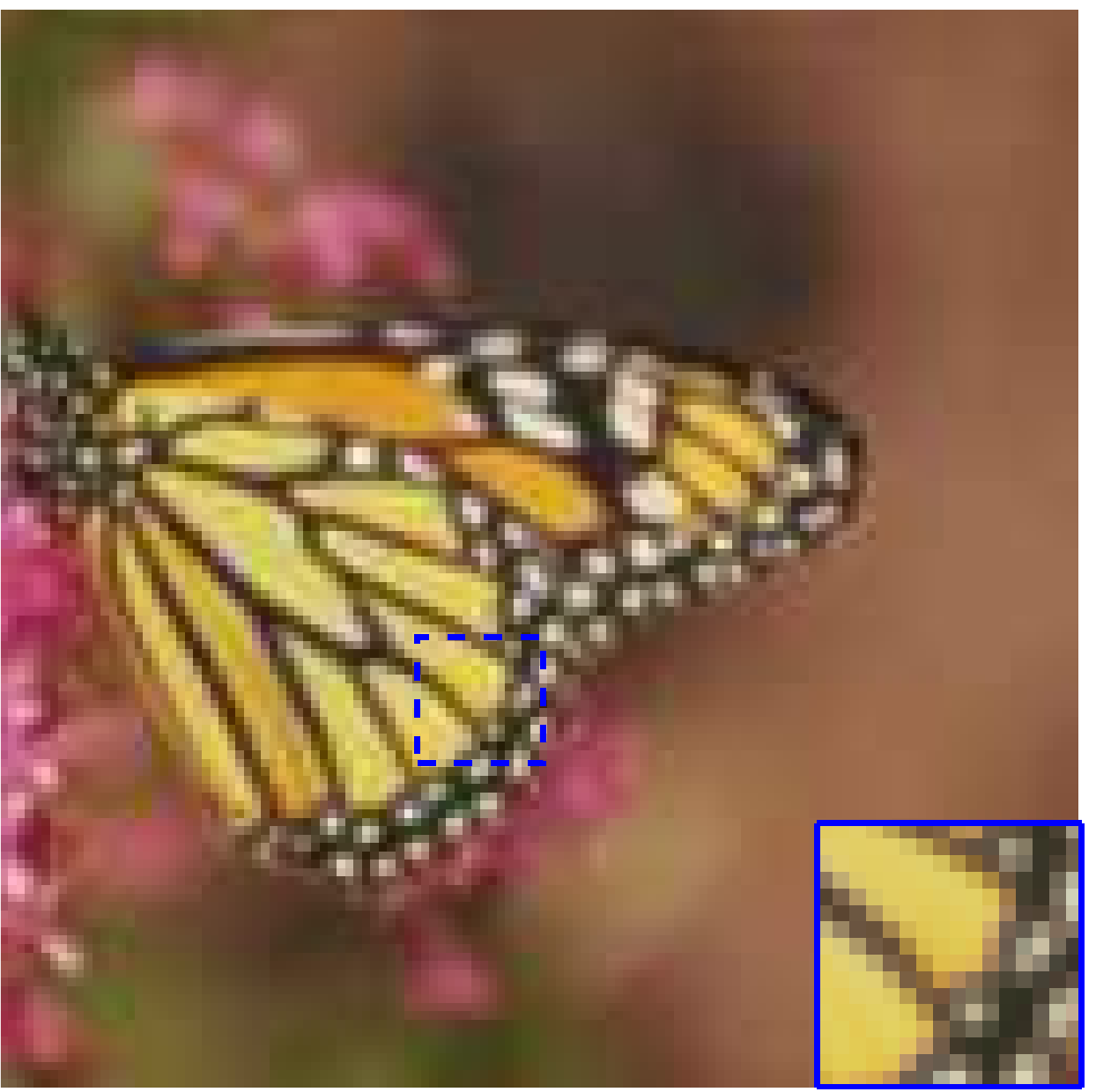} & \label{fig_obs_monarch_TV}
\includegraphics[width=0.18\linewidth]{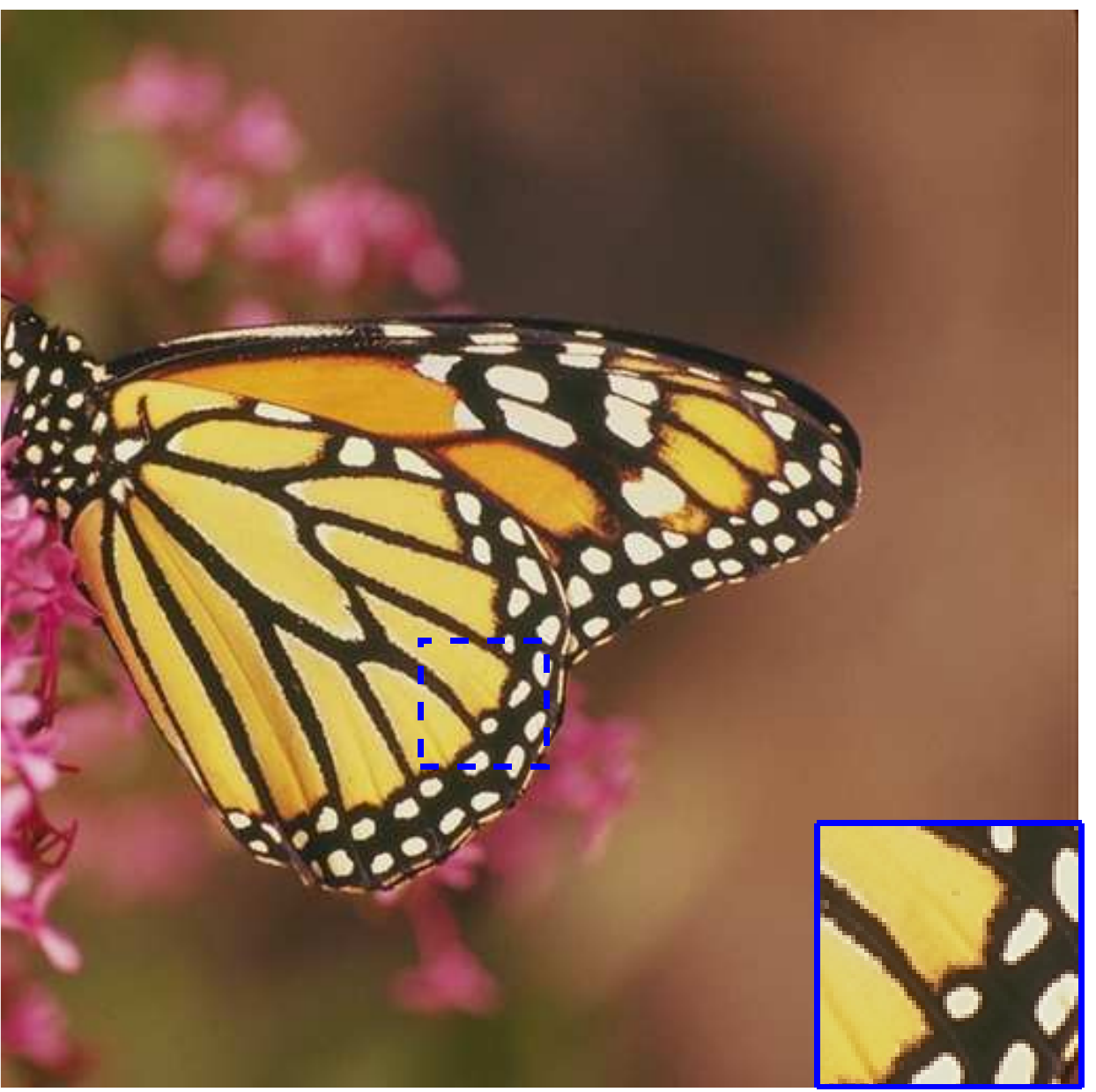}& \label{fig_bicubic_monarch_TV}
\includegraphics[width=0.18\linewidth]{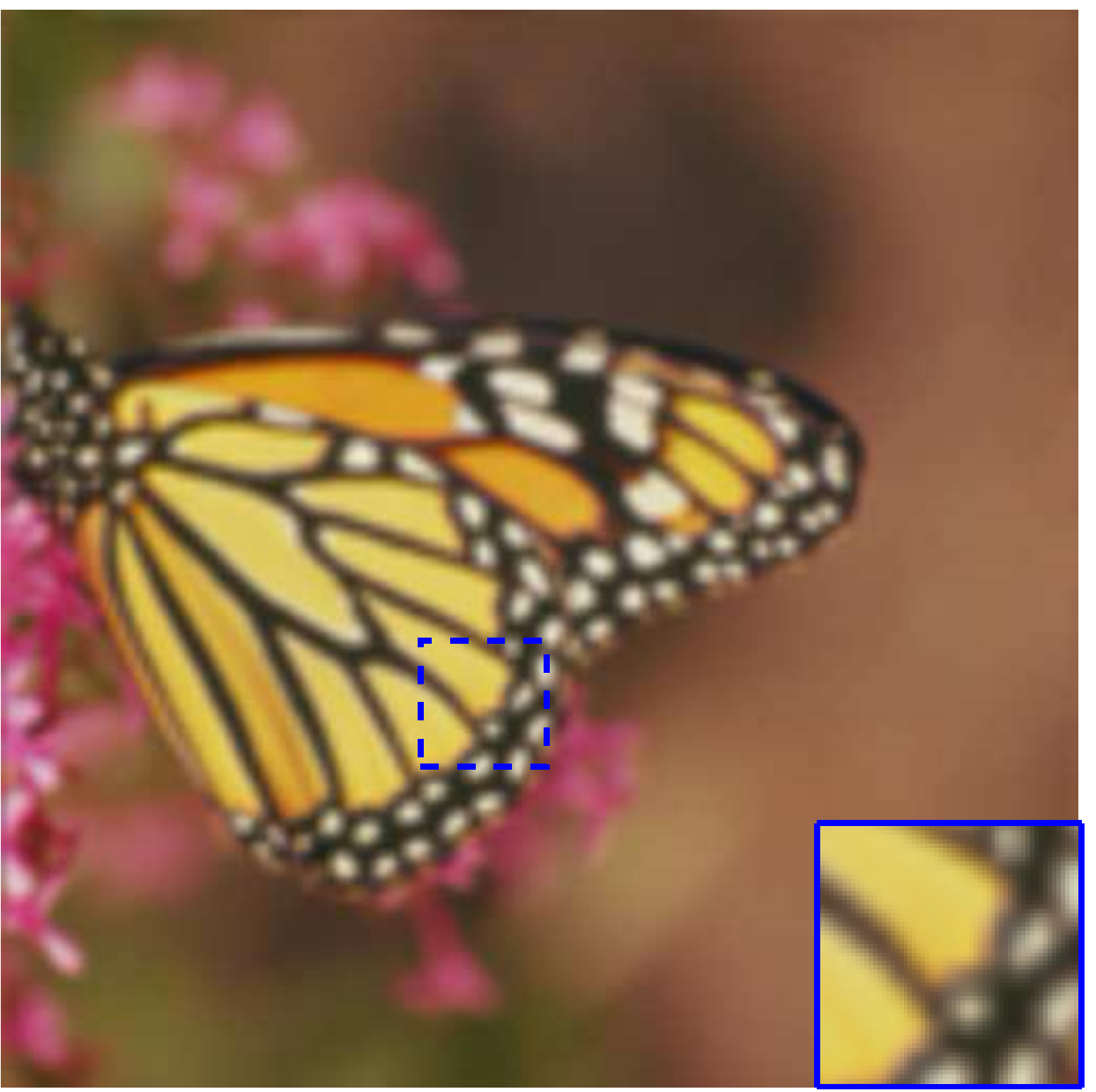} &\label{fig_bicubic_monarch_TV}
\includegraphics[width=0.18\linewidth]{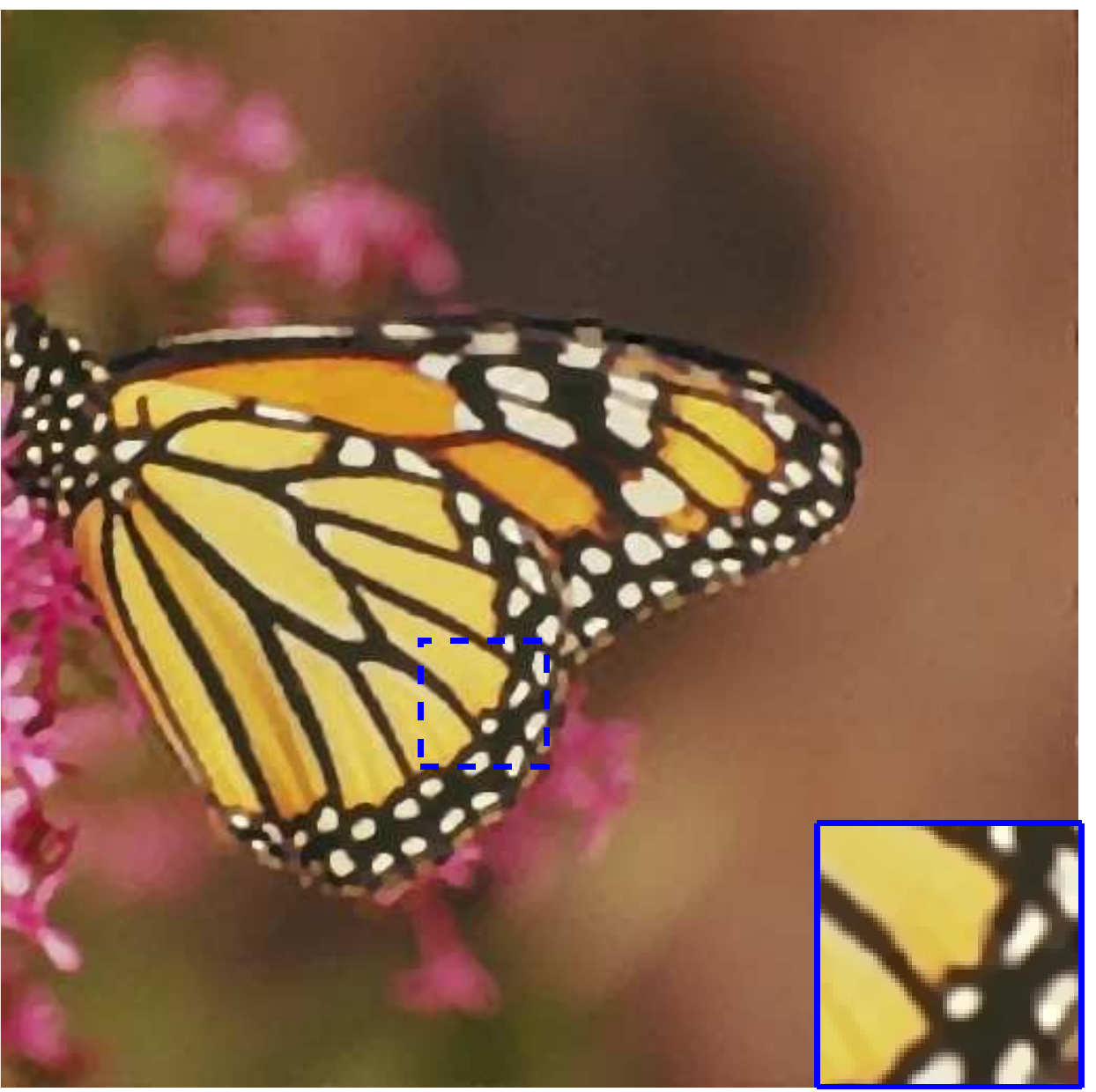} &\label{fig_direct_monarch_TV}
\includegraphics[width=0.18\linewidth]{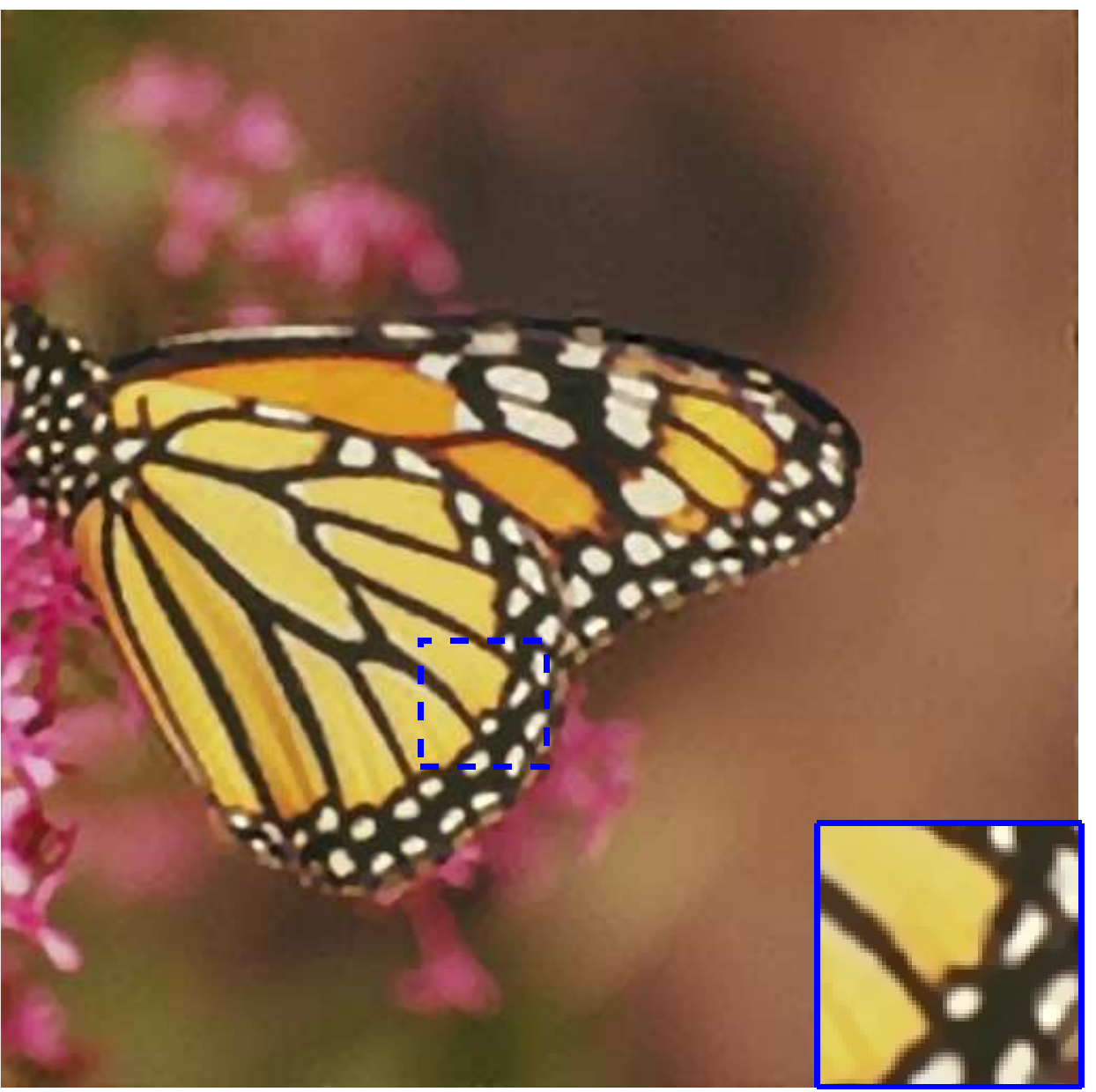} &\label{fig_FSR_monarch_TV} \\

\includegraphics[width=0.18\linewidth]{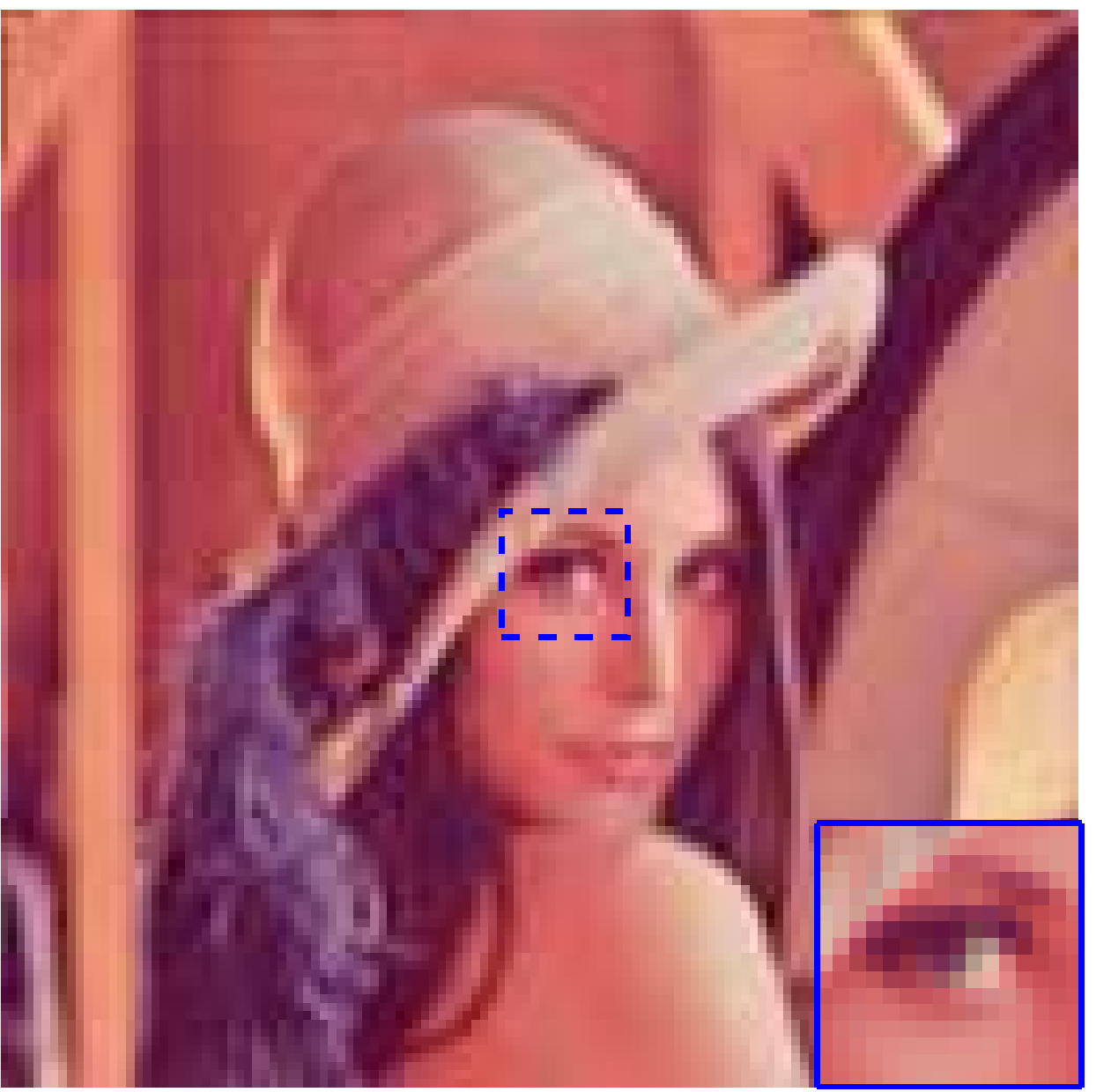} & \label{fig_obs_lena_TV}
\includegraphics[width=0.18\linewidth]{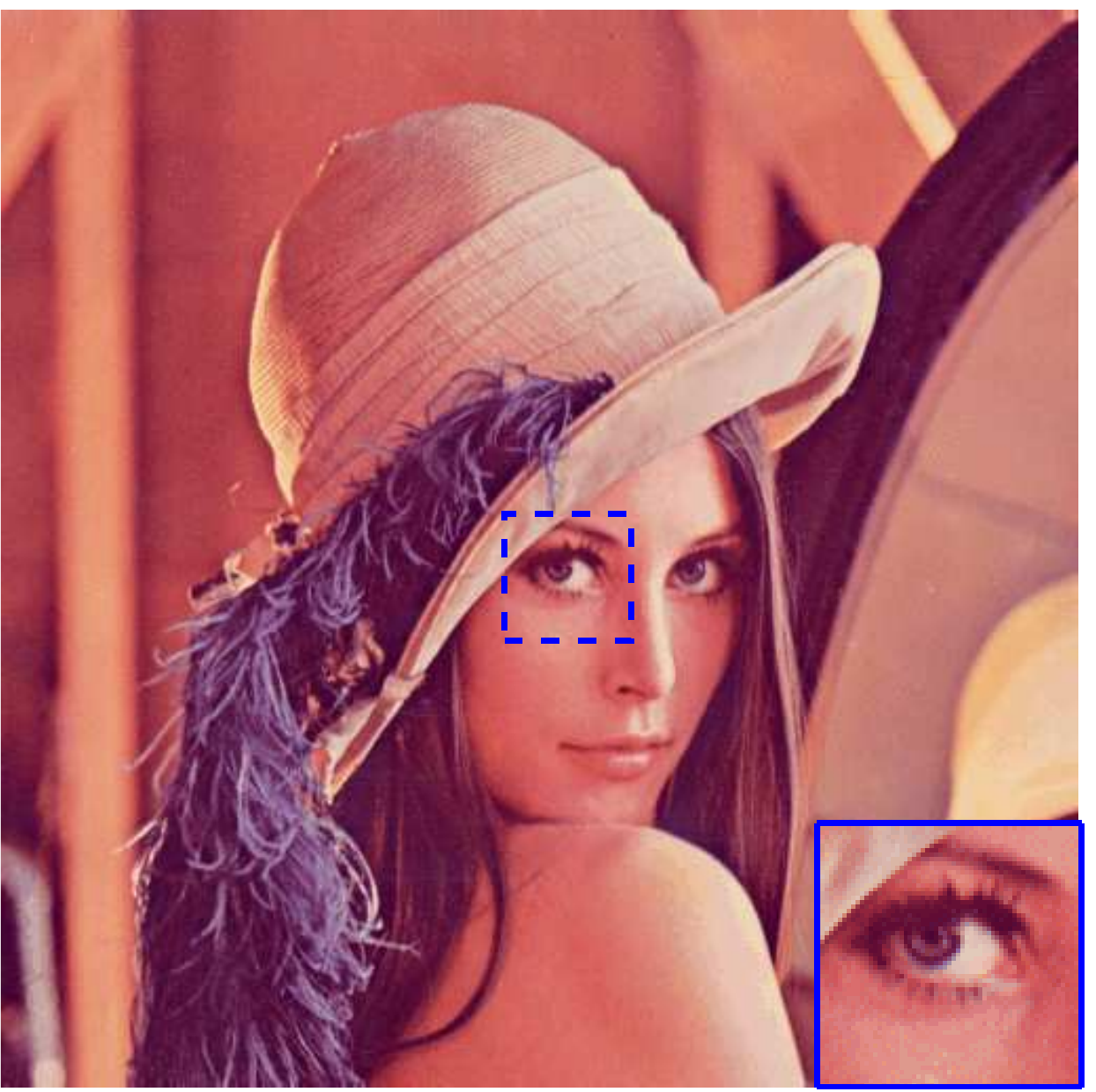}& \label{fig_bicubic_monarch_TV}
\includegraphics[width=0.18\linewidth]{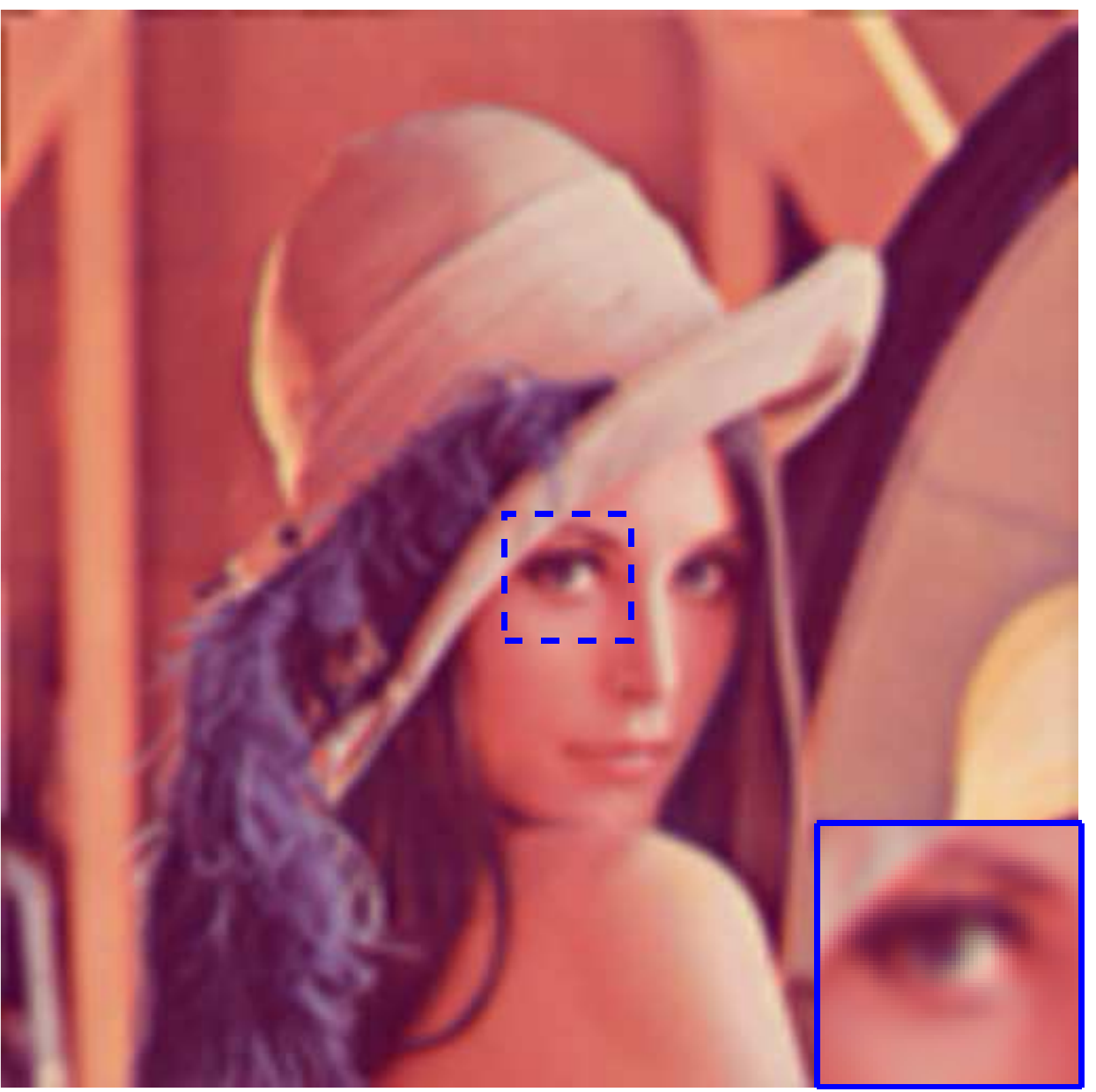} &\label{fig_bicubic_lena_TV}
\includegraphics[width=0.18\linewidth]{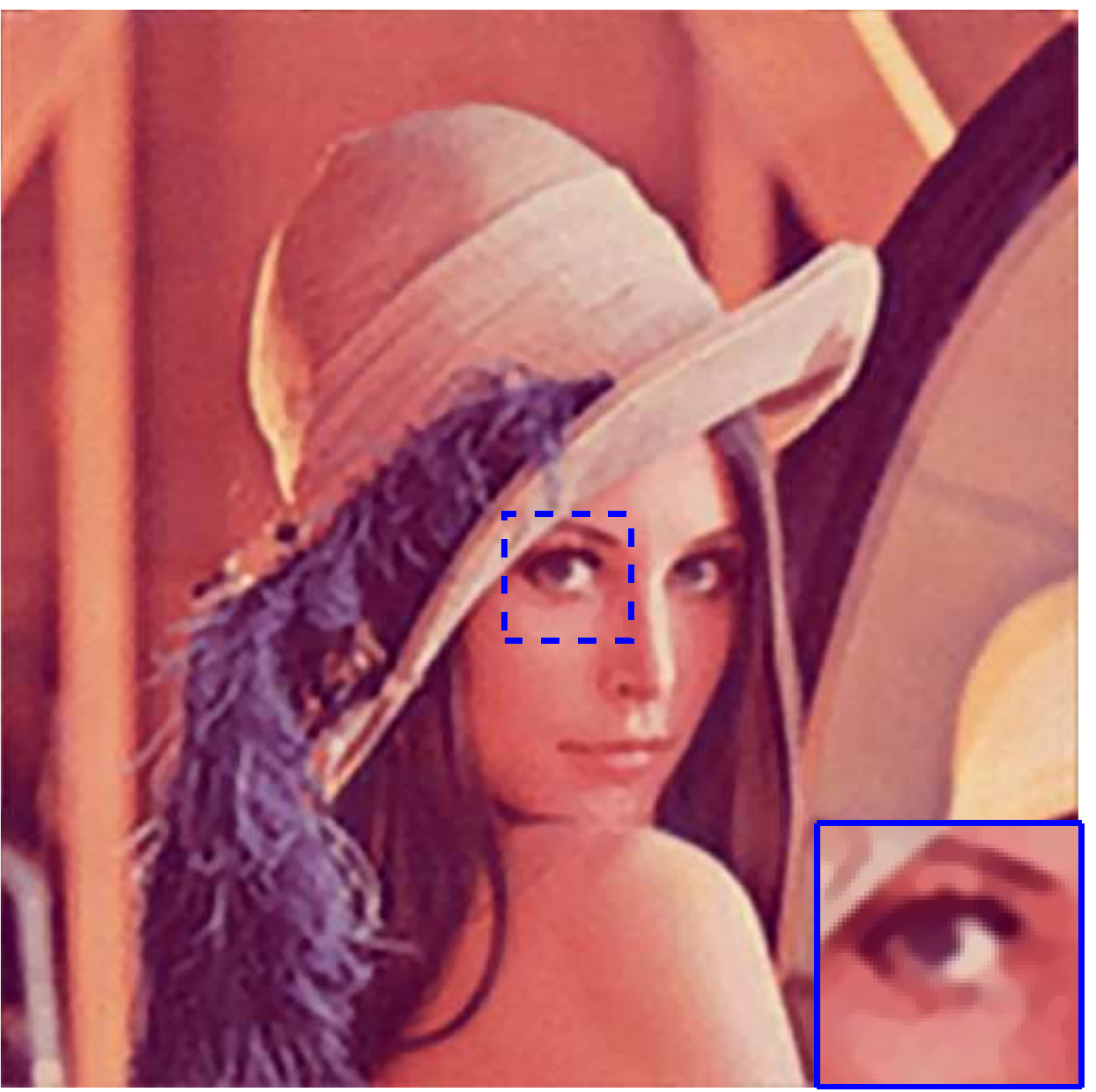} &\label{fig_direct_lena_TV}
\includegraphics[width=0.18\linewidth]{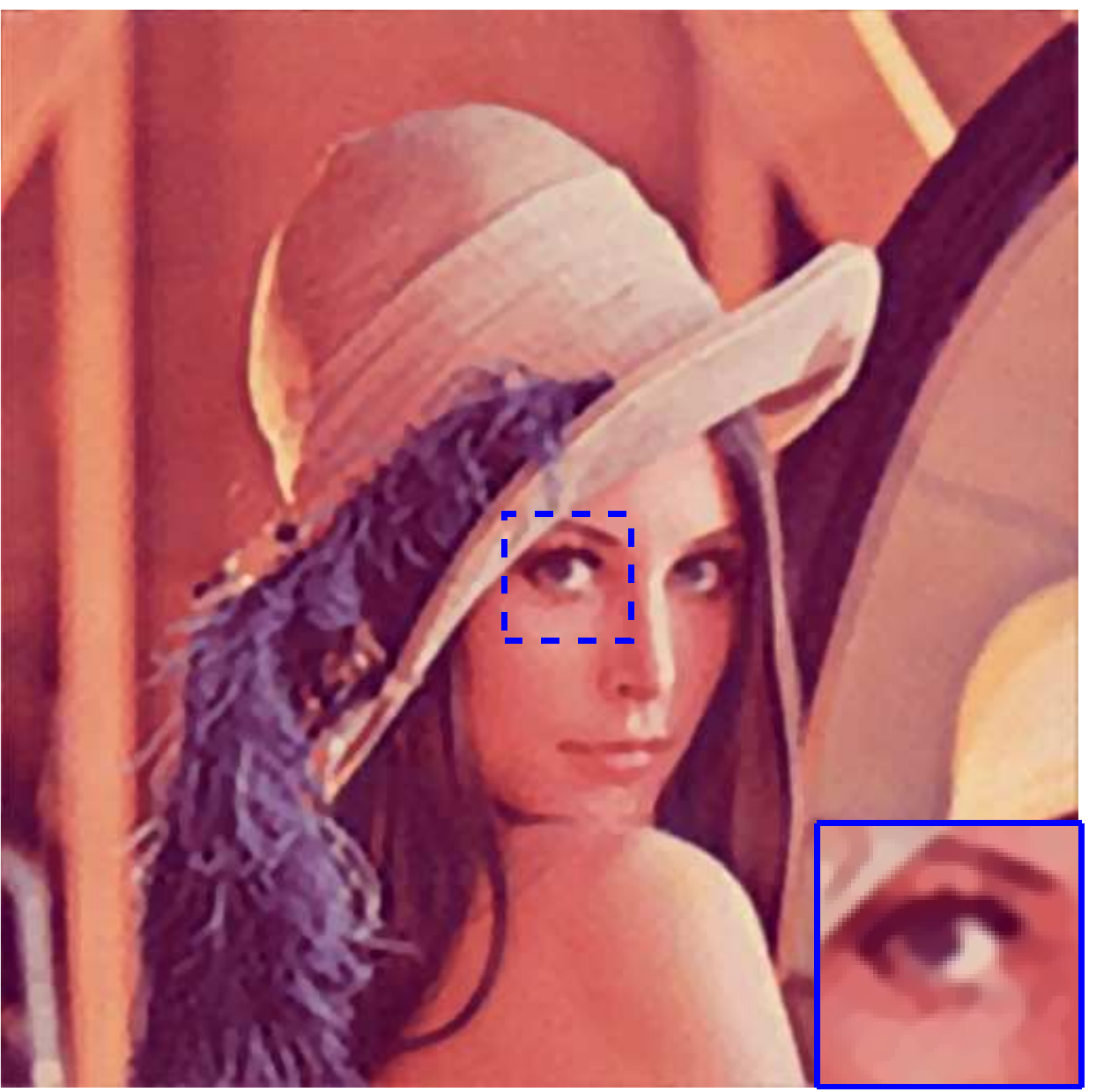} &\label{fig_FSR_lena_TV}\\

\includegraphics[width=0.18\linewidth]{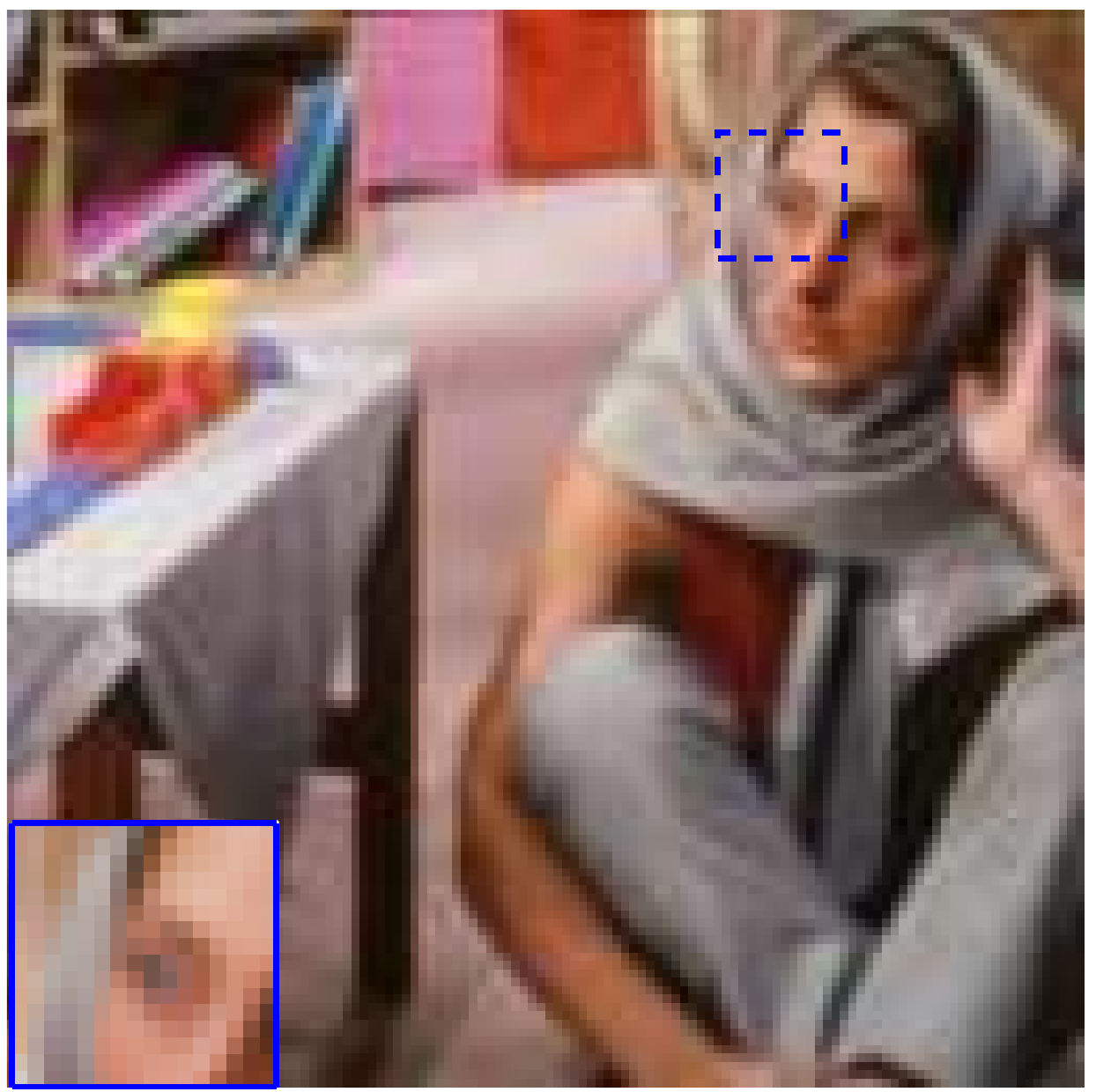} & \label{fig_obs_barbara_TV}
\includegraphics[width=0.18\linewidth]{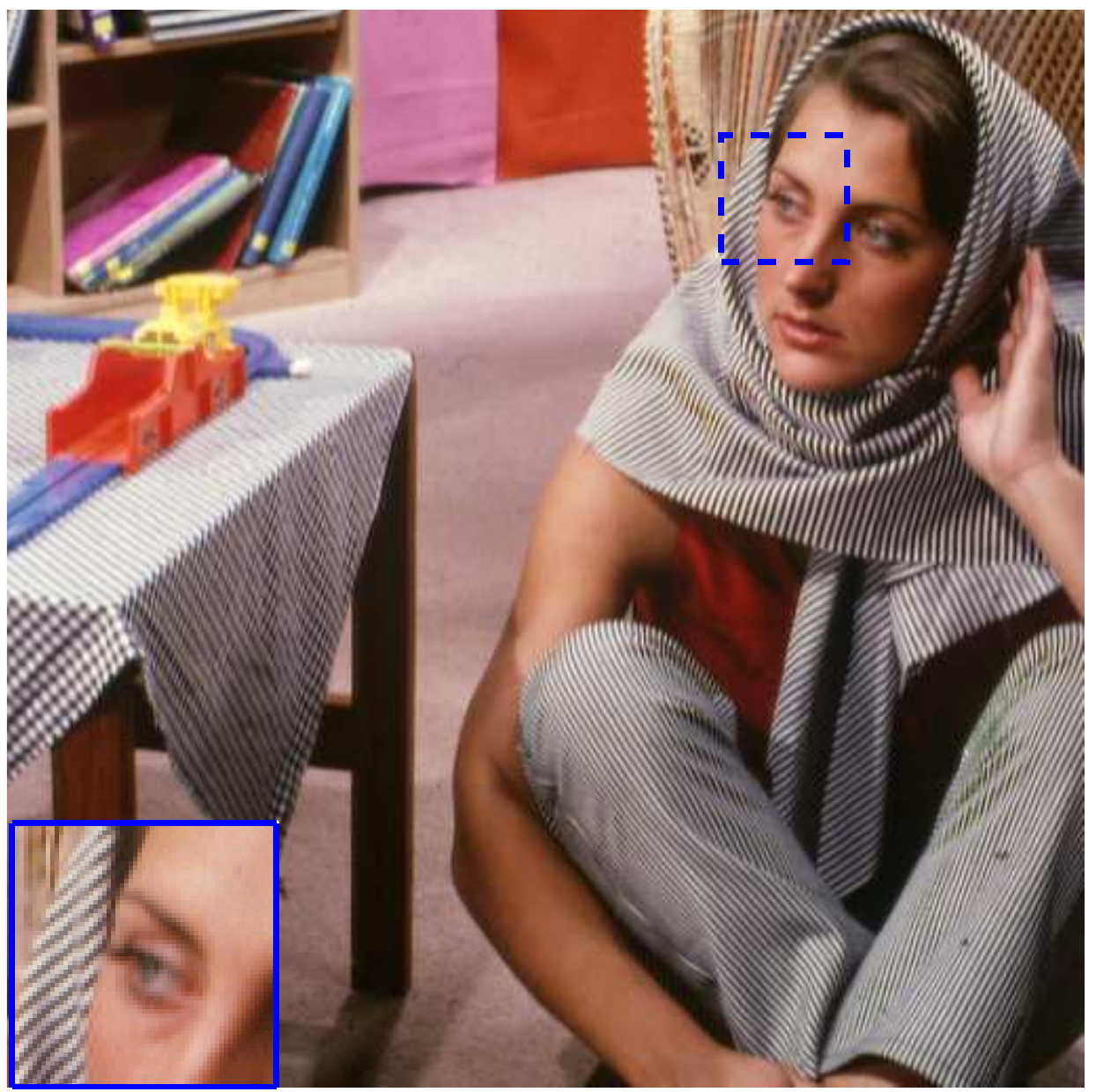}& \label{fig_bicubic_monarch_TV}
\includegraphics[width=0.18\linewidth]{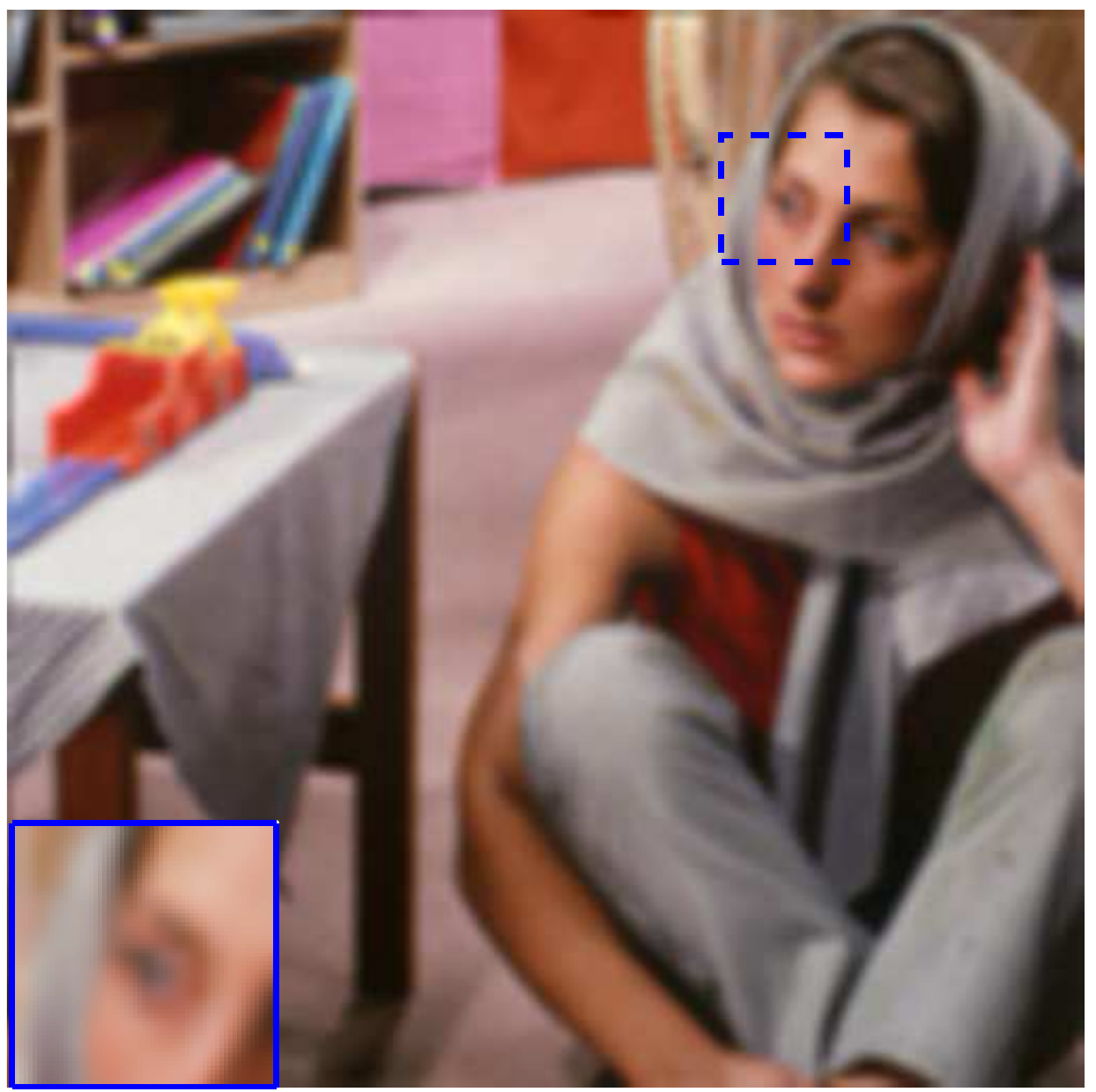} &\label{fig_bicubic_barbara_TV}
\includegraphics[width=0.18\linewidth]{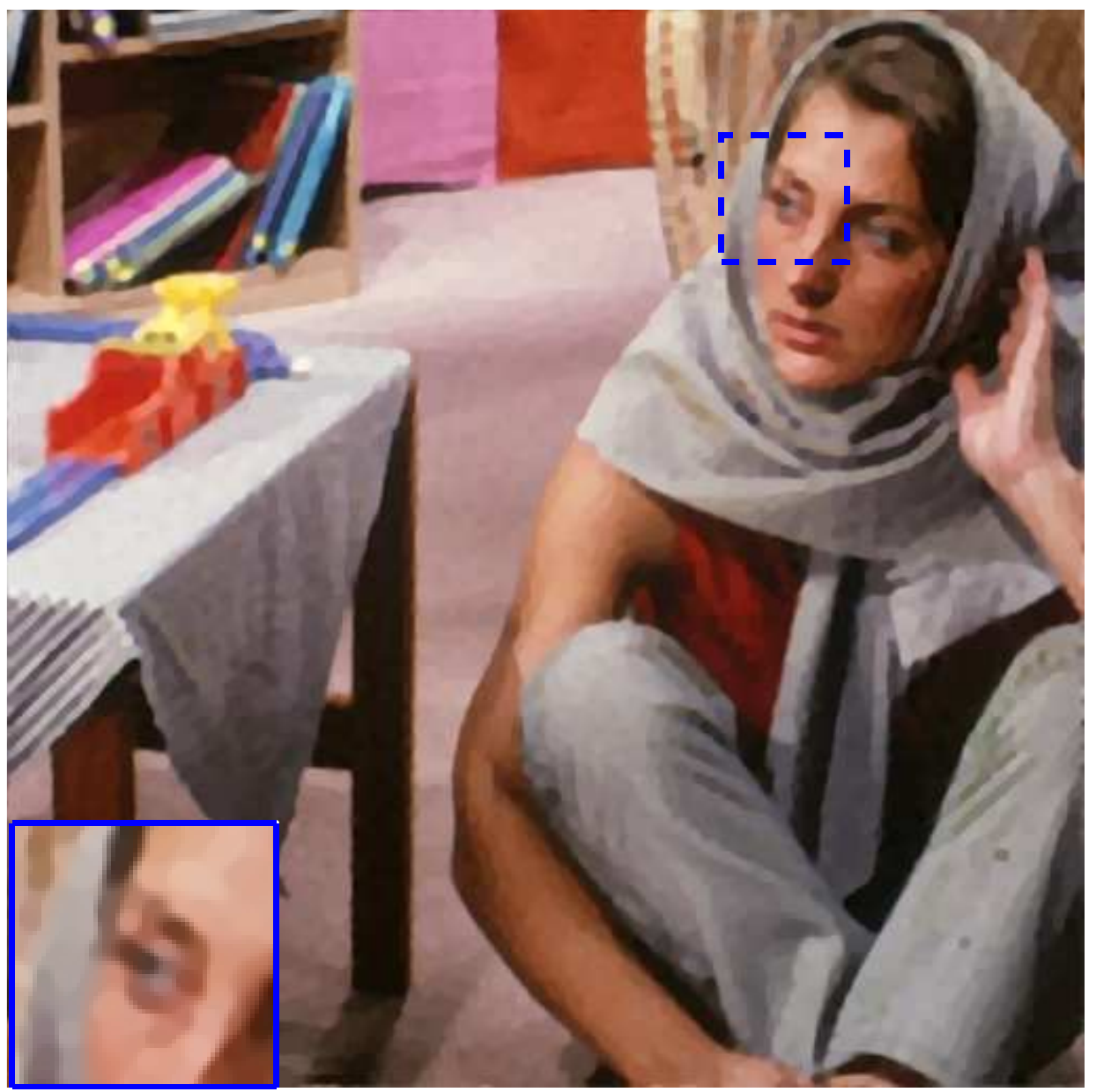} &\label{fig_direct_barbara_TV}
\includegraphics[width=0.18\linewidth]{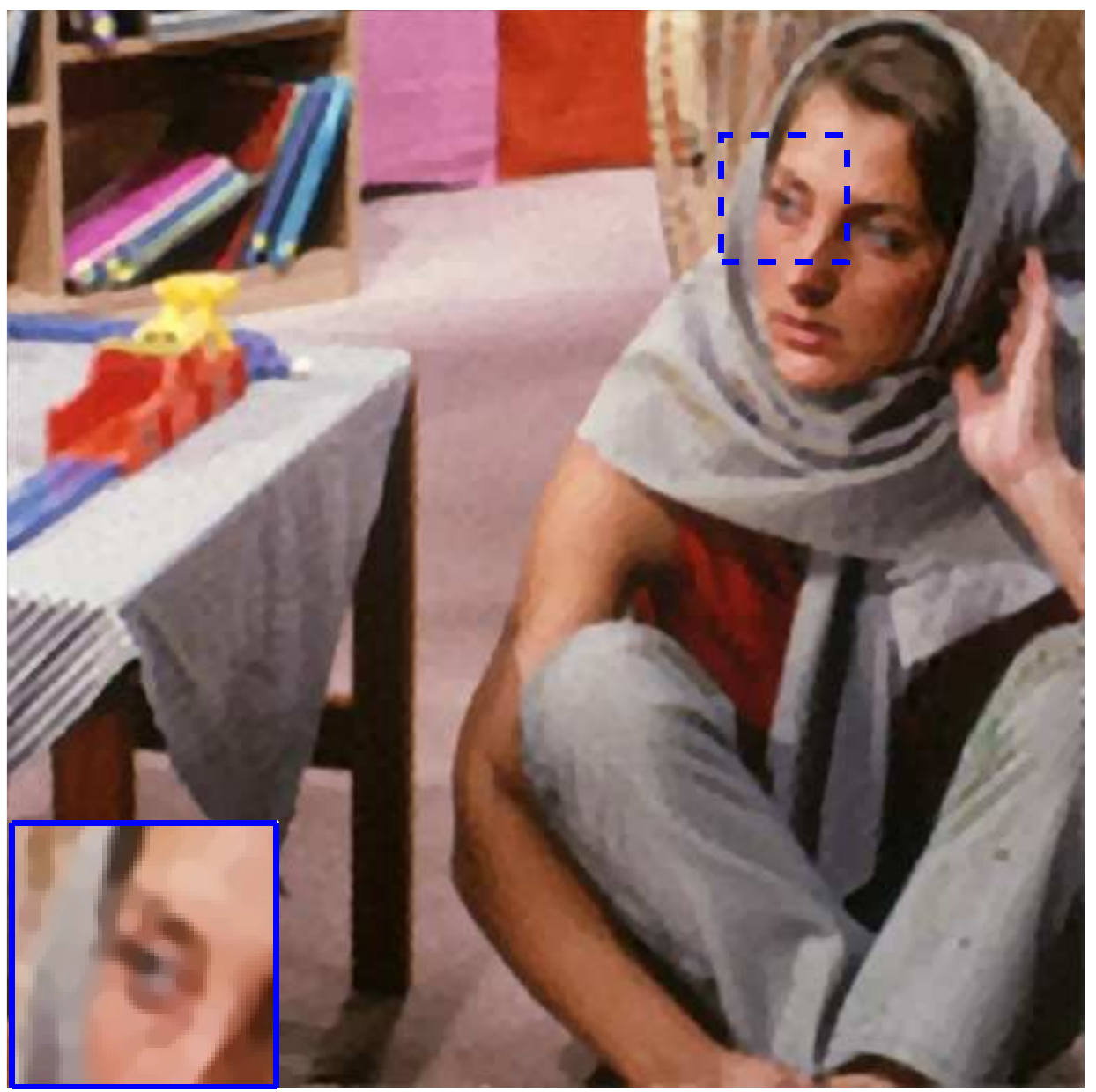} &\label{fig_FSR_barbara_TV}
\end{tabular}
\caption{SR of the Monarch, Lena and Barbara images when considering a TV-regularization: visual results.}
\label{images_TV}
\end{figure*}

\subsubsection{TV-regularization}
The regularization parameter was manually fixed  (by cross validation) to $\tau=2\times 10^{-3}$ for the image ``Lena", to $\tau = 1.8\times 10^{-3}$ for the image ``monarch" and to $\tau = 2.5\times 10^{-3}$ for the image ``Barbara". Fig. \ref{images_TV} shows the SR results obtained using the bicubic interpolation  (third column), ADMM based algorithm of \cite{MNg2010SR_TV}  (fourth column) and the proposed Algo. 4 (last column). As expected, the ADMM reconstructions perform much better than a simple interpolation of the LR image that is not able to solve the upsampling and deblurring problem. \AB{The results obtained with the proposed algorithm and with the method of \cite{MNg2010SR_TV} are visually very similar}. This visual inspection is  confirmed by the quantitative results provided in Table \ref{tab_TV_metrics}. However, the proposed algorithm has the advantage of being much faster than the algorithm of \cite{MNg2010SR_TV} \AB{(with computational times reduced by a factor larger than $2$)}. Moreover, Fig. \ref{curves_TV} illustrates the convergence of the two algorithms. The proposed single image SR algorithm (Algo. 4) converges faster and with less fluctuations than the algorithm of \cite{MNg2010SR_TV}. This result can be explained by the fact that the algorithme in \cite{MNg2010SR_TV} requires to handle more variables in the ADMM scheme than the proposed algorithm.

\begin{table*}
\begin{center}
\caption{SR of the Monarch, Lena and Barbara images when considering a TV-regularization: quantitative results.}
\label{tab_TV_metrics}
\begin{tabular}{|c|c|c|c|c|c|c|}
\hline
Image & Method & PSNR (dB)& ISNR (dB)  & MSSIM  & Time (s) & Iter.\\
\hline
\multirow{3}{*}{Monarch}
& Bicubic             & 23.11 & -     & 0.75 & 0.002 & -\\
& ADMM \cite{MNg2010SR_TV} & 29.49 & 6.38   & 0.84 & 78.95 & 812 \\
&  Algo. 4            & 29.38 & 6.28  & 0.83 & \textbf{19.81} & \textbf{170}\\
\hline
\multirow{3}{*}{Lena}
& Bicubic             & 25.80 & -     & 0.57 & 0.002 & -\\
& ADMM \cite{MNg2010SR_TV} & 30.81 & 5.00   & 0.66 & 35.67 & 372 \\
&  Algo. 4             & 30.91 & 5.11   & 0.66 & \textbf{20.63} & \textbf{164}\\
\hline
\multirow{3}{*}{Barbara}
& Bicubic             & 22.71 & -     & 0.48 & 0.002 & -\\
& ADMM \cite{MNg2010SR_TV} & 24.80 & 2.09  & 0.56 & 13.85 & 148 \\
&  Algo. 4             & 24.84 & 2.13  & 0.56 & \textbf{8.36}& \textbf{73} \\
\hline
\end{tabular}
\end{center}
\end{table*}

\begin{figure}
\settoheight{\tempdima}{\includegraphics[width=.3\linewidth]{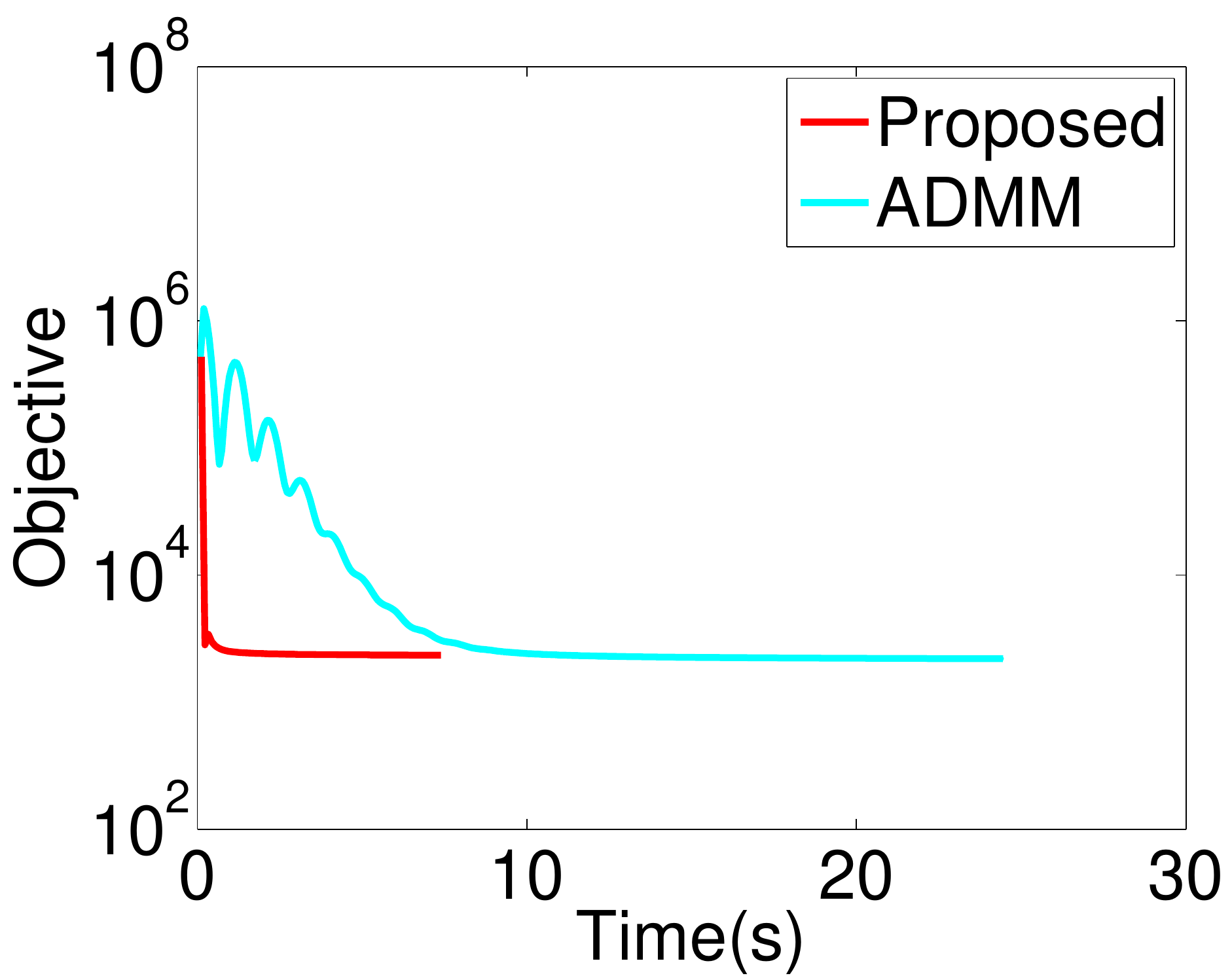}}%
\centering\begin{tabular}{@{}c@{}c@{}c@{}}
 &\textbf{Objective}& \textbf{ISNR} \\
 \rowname{Monarch}&
\includegraphics[width=0.3\linewidth]{figures/ex2_tv/obj_time_monarch_tv} &\label{fig_CostCPU_monarch_TV}
\includegraphics[width=0.3\linewidth]{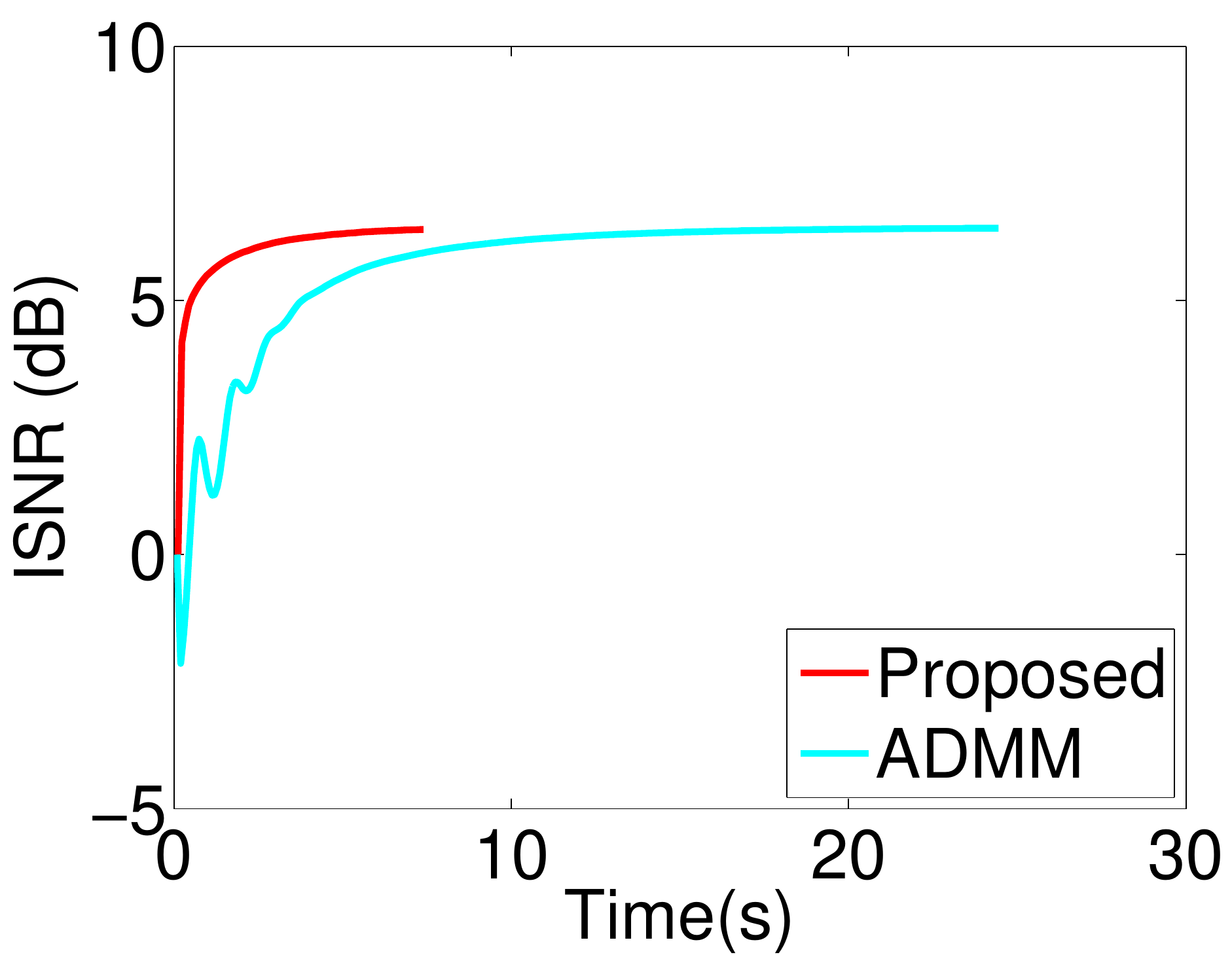} \label{fig_ISNRCPU_monarch_TV}  \\
\rowname{Lena}&
\includegraphics[width=0.3\linewidth]{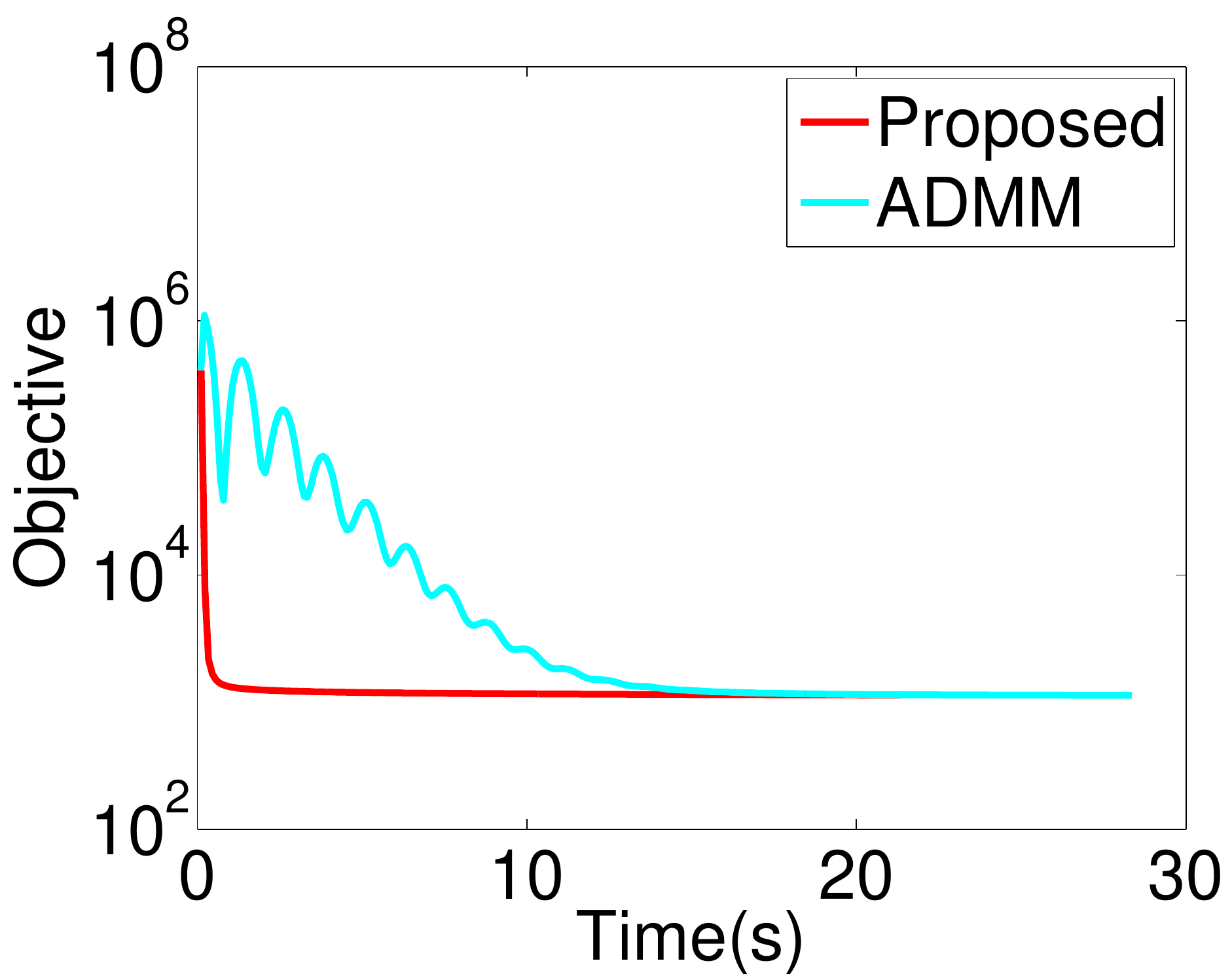} &\label{fig_CostCPU_Lena_TV}
\includegraphics[width=0.3\linewidth]{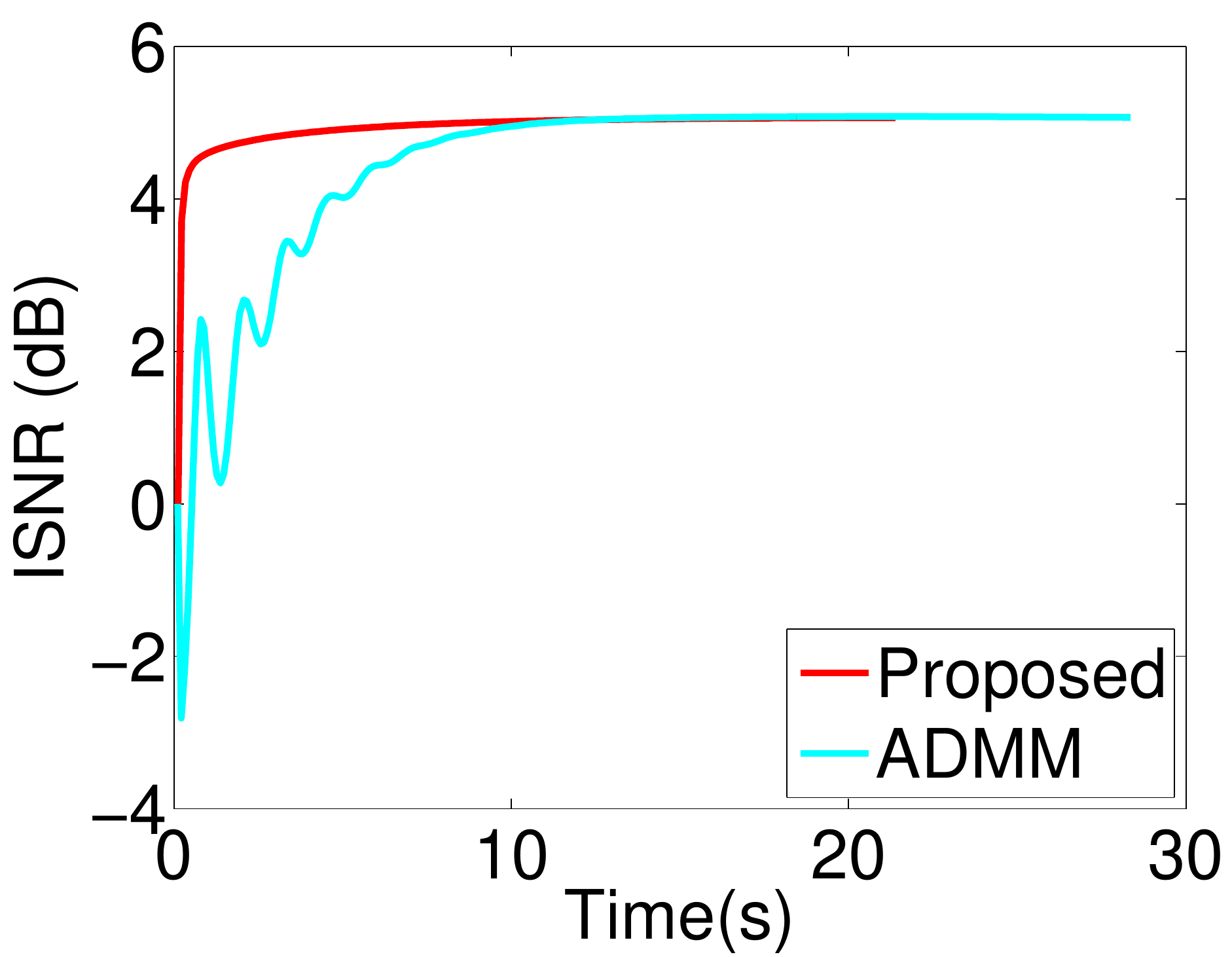} \label{fig_ISNRCPU_Lena_TV}  \\
\rowname{Barbara}&
\includegraphics[width=0.3\linewidth]{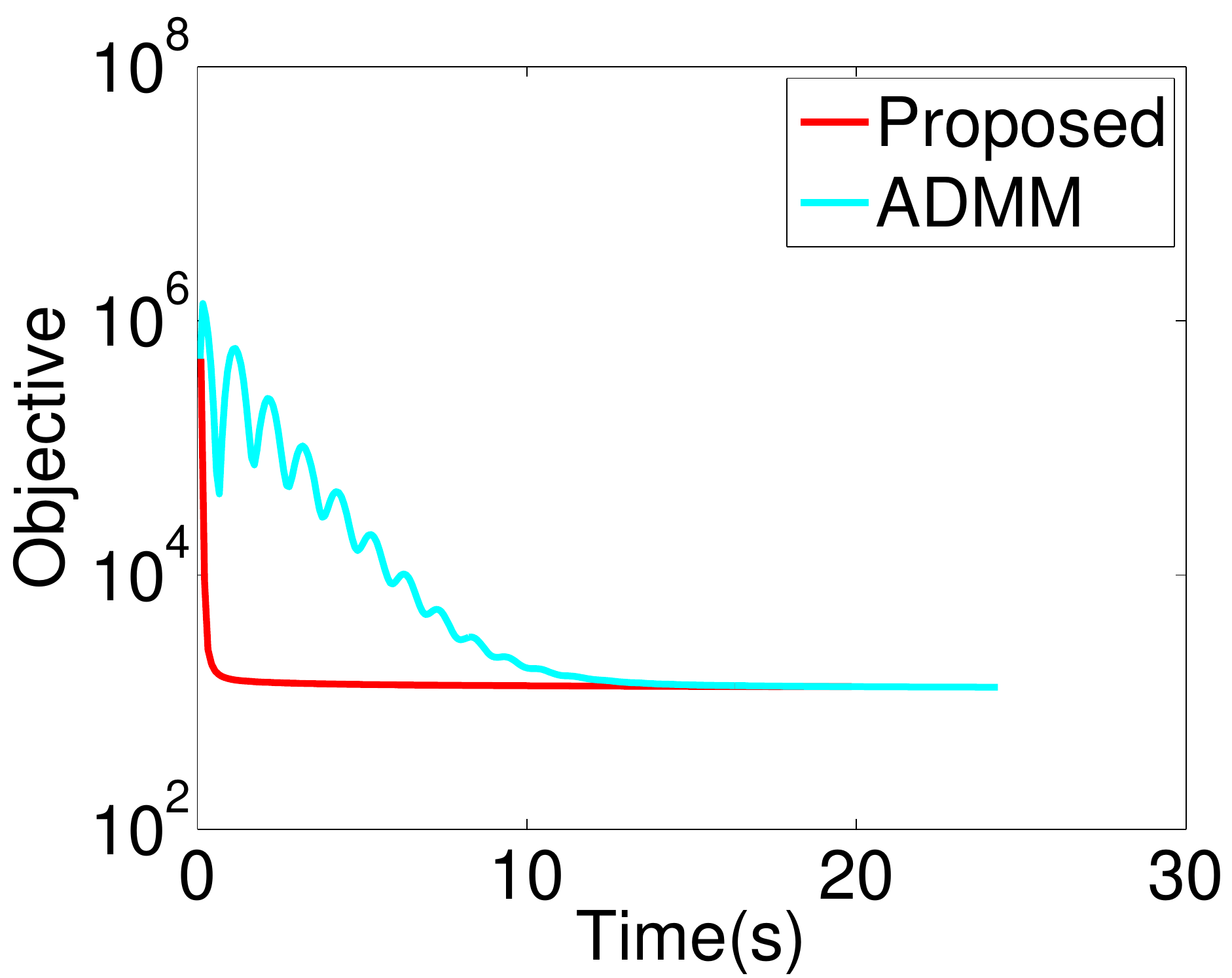} &\label{fig_CostCPU_Barbara_TV}
\includegraphics[width=0.3\linewidth]{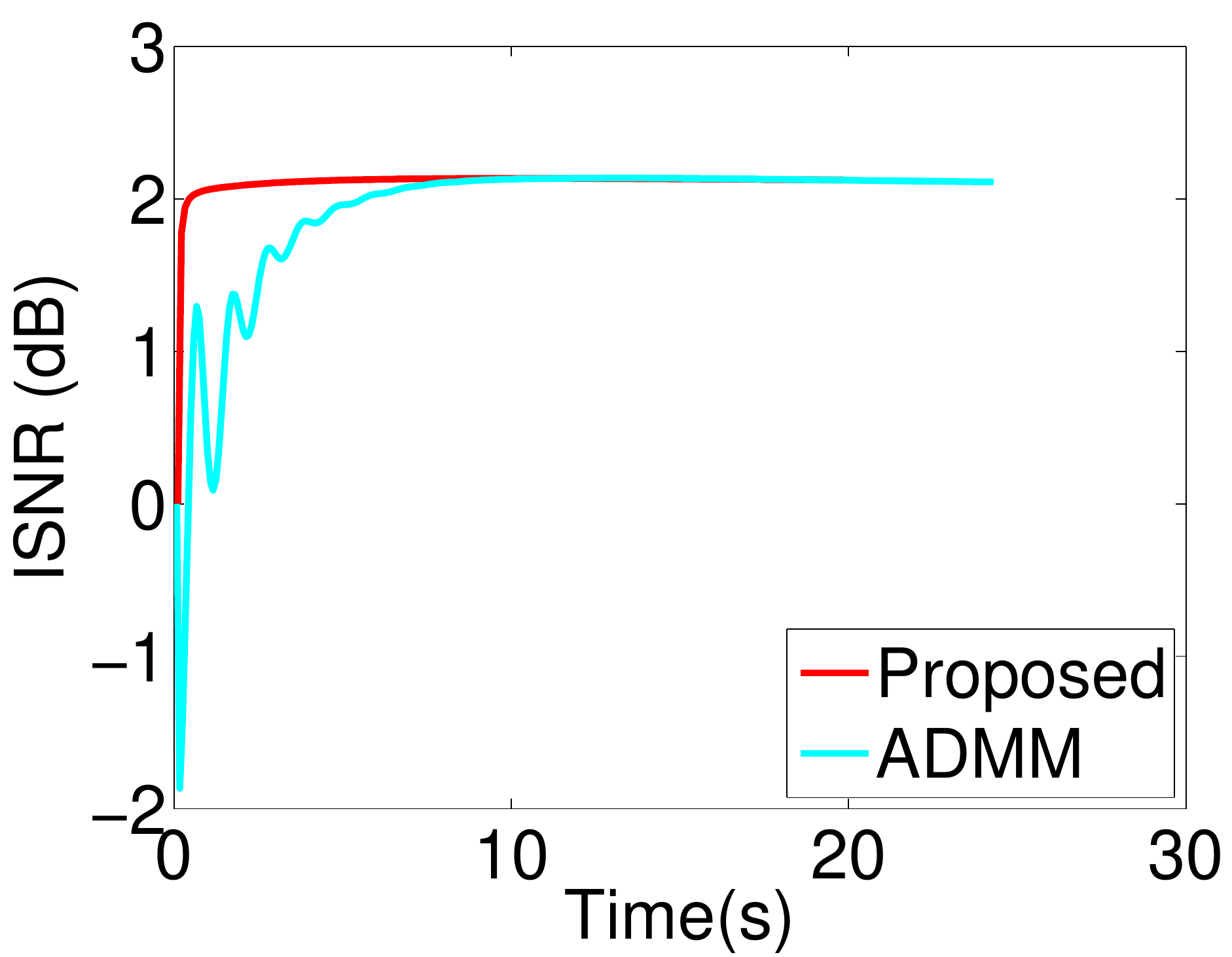} \label{fig_ISNRCPU_Barbara_TV}
\end{tabular}
\caption{SR of the Monarch, Lena and Barbara images when considering a TV-regularization: objective function (left) and ISNR (right) vs time.}
\label{curves_TV}
\end{figure}

\subsubsection{$\ell_1$-norm regularization in the wavelet domain}
This section evaluates the performance of Algo. 5, which is compared with a generalization of the method proposed in \cite{MNg2010SR_TV} to an $\ell_1$-norm regularization in the wavelet domain. The motivations for working in the wavelet domain are essentially to take advantage of the sparsity of the wavelet coefficients. All experiments were conducted using the discrete Haar wavelet transform and the Rice wavelet toolbox \cite{rwt_toolbox}. For both implementations, the regularization parameter was adjusted by cross validation, leading to $\tau = 2\times 10^{-4}$ for the image ``Lena", $\tau =  1.8\times 10^{-4}$ for the image ``Monarch" and $\tau = 2.5\times 10^{-4}$ for the image ``Barbara".

\begin{figure}[h!]
\centering\begin{tabular}{@{}c@{}c@{}}
 ADMM \cite{MNg2010SR_TV} & Algo. 5 \\
\includegraphics[width=0.25\linewidth]{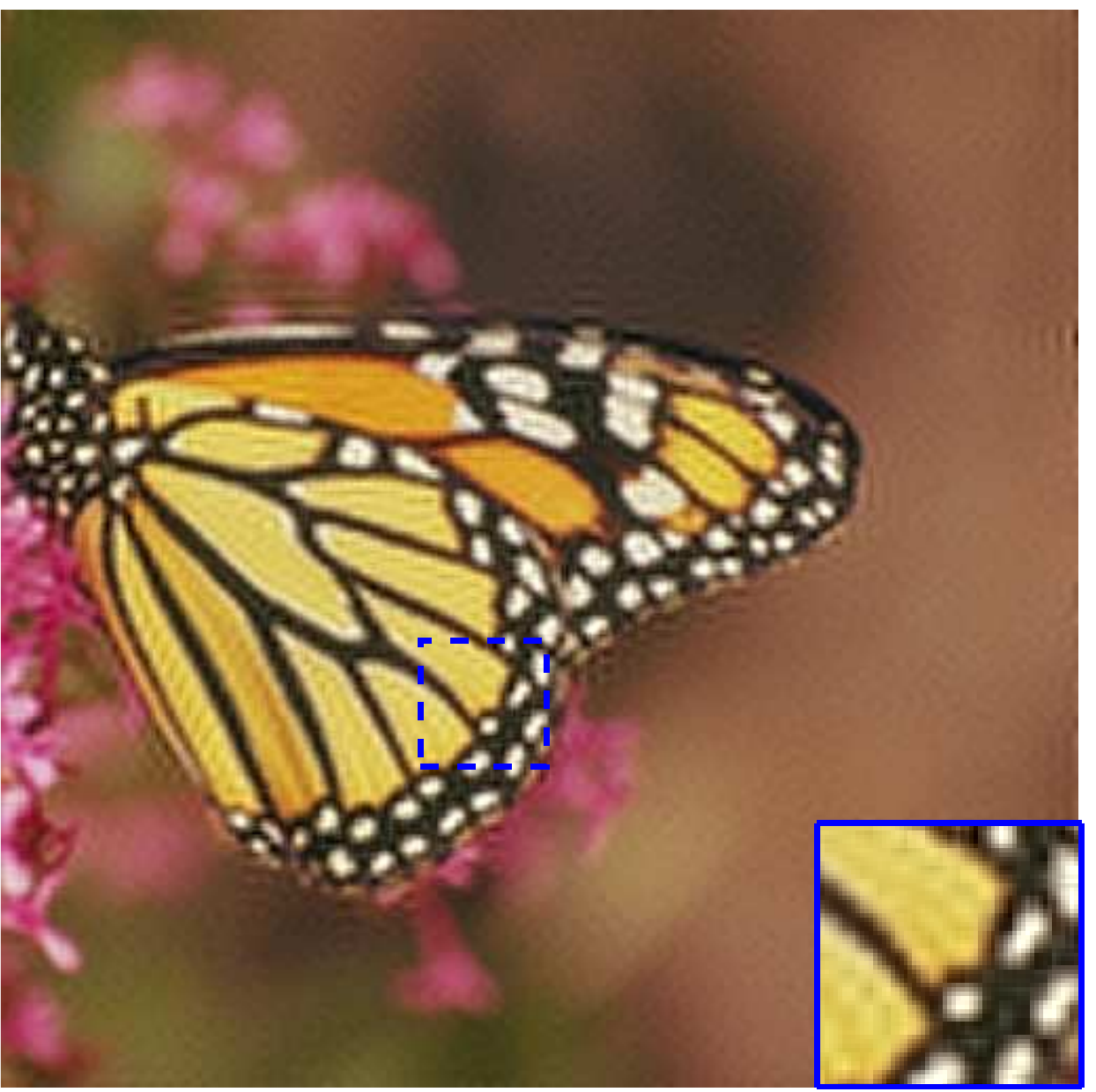} \label{fig_direct_monarch_l1} &
\includegraphics[width=0.25\linewidth]{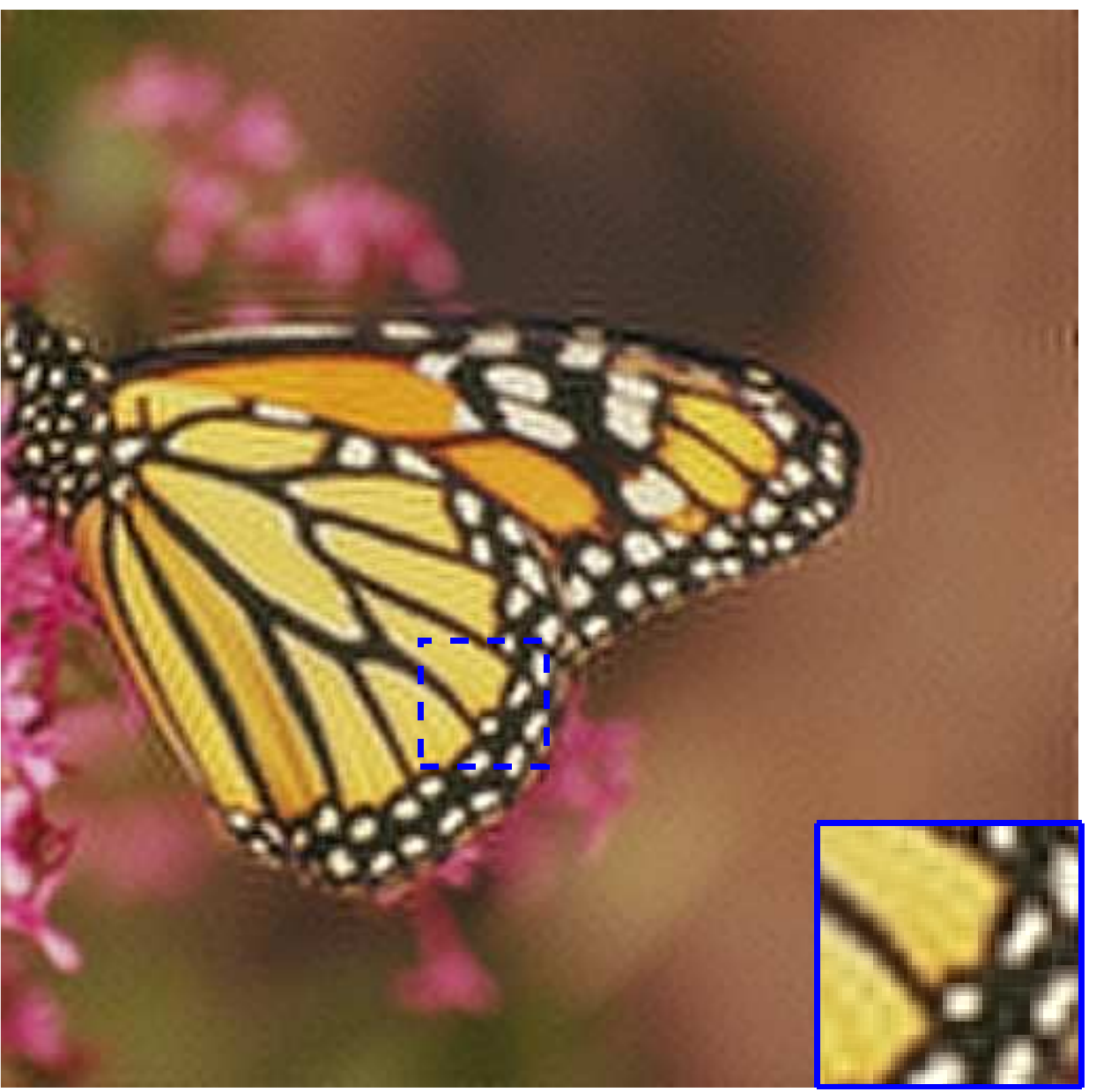} \label{fig_FSR_monarch_l1}\\
\includegraphics[width=0.25\linewidth]{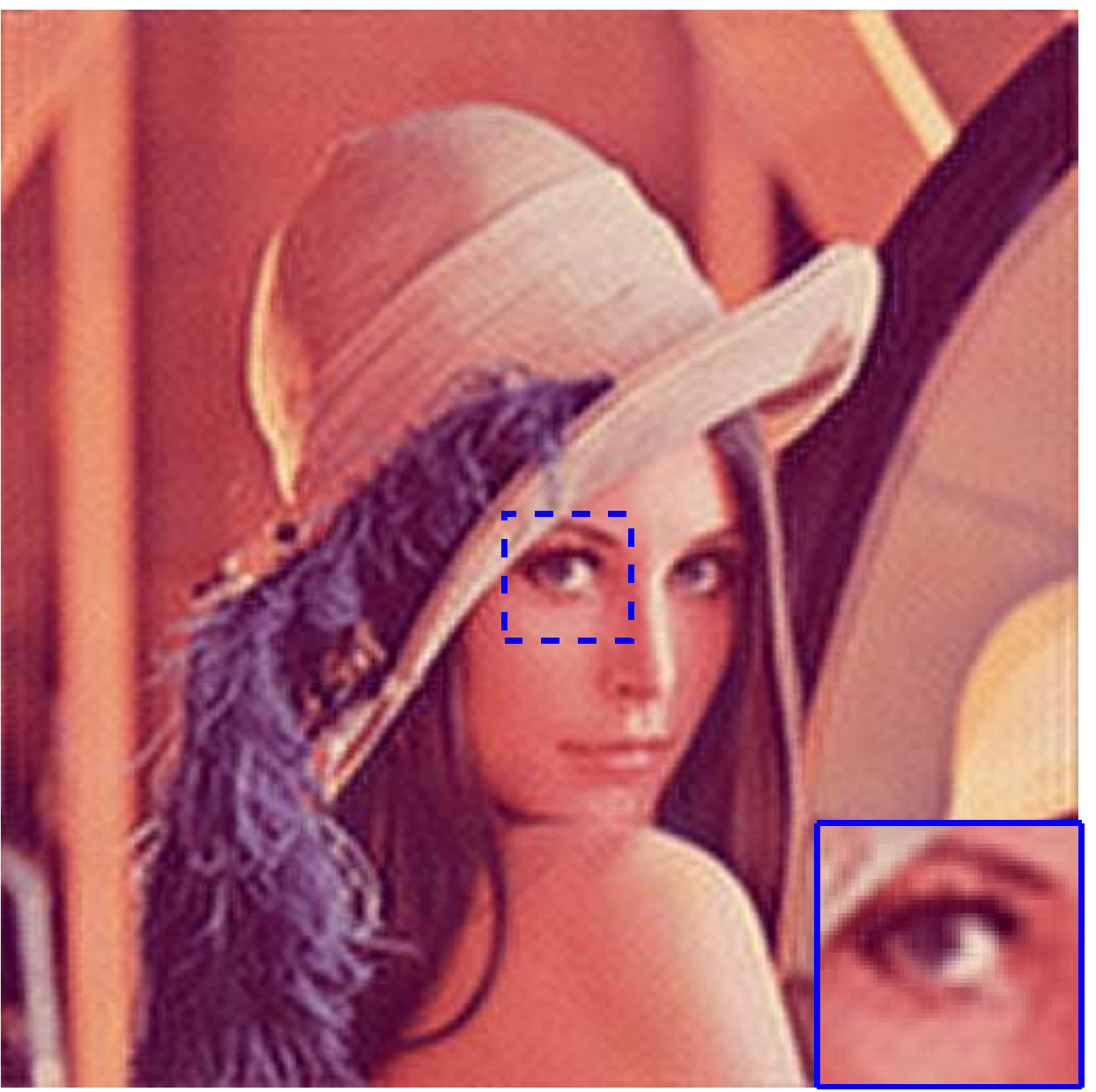} \label{fig_direct_lena_l1} &
\includegraphics[width=0.25\linewidth]{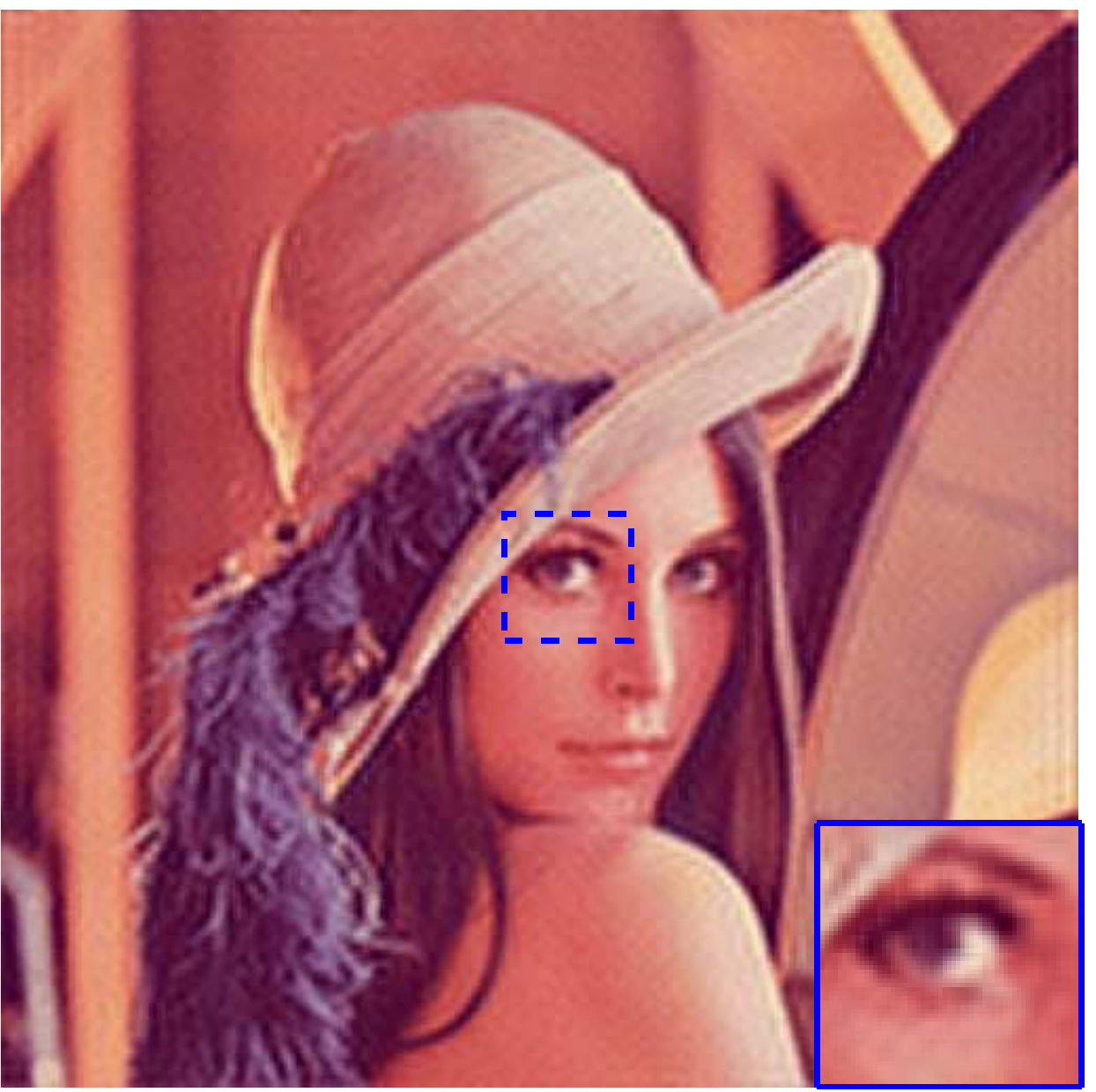} \label{fig_FSR_lena_l1}\\
\includegraphics[width=0.25\linewidth]{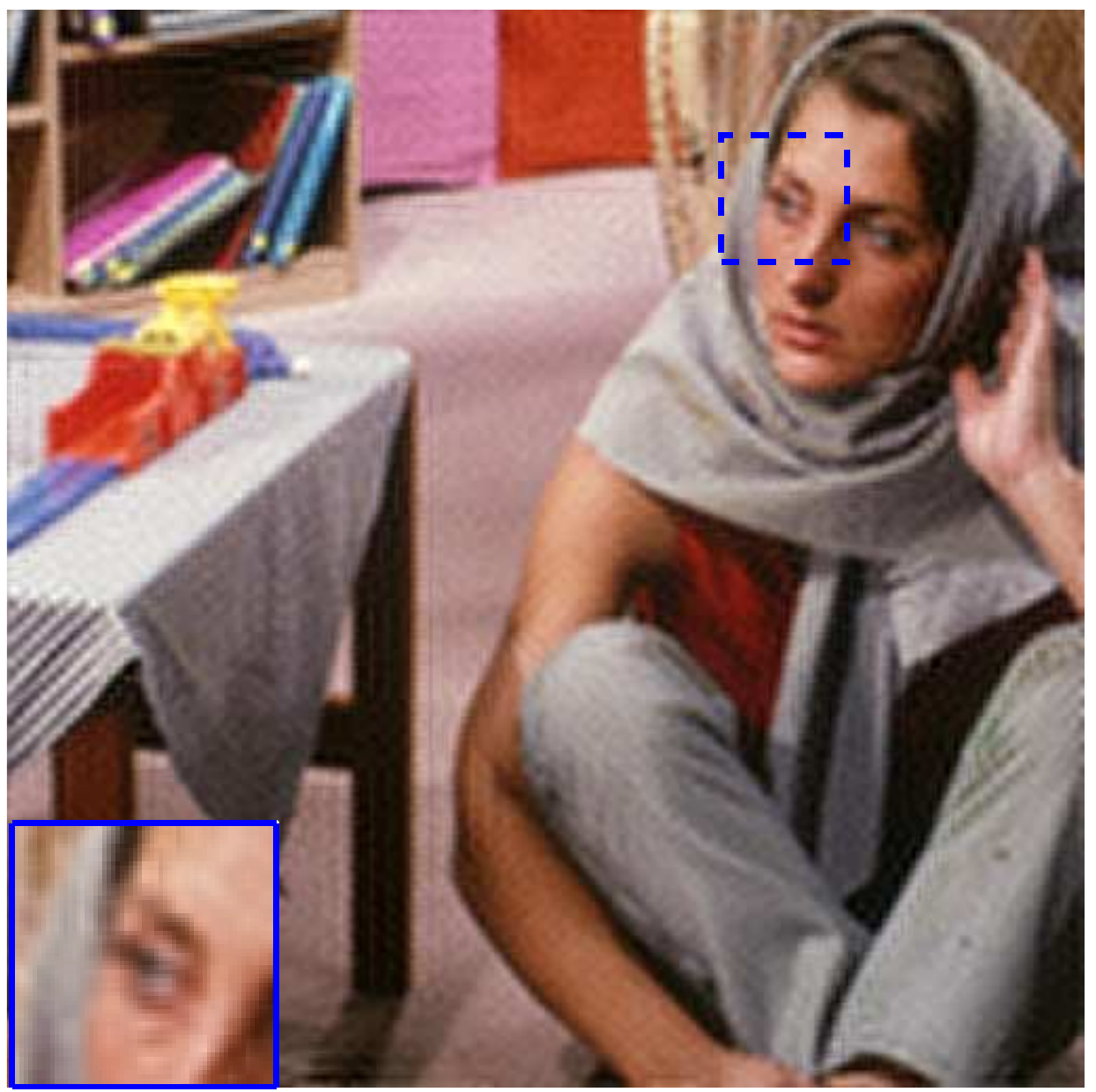} \label{fig_direct_barbara_l1} &
\includegraphics[width=0.25\linewidth]{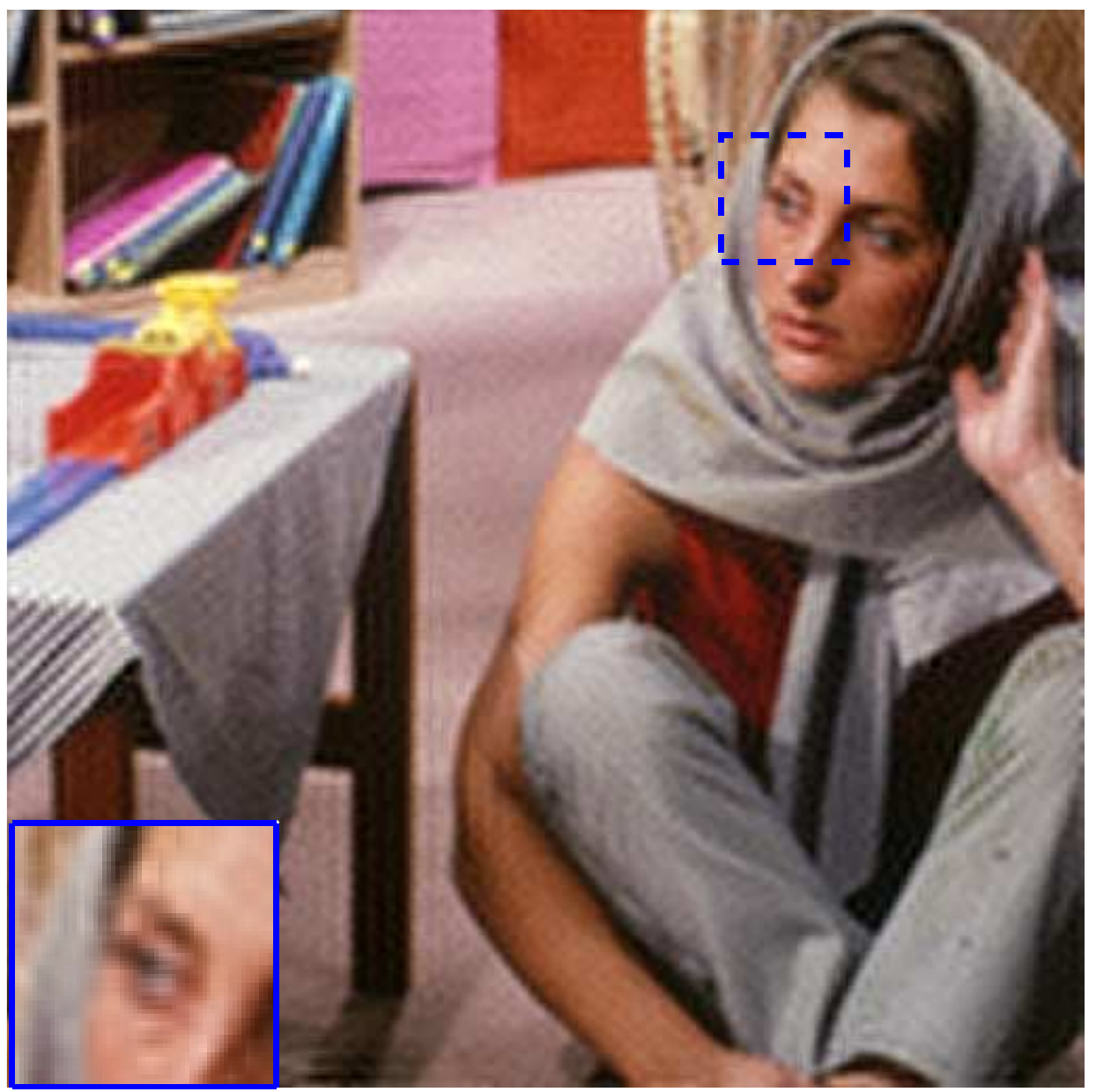} \label{fig_FSR_barbara_l1}
\end{tabular}
\caption{SR of the Monarch, Lena and Barbara images when considering a $\ell_1 $-norm regularization in the wavelet domain: visual results.}
\label{images_l1}
\end{figure}

Fig. \ref{images_l1} shows the SR reconstruction results with an $\ell_1$-norm minimization in the wavelet domain.
The HR images obtained with Algo. 5 and with the algorithm of \cite{MNg2010SR_TV} adapted to the $\ell_1$-norm prior are visually similar and better than a simple interpolation. The numerical results shown in Table \ref{tab_l1_metrics} confirm that the two algorithms provide similar reconstruction performance. However, as in the previous case (TV regularization), the proposed algorithm is characterized by much smaller computational times than the standard ADMM implementation. The faster and smoother convergence obtained with the proposed method (Algo. 5) can be observed in Fig. \ref{curves_l1}. Note that the fluctuations of the objective function and PSNR values (versus the number of iterations) obtained with the method of \cite{MNg2010SR_TV} are due to the variable splitting, which requires more variables and constraints \nd{to be handled} than for the proposed method.

\begin{table*}
\begin{center}
\caption{SR of the Monarch, Lena and Barbara images when considering a $\ell_1 $-norm regularization in the wavelet domain: quantitative results.}
\label{tab_l1_metrics}
\begin{tabular}{|c|c|c|c|c|c|c|}
\hline
Image & Method & PSNR (dB) & ISNR (dB) & MSSIM  & Time (sec.) &Iter.\\
\hline
\multirow{3}{*}{Monarch}
& Bicubic           & 23.11 & -     & 0.75 & 0.002 & -\\
&ADMM \cite{MNg2010SR_TV}$^\ast$ &27.08 & 3.97 & 0.74 & 34.08 & 400 \\
& Algorithm 5          & 27.13 & 4.03 & 0.74 & \textbf{15.02} & \textbf{177}\\
\hline
\multirow{3}{*}{Lena}
& Bicubic            & 25.80 & -    & 0.57 & 0.002 & -\\
&ADMM \cite{MNg2010SR_TV} & 30.09 & 4.29   & 0.62 & 38.48 & 450\\
& Algorithm 5            & 30.21 & 4.41  & 0.63 & \textbf{14.25}& \textbf{164} \\
\hline
\multirow{3}{*}{Barbara}
& Bicubic            & 22.71 & -     & 0.48 & 0.002 & -\\
&ADMM \cite{MNg2010SR_TV} & 24.66 & 1.95  & 0.52 & 34.13 & 400 \\
& Algorithm 5            & 24.70 & 2.00  & 0.53 & \textbf{14.83}& \textbf{171} \\
\hline
\end{tabular}
\end{center}
{\footnotesize \rule{0in}{1.2em}$^\ast$\scriptsize The algorithm of \cite{MNg2010SR_TV} was originally proposed for SR using a TV regularization. This algorithm has been modified by the authors to solve the $\ell_1$-norm penalized optimization problem.}
\end{table*}

\begin{figure}
\settoheight{\tempdima}{\includegraphics[width=.3\linewidth]{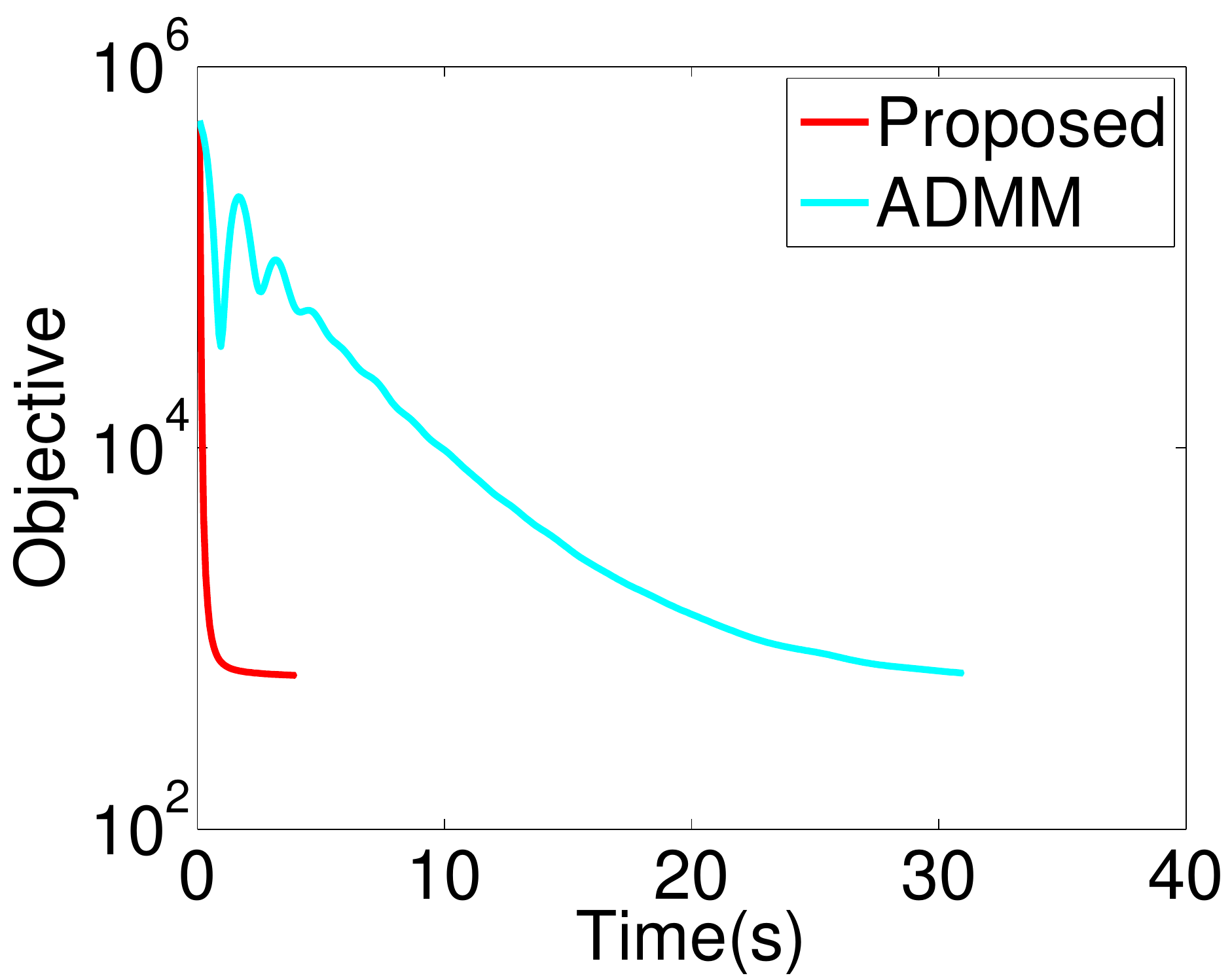}}%
\centering\begin{tabular}{@{}c@{}c@{}c@{}}
 &\textbf{Objective}& \textbf{\AB{PSNR}} \\
  \rowname{Monarch}&
\includegraphics[width=0.3\linewidth]{figures/ex2_l1/obj_time_monarch_l1} &\label{fig_CostCPU_monarch_l1}
\includegraphics[width=0.3\linewidth]{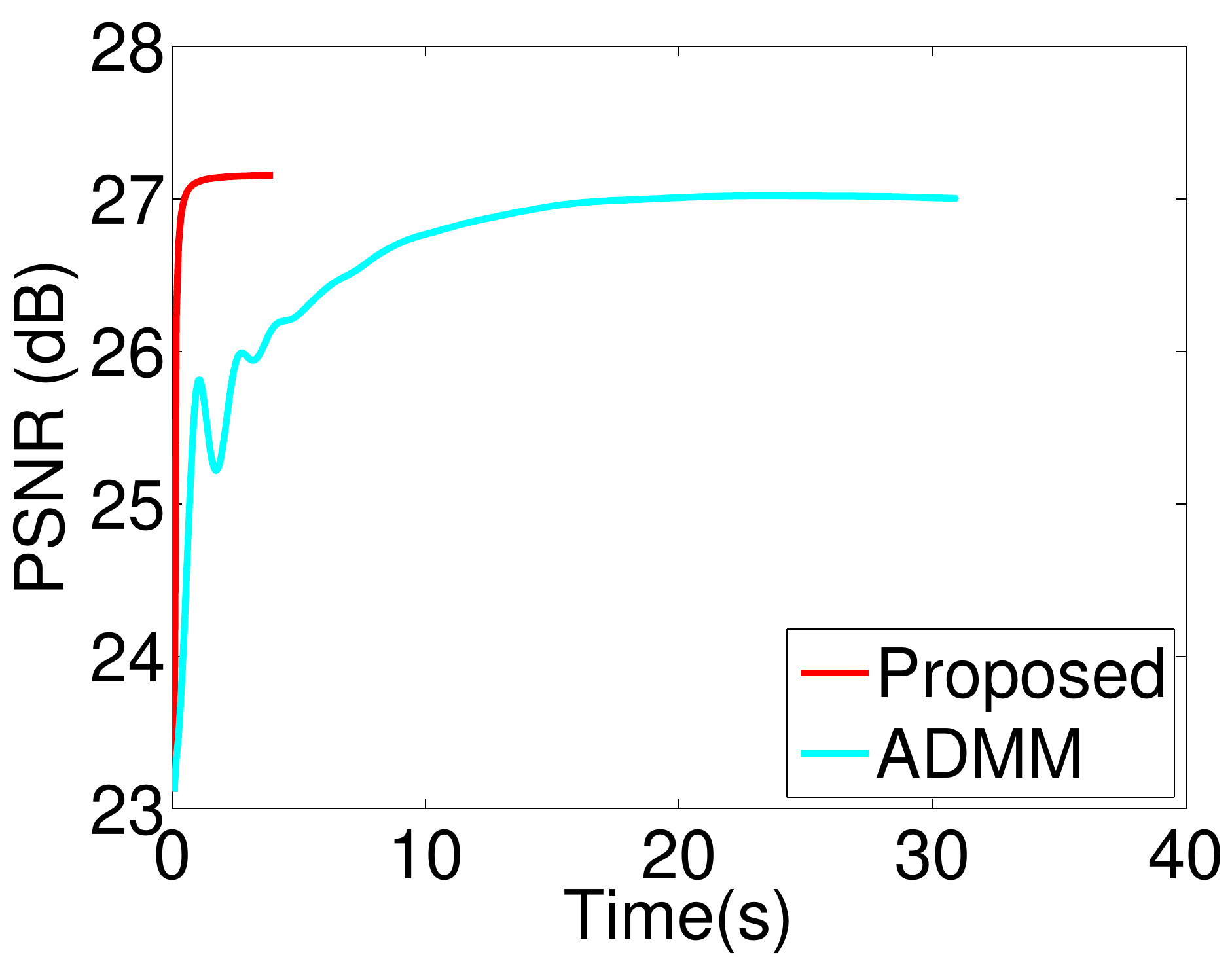} \label{fig_ISNRCPU_monarch_l1}  \\
 \rowname{Lena}&
\includegraphics[width=0.3\linewidth]{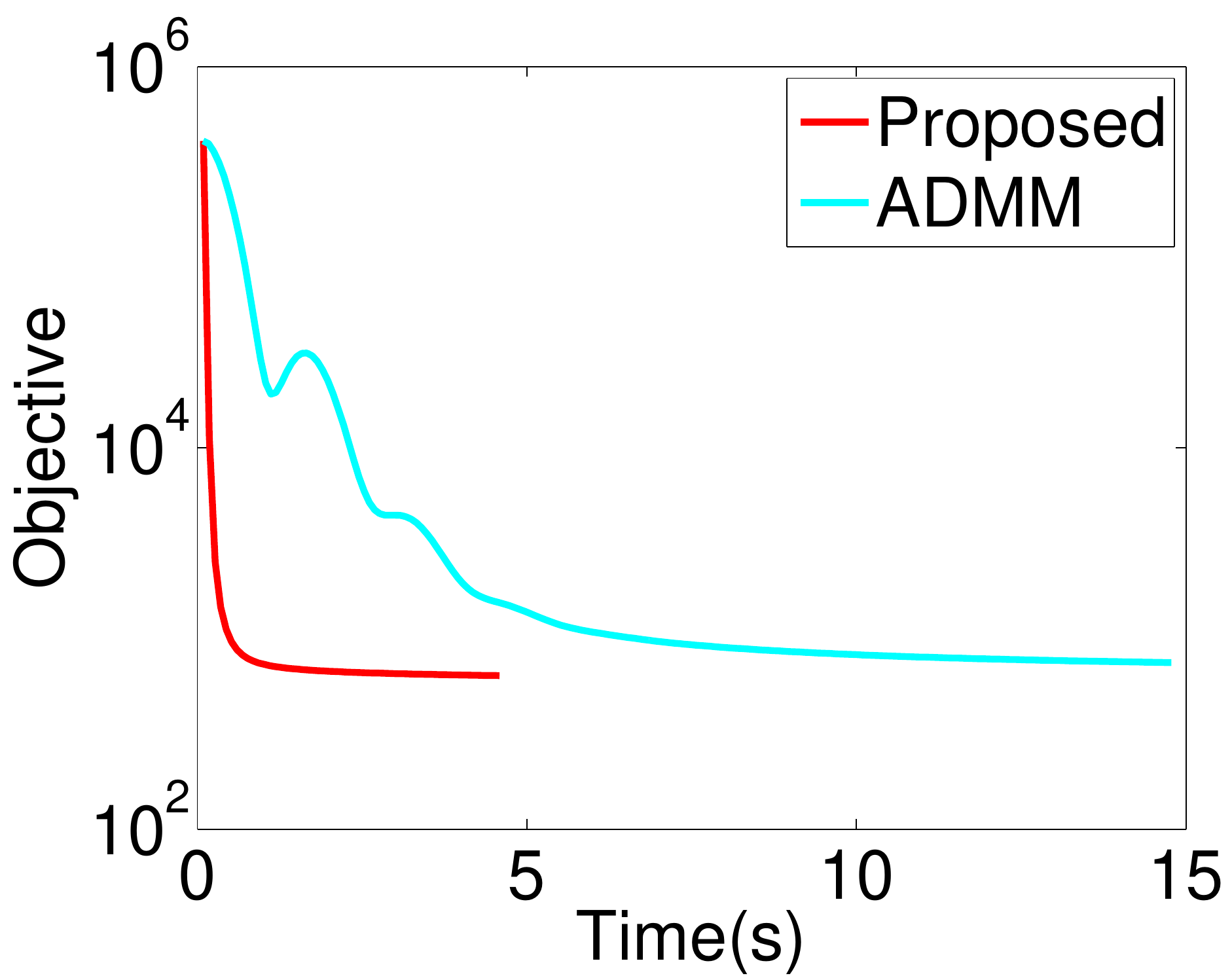} &\label{fig_CostCPU_Lena_l1}
\includegraphics[width=0.3\linewidth]{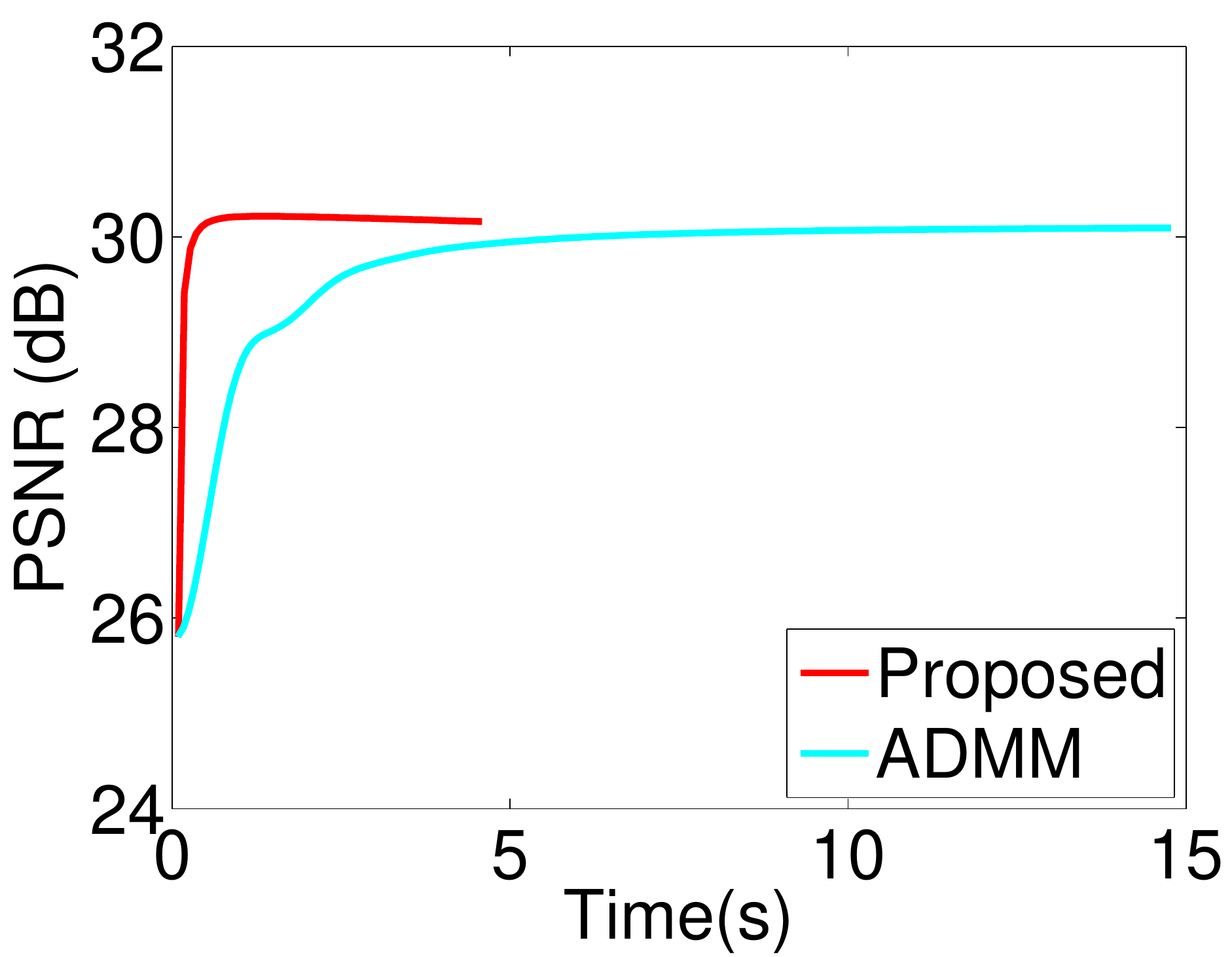} \label{fig_ISNRCPU_Lena_l1}  \\
 \rowname{Barbara}&
\includegraphics[width=0.3\linewidth]{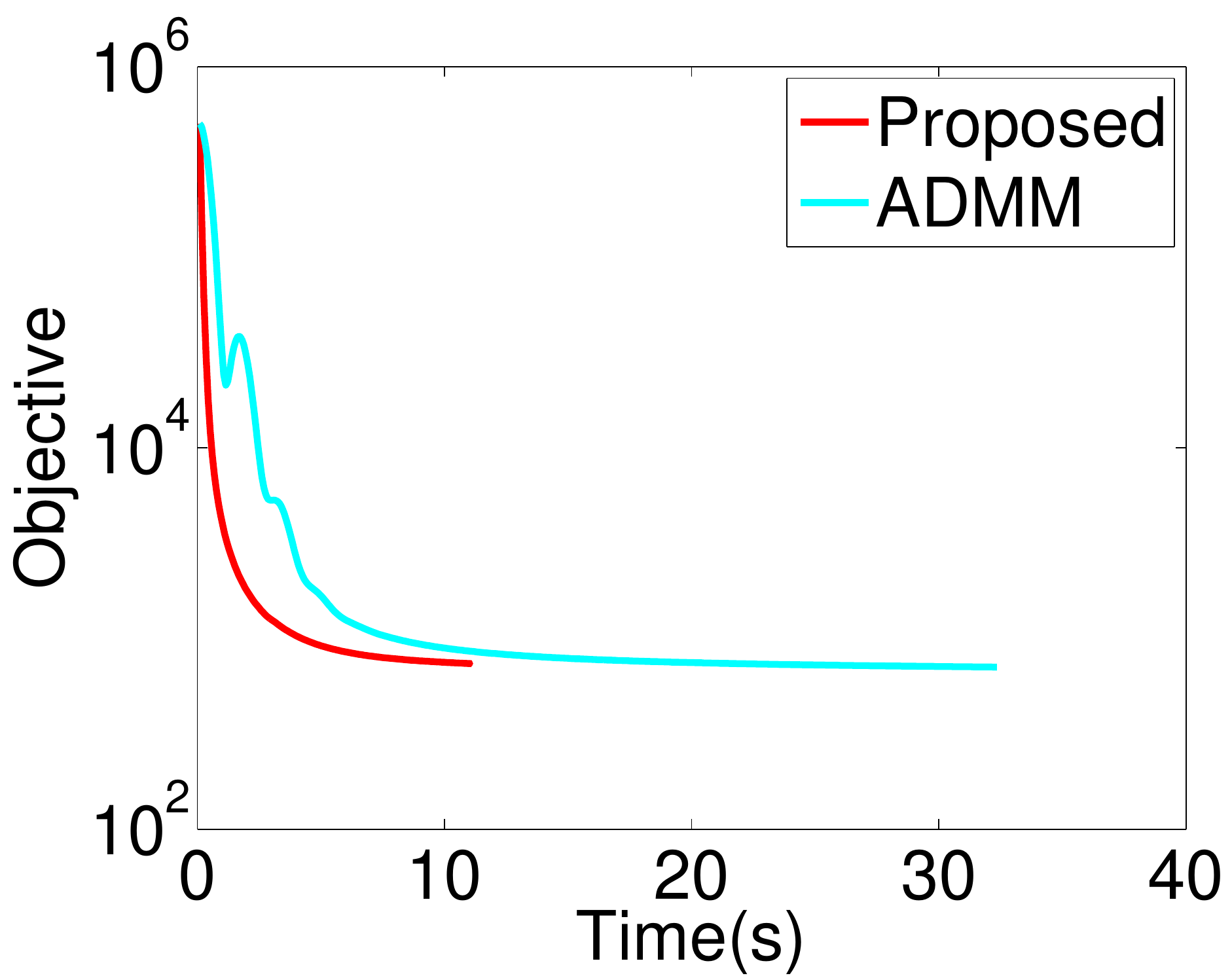} &\label{fig_CostCPU_Barbara_l1}
\includegraphics[width=0.3\linewidth]{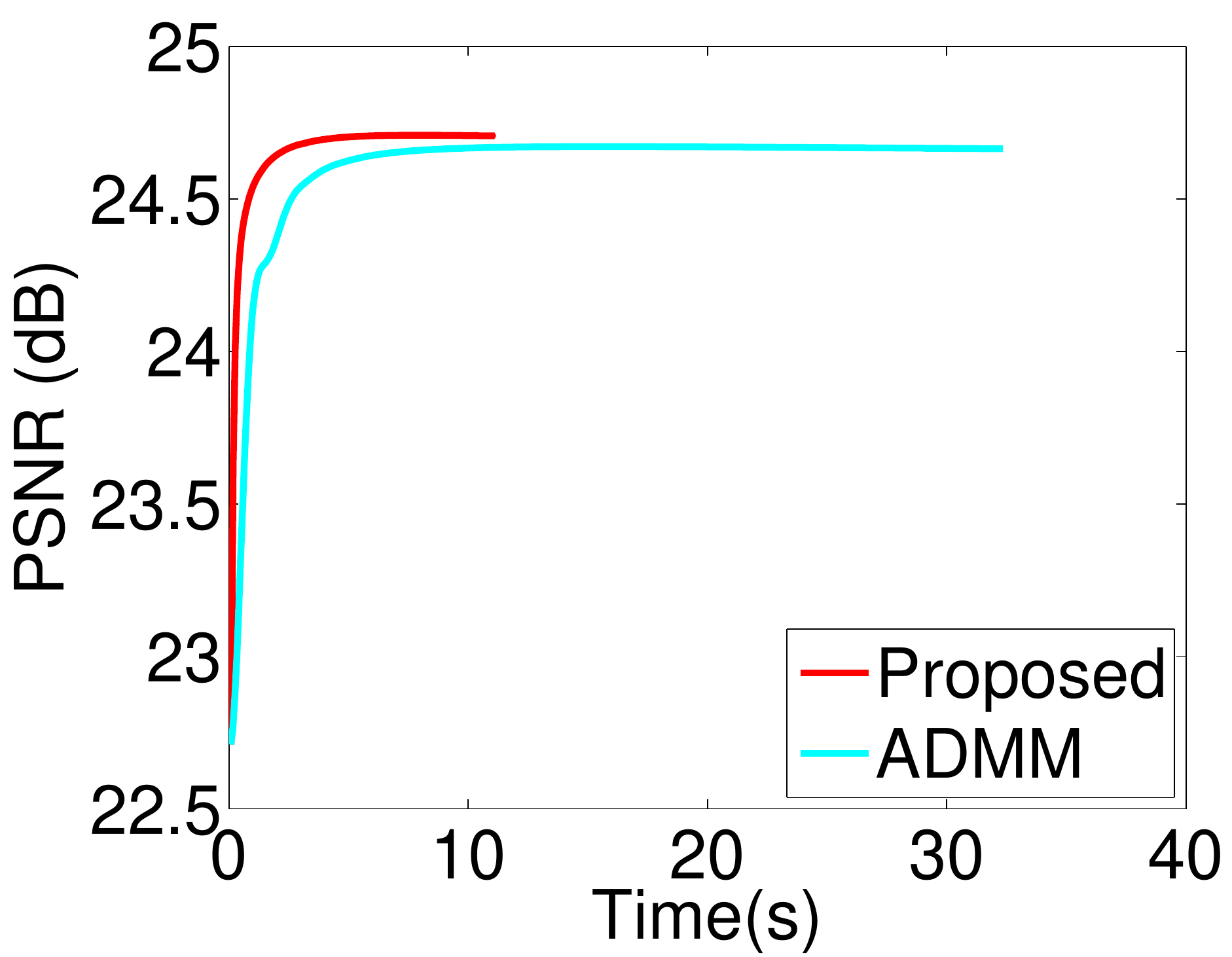} \label{fig_ISNRCPU_Barbara_l1}
\end{tabular}
\caption{SR of the Monarch, Lena and Barbara images when considering an $\ell_1 $-norm regularization in the wavelet domain: objective function (left) and PSNR (right) vs time.}
\label{curves_l1}
\end{figure}


\section{Conclusion and perspectives}
This paper \nd{studied} a new fast single image super-resolution framework based on the widely used image formation model. The proposed super-resolution approach \nd{computed the super-resolved image efficiently by exploiting intrinsic properties of the decimation and the blurring operators in the frequency domain}. A large variety of priors was shown to be able to be handled in the proposed super-resolution scheme. Specifically, \nd{when considering an $\ell_2$-regularization, the target image was computed analytically,} getting rid of any iterative steps. \nd{For more complex priors (i.e., non $\ell_2$-regularization),} variable splitting allowed this analytical solution to be embedded into the \AB{augmented Lagrangian} framework, thus accelerating \nd{various existing} schemes for single image super-resolution. Results on several natural images \nd{confirmed} the computational efficiency of the proposed approach and shown its fast and smooth convergence. \AB{As a perspective of this work, an interesting research track consists of extending the proposed method to some online applications such as video super-resolution and medical imaging, to evaluate its robustness to non-Gaussian noise and to extend it to semi-blind or blind deconvolution or multi-frame super-resolution. Considering a more practical case for super-resolving real images compressed by JPEG or JPEG-2000 algorithms will also be interesting and deserves further exploration.}

\appendices
\section{\AB{Derivation of the analytical solution \eqref{eq_anas_gl2}}}
\label{app:theorem1}
\AB{The computational details for obtaining the result in \eqref{eq_anas_gl2} from \eqref{l2_anas_generic} are summarized hereinafter. First, denoting $\bfr = \bfH^H\bfS^H\bfy +2\tau\bfA^H\bfv$, the solution \eqref{l2_anas_generic} is
\begin{align}
\hat{\bfx} &= (\bfH^H\underline{\bfS}\bfH +2\tau \bfA^H\bfA)^{-1}\bfr \nonumber\\
& = \bfF^H \left(\bsLambda^H\bfF\underline{\bfS}\bfF^H\bsLambda + 2\tau \Qi{\bfF\bfA^H\bfA\bfF^H} \right)^{-1}\bfF \bfr. \label{eq_b1}
\end{align}
Based on Lemma \ref{lemma:1}, $\bsLambda^H\bfF\underline{\bfS}\bfF^H\bsLambda$ is computed as
\begin{eqnarray}
&&\bsLambda^H\bfF\underline{\bfS}\bfF^H\bsLambda  \notag \\
&=&\frac{1}{d}\bsLambda^H \left(\bfJ_d \otimes \bfI_{N_l}\right) \bsLambda  \label{eq_lemma1_a} \\
&=&  \frac{1}{d}\bsLambda^H \left(\left(\textbf{1}_d \textbf{1}_d^T\right) \otimes \left(\bfI_{N_l}\bfI_{N_l}\right)\right) \bsLambda \label{eq_lemma1_b}  \\
&=&\frac{1}{d}\bsLambda^H \left(\textbf{1}_d\otimes \bfI_{N_l}\right)\left(\textbf{1}_d^T \otimes \bfI_{N_l}\right) \bsLambda \label{eq_lemma1_c}  \\
&=& \frac{1}{d}\left(\bsLambda^H [\underbrace{\bfI_{N_l},\cdots ,\bfI_{N_l}}_d]^T\right) \left( [\underbrace{\bfI_{N_l},\cdots ,\bfI_{N_l}}_d]\bsLambda \right)  \label{eq_lemma1_d} \\
&=&\frac{1}{d} \EigSq^H \EigSq.  \label{eq_lemma1}
\end{eqnarray}
Note that \eqref{eq_lemma1_b} was obtained from \eqref{eq_lemma1_a} by replacing $\bfJ_d$ by $\textbf{1}_d \textbf{1}_d^T$,
where $\textbf{1}_d \in \mathbb{R}^{d \times 1}$ is a vector of ones. Obtaining \eqref{eq_lemma1_c} from \eqref{eq_lemma1_b} is straightforward using the following property of the Kronecker product $\otimes$
\begin{equation*}
\mathcal{A}\mathcal{B} \otimes \mathcal{C}\mathcal{D} = (\mathcal{A}\otimes \mathcal{C})(\mathcal{B} \otimes \mathcal{D}).
\label{eq^{k}ronecker}
\end{equation*}
In \eqref{eq_lemma1_d}, $\bsLambda \in \mathbb{R}^{N_h\times N_h}$ whereas $\left[\bfI_{N_l},\cdots ,\bfI_{N_l}\right] \in \mathbb{R}^{N_l\times N_h}$ and $\left[\bfI_{N_l},\cdots ,\bfI_{N_l}\right]^T \in \mathbb{R}^{N_h\times N_l}$ are block matrices whose blocks are equal to the identity matrix $\bfI_{N_l}$.
The matrix $\EigSq \in \mathbb{R}^{N_l\times N_h}$ in \eqref{eq_lemma1} is given by
\begin{align}
\EigSq
& = [\bfI_{N_l},\cdots ,\bfI_{N_l}] \bsLambda  \notag \\
& = [\bfI_{N_l},\cdots ,\bfI_{N_l}] \diag{\bsLambda_1,\cdots,\bsLambda_d} \notag \\
& = [\bfI_{N_l},\cdots ,\bfI_{N_l}]
	\begin{bmatrix}
       \bsLambda_1 & \cdots &0           \\[0.3em]
        \vdots&  \ddots &\vdots     \\[0.3em]
        0  & \cdots        & \bsLambda_d
     \end{bmatrix} \\
&=[\bsLambda_1,\bsLambda_2, \cdots,\bsLambda_d].
\label{eq:def_Lambda}
\end{align}}
As a consequence, \eqref{eq_b1} can be written as in \eqref{l2_anas_generic_v2}, i.e.,
\begin{align}
\hat{\bfx}
&= \bfF^H \left( \frac{1}{d}\EigSq^H\EigSq + 2\tau  \Qi{\bfF\bfA^H\bfA\bfF^H}  \right)^{-1}\bfF \bfr \label{eq_b2}\\
& = \bfF^H \left[ \frac{1}{2\tau }\bsPsi - \frac{1}{2\tau }\bsPsi\EigSq^H \left( d\bfI_{N_l} +\frac{1}{2\tau }\EigSq \bsPsi\EigSq^H \right)^{-1} \EigSq \bsPsi\frac{1}{2\tau}\right]\bfF\bfr  \label{eq_b3}\\
& = \frac{1}{2\tau }\bfF^H\bsPsi\bfF\bfr- \frac{1}{2\tau } \bfF^H\bsPsi\EigSq^H\left(2\tau d\bfI_{N_l} + \EigSq\bsPsi\EigSq^H  \right)^{-1}\EigSq\bsPsi\bfF\bfr
\end{align}
where $\bsPsi = \Qi{\bfF\left(\bfA^H\bfA\right)^{-1}\bfF^H}$. The Lemma \ref{lemma:2} is adopted from \eqref{eq_b2} to \eqref{eq_b3} \revAQ{with $\bfA_1 = \Qi{2\tau\bfF\bfA^H\bfA\bfF^H}$, $\bfA_2 = \EigSq^H$, $\bfA_3 = \frac{1}{d}\bfI$ and $\bfA_4 = \EigSq$.} \nd{Note that the matrices $\bfA_1$ and $\bfA_3$ are always invertible, implying that the Woodbury formula can be applied.}

\section{\AB{Pseudo codes of the proposed fast ADMM super-resolution methods for TV and $\ell_1$-norm regularizations}}
\label{app:psd_code}
\IncMargin{1em}
\begin{algorithm}
\label{alg3_admm_TV_fast}
\KwIn{$\bfy$, $\bfH$, $\bfS$, $\tau$, $d$, $\bfD_{\rm{h}}$ and $\bfD_{\rm{v}}$}
Set $k = 0$, choose $\mu>0$, $\bfd^0$, $\bfu^0$\;
\tcpp{Factorization of matrix $\bfH$}
$\bfH = {\bfF^H \bsLambda \bfF}$\;
$\EigSq = [\bsLambda_1,\bsLambda_2, \cdots,\bsLambda_d]$\;
\tcpp{Factorization of matrices $\bfD_{\rm{h}}$ and $\bfD_{\rm{v}}$}
$\bfD_{\rm{h}} = {\bfF^H \bsSigma_{\rm{h}} \bfF}$\;
$\bfD_{\rm{v}} = {\bfF^H \bsSigma_{\rm{v}} \bfF}$\;
$\bsPsi = (\bsSigma_{\rm{h}}^H\bsSigma_{\rm{h}} + \bsSigma_{\rm{v}}^H\bsSigma_{\rm{v}})^{-1}$ \;
\textbf{Repeat}\\
\tcpp{Update $\bfx$ using Theorem \ref{the:Ubar}}
$\bsrho_{\rm{h}} = \bfu^{k}_{\rm{h}} - \bfd^{k}_{\rm{h}}$\; $\bsrho_{\rm{v}} = \bfu^{k}_{\rm{v}} - \bfd^{k}_{\rm{v}}$ \;
$\bfF\bfr = \bfF(\bfH^H\bfS^H\bfy + \mu\bfD_{\rm{h}}\bsrho_{\rm{h}} + \mu\bfD_{\rm{v}}\bsrho_{\rm{v}})$\;
$\bfx_f = \left(\bsPsi\EigSq^H \left( \mu d\bfI_{N_l} + \EigSq\bsPsi\EigSq^H \right)^{-1}\EigSq\bsPsi\right)\bfF\bfr$ \;
$\bfx^{k+1} = \frac{1}{\mu}\bfF^H\bsPsi\bfF\bfr-\frac{1}{\mu}\bfF^H\bfx_f$ \;
\tcpp{Update $\bfu $ using the vector-soft-thresholding operator}
$\bsnu = [\bfD_{\rm{h}}\bfx^{k+1} + \bfd^{k}_{\rm{h}},\bfD_{\rm{v}}\bfx^{k+1} + \bfd^{k}_{\rm{v}}]$ \;
$\bfu^{k+1}[i] = \max \lbrace \textbf{0},\|\bsnu[i]\|_2-\tau / \mu \rbrace \frac{\bsnu[i]}{\|\bsnu[i]\|_2}$ \;
\tcpp{Update the dual variables $\bfd$}
$\bfd^{k+1} = \bfd^{k} +(\bfA\bfx^{k+1} - \bfu^{k+1})$\;
$k = k+1$\;
until stopping criterion is satisfied\;
\KwOut{$\hat{\bfx}= \bfx^{k}$.}
\caption{FSR with TV regularization}
\end{algorithm}

\newpage
\begin{algorithm}
\label{alg3_admm_l1_fast}
\KwIn{$\bfy$, $\bfH$, $\bfS$, $\tau$, $d$}
Set $k = 0$, choose $\mu>0$, $\bfd^0$, $\bfu^0$\;
\tcpp{Factorization of matrix $\bfH$}
$\bfH = {\bfF^H \bsLambda \bfF}$\;
$\EigSq = [\bsLambda_1,\bsLambda_2, \cdots,\bsLambda_d]$\;
\textbf{Repeat}\\
\tcpp{Update $\bstheta$ using Theorem \ref{the:Ubar}}
$\bfF \bfr = \bfF (\bfH^H\bfS^H\bfy + \mu\bfW(\bfu^{k} - \bfd^{k})$\;
$\bfx_f = \left(\EigSq^H \left( \mu d\bfI_{N_l} + \EigSq\EigSq^H \right)^{-1}\EigSq\right)\bfF\bfr$ \;
$\bfx^{k+1} = \frac{1}{\mu}\bfF\bfr-\frac{1}{\mu}\bfx_f$ \;
\tcpp{Update $\bfu $ using the soft-thresholding operator}
$\bsnu = \bfW^H \bfx^{k+1} + \bfd^k$ \;
$\bfu^{k+1} = \max\lbrace0,|\bsnu| - \tau/\mu\rbrace$; \tcpp{$|\bsnu| \triangleq [|\bsnu_1|,\cdots,|\bsnu_M|]^T \in \mathbb{R}^{M\times1}$}
\tcpp{Update the dual variables $\bfd$}
$\bfd^{k+1}= \bfd^{k} +(\bfW^H \bfx^{k+1} - \bfu^{k+1})$\;
$k = k+1$\;
until stopping criterion is satisfied\;
\KwOut{$\hat{\bfx} = \bfx^{k}$.}
\caption[The LOF caption]{FSR with $\ell_1$-norm regularization in the wavelet domain}
\DecMargin{1em}
\end{algorithm}

\bibliographystyle{IEEEtran}
\bibliography{strings_all_ref,bibFSR}
\end{document}